\PassOptionsToPackage{table}{xcolor}
\documentclass[a4paper,fleqn]{cas-dc}

\usepackage{amsmath}
\usepackage{amssymb}
\usepackage[numbers,sort&compress]{natbib}
\usepackage{verbatim}
\usepackage{cases}
\allowdisplaybreaks[4]
\usepackage{enumerate}
\usepackage{booktabs}
\usepackage{balance}
\usepackage{mathbbol}
\usepackage{multirow}
\usepackage{adjustbox}
\usepackage{subcaption}
\usepackage{algorithm}
\usepackage{algorithmicx}
\usepackage{algpseudocode}
\usepackage{mathrsfs}
\usepackage{threeparttable}
\usepackage{makecell}
\usepackage{enumitem}

\newtheorem{definition}{Definition}
\newtheorem{proof}{Proof}
\newtheorem{theorem}{Theorem}

\newtheorem{proposition}{Proposition}
\newtheorem{remark}{Remark}

\newtheorem{corollary}{Corollary}

\newtheorem{assumption}{Assumption}

\def\g{{\bf g}}

\def\0{{\bf 0}}
\def\1{{\bf 1}}

\begin{document}
\let\WriteBookmarks\relax
\def\floatpagepagefraction{1}
\def\textpagefraction{.001}

\shorttitle{Sim2Real-AD: A Modular Sim-to-Real Framework}
\shortauthors{Z. Huang et al.}

\title[mode=title]{Sim2Real-AD: A Modular Sim-to-Real Framework for Deploying VLM-Guided Reinforcement Learning in Real-World Autonomous Driving}

\author[1]{Zilin Huang}
\author[1]{Zhengyang Wan}
\author[1]{Zihao Sheng}
\author[1]{Boyue Wang}
\author[1]{Junwei You}
\author[1]{Sikai Chen}
\cormark[1]
\ead{sikai.chen@wisc.edu}
\affiliation[1]{organization={Department of Civil and Environmental Engineering, University of Wisconsin-Madison},
    city={Madison},
    state={WI},
    postcode={53706},
    country={USA}}
\cortext[cor1]{Corresponding author}

\begin{abstract}
Autonomous driving is central to intelligent and increasingly electrified transportation. Vision-language-model (VLM)-guided reinforcement learning (RL) has recently attracted significant attention for it, replacing brittle hand-crafted rewards with semantically grounded signals; however, deploying such simulation-trained policies on real vehicles remains a fundamental challenge, because they rely on simulator-native observations and simulator-coupled action semantics with no counterpart on physical hardware. We identify a general principle: the simulation-to-reality gap decomposes into two largely orthogonal axes, a sensing-and-dynamics domain gap and a task-and-geometry gap, the former closable without real-world policy training by re-projecting real perception and control onto the policy's training manifold. We formalize this as a transfer guarantee that bounds the deployment gap by three independently controllable error terms, and instantiate it as \textbf{Sim2Real-AD}, which combines a Geometric Observation Bridge, a Physics-Aware Action Mapping, a Two-Phase Progressive Training curriculum, and a Real-time Deployment Pipeline. As a proof of concept, a CARLA-trained VLM-guided RL policy is transferred zero-shot to a full-scale battery-electric Ford E-Transit van in Madison, WI, USA, and drives across car-following, obstacle-avoidance, and stop-sign scenarios using no real-world training data. To our knowledge, this is among the first zero-shot closed-loop deployments of a CARLA-trained VLM-guided RL policy on a full-scale real vehicle, and the decomposition offers a principled, broadly applicable route for moving simulation-trained, foundation-model-guided policies into the physical world, supporting energy-efficient intelligent driving on electrified transportation platforms.
The demo video, code, and model checkpoint are
available at:
\href{https://zilin-huang.github.io/Sim2Real-AD-website/}
{\textcolor{magenta}{https://zilin-huang.github.io/Sim2Real-AD-website/}}.
\end{abstract}

\begin{keywords}
Autonomous Driving \sep Foundation Models \sep Vision-Language Models \sep Reinforcement Learning \sep Sim-to-Real Transfer
\end{keywords}

\maketitle
\section{Introduction}
Foundation models, including large language and vision-language models (VLMs), have achieved striking success across perception, reasoning, and generation, and are now reshaping how control policies are built: they supply semantic supervision that lets agents learn complex, safety-critical behaviors that were previously hard to specify by hand \citep{wang2025alpamayo}. The overwhelming majority of these advances, however, are demonstrated in \emph{simulation}, where observations are clean and privileged and actions carry idealized semantics. The broader ambition of \emph{Physical AI}, in which foundation-model-trained policies actually perceive and act in the real world, requires crossing from simulation to physical hardware, where the simulator-native observations and simulator-coupled action semantics these policies were trained with simply do not exist. Closing this simulation-to-physical gap, without retraining in the physical world, is a central open problem on the path toward embodied, physically deployed intelligence.

Autonomous driving is a flagship instance of this challenge and a cornerstone of intelligent, increasingly electrified transportation~\citep{jia2025lane}: reliable operation in open-world traffic remains difficult because real roads contain long-tail events, uncertain human behavior, and continuously changing conditions \citep{tang2026hermes,xu2025wod,qu2025metassc}, and dependable deployment beyond restricted operational design domains is still unresolved. Because collecting such experience on real vehicles is costly and unsafe, high-fidelity simulation has become a cornerstone of modern
autonomous driving development. Platforms such as CARLA provide scalable and
risk-free environments for training and evaluation under diverse road layouts, 
weather conditions, and traffic configurations \citep{dosovitskiy2017carla}. 
Closed-loop benchmarks such as NoCrash \citep{codevilla2019exploring}, the CARLA
Leaderboard \citep{carla_leaderboard}, and Bench2Drive \citep{jia2024bench2drive}
measure progress in increasingly realistic settings. Within this
simulation-centered paradigm, two major learning strategies have emerged. 
Imitation learning (IL) methods, such as UniAD \citep{hu2023planning}, learn policies by mimicking expert
demonstrations and achieve strong benchmark performance, but their
behavior is fundamentally bounded by the coverage of the demonstration
data and degrades in rare or unseen situations \citep{codevilla2019exploring}. Reinforcement 
learning (RL), in contrast, optimizes policies through interaction and 
reward-driven exploration, offering a pathway to discover behaviors beyond 
those explicitly present in human data \citep{kiran2021deep}. A particularly promising line integrates VLMs into RL reward design, replacing brittle hand-crafted rewards with semantically grounded signals, as in VLM-RL \citep{huang2025vlm} and DriveVLM-RL \citep{huang2026drivevlmrl}. Like the broader VLM-guided RL literature, however, these methods are validated entirely within CARLA, leaving their transfer from simulator-native observations and simulator-coupled control semantics to full-scale physical vehicles an open problem.

This deployment difficulty is fundamentally a simulation-to-reality (sim-to-real) gap, arising because simulator
training and real-world deployment differ in both observation
and control semantics \citep{salvato2021crossing,daza2023sim,li2024platform}.
In this work, we focus on two dominant and practically decisive components of 
this gap, as illustrated in Fig.~\ref{fig:intro_overview}. The first is the 
\textit{observation gap}. Many RL driving policies in CARLA rely on 
privileged or simulator-native observations, such as ground-truth 
bird's-eye-view (BEV) semantic masks, that are clean, structured, and 
spatially complete \citep{dosovitskiy2017carla}. In contrast, a real vehicle 
must infer scene structure from camera inputs that are noisy, partially 
occluded, and restricted by field of view, creating a substantial cross-domain 
mismatch \citep{tobin2017domain,zhu2017unpaired,ganin2016domain}. The second 
is the \textit{dynamics gap}. In simulation, policy outputs are often 
interpreted directly as low-level control commands, such as steering and 
throttle/brake. On a physical platform, however, the same commands can 
induce substantially different vehicle responses because of differences in 
wheelbase, steering ratio, actuator latency, tire-road interaction, and 
longitudinal dynamics \citep{daza2023sim,salvato2021crossing,li2024platform}. 
These two gaps compound each other: when a policy trained on perfect BEV 
receives the noisy, limited-coverage observation produced by a real monocular 
camera, and its control outputs are simultaneously misinterpreted by a vehicle 
with different physical dynamics, performance degrades catastrophically. 


\begin{figure*}
    \centering
    \includegraphics[width=0.99\linewidth]{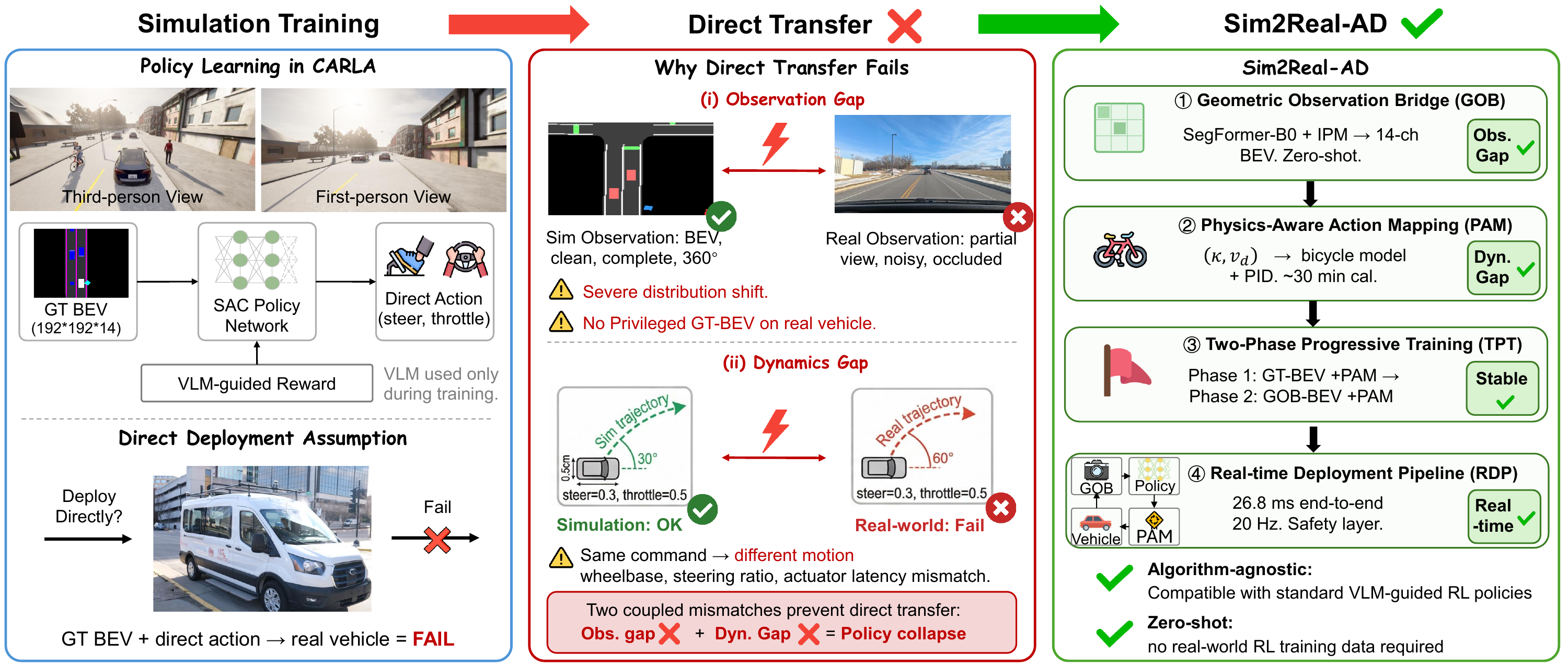}
    \caption{Overview of the sim-to-real challenge and the proposed Sim2Real-AD framework. Direct transfer fails because of the coupled observation and dynamics gaps, while Sim2Real-AD addresses them through GOB, PAM, TPT, and RDP.}
    \label{fig:intro_overview}
\end{figure*}

Existing sim-to-real methods only partially address this setting. Most tackle either the observation gap \emph{or} the dynamics gap in isolation, and the few that consider both rely on learned black-box alignment modules that are hard to debug, require substantial real-world data, and must be retrained when the deployment vehicle changes \citep{tobin2017domain,zhu2017unpaired,daza2023sim,salvato2021crossing,li2024platform}. To address this challenge, we start from a simple principle: the sim-to-real problem decomposes into two largely orthogonal axes, a sensing-and-dynamics domain gap that subsumes both the observation and dynamics gaps above and a task-and-geometry gap arising from differences in route topology and scene complexity, and the former can be closed \emph{without any real-world policy training} by re-projecting real perception and control onto the manifold on which the policy was trained. We formalize this as a transfer guarantee (Theorem~\ref{thm:main_informal}) that bounds the deployment performance gap by three independently controllable error terms, and instantiate it as \textbf{Sim2Real-AD}, a modular sim-to-real deployment framework for VLM-guided RL autonomous driving.
Concretely, Sim2Real-AD comprises four modules: a \textit{Geometric Observation Bridge (GOB)} that turns monocular front-view images into simulator-compatible BEV observations, a \textit{Physics-Aware Action Mapping (PAM)} that recasts policy outputs as platform-agnostic physical commands, a \textit{Two-Phase Progressive Training (TPT)} curriculum that adapts the action and observation interfaces in sequence rather than simultaneously, and a \textit{Real-time Deployment Pipeline (RDP)} that integrates them into a closed-loop real-vehicle system. We detail each module in Section~\ref{sec:framework}.

We validate Sim2Real-AD through extensive simulation experiments in CARLA and zero-shot closed-loop deployment on a full-scale battery-electric Ford E-Transit van in Madison, WI, USA, using only lightweight platform calibration and no real-world RL training data. As a proof-of-concept case study, the transferred policy drives across car-following, obstacle-avoidance, and stop-sign scenarios. \textbf{To the best of our knowledge, this study is among the first to demonstrate zero-shot closed-loop deployment of a CARLA-trained VLM-guided RL policy on a full-scale real vehicle without any real-world RL training data.}

The main contributions of this paper are summarized as follows:
\begin{itemize}
    \item \textbf{A transferable principle with a transfer guarantee.} We show that the sim-to-real gap decomposes into two largely orthogonal axes, a sensing-and-dynamics domain gap and a task-and-geometry gap, the former closable \emph{without any real-world policy training} by re-projecting real perception and control onto the policy's training manifold. We formalize this as a transfer guarantee bounding the deployment gap by three independently controllable error terms, to our knowledge the first for a foundation-model-guided RL policy on a physical vehicle and not specific to driving.

    \item Building on this principle, we instantiate it as \textbf{Sim2Real-AD}, a modular and reward-agnostic sim-to-real deployment framework for VLM-guided RL autonomous driving that explicitly decomposes transfer into an observation-space bridge and an action-space bridge, enabling real-world deployment without real-world policy training or learning-based domain adaptation. The framework is broadly compatible with RL-based driving policies that use structured simulator-native observations.
    
    \item We introduce a geometric observation bridge that 
    transforms monocular camera images into a unified BEV semantic 
    representation using pre-trained segmentation and inverse perspective 
    mapping, substantially reducing cross-domain observation discrepancy in 
    a fully interpretable, training-free manner.
    
    \item We design a physics-aware action mapping together 
    with a two-phase progressive training strategy, which 
    decouple policy learning from platform-specific control semantics and 
    support zero-shot policy transfer through lightweight platform calibration 
    requiring only $\sim$30 minutes and no real-world driving data.
\end{itemize}

The remainder of this paper is organized as follows. 
Section~\ref{sec:related} reviews related work. 
Section~\ref{sec:preliminaries} introduces the necessary background and 
problem formulation. Section~\ref{sec:framework} presents the proposed 
Sim2Real-AD framework in detail. Section~\ref{sec:experiments} reports 
simulation studies in CARLA and zero-shot real-vehicle deployment results. 
Section~\ref{sec:limitations} discusses limitations, and
Section~\ref{sec:conclusion} concludes the paper and outlines future directions.

\section{Related Work}
\label{sec:related}

\subsection{Learning-based Driving Policies in Simulation}

End-to-end autonomous driving has advanced along two 
main paradigms. IL methods, such as UniAD \citep{hu2023planning},
learn by mimicking expert demonstrations and achieve
strong benchmark performance, but are bounded by
demonstration coverage and fail in
out-of-distribution scenarios \citep{codevilla2019exploring}.
RL offers a complementary path through reward-driven 
exploration \citep{sheng2024traffic,huang2024human,jiang2025alphadrive,chen2025personalized,li2025investigation}.
However, manually engineered rewards are labor-intensive
and generalize poorly \citep{delavari2025comprehensive};
VLM-guided RL instead encodes semantic goals through
encoders such as CLIP \citep{radford2021learning} to
replace hand-crafted objectives with grounded reward signals. VLM-RL \citep{huang2025vlm} first established the Contrasting Language Goal (CLG) paradigm, using CLIP-based rewards from paired positive and negative language descriptions in CARLA, while DriveMind \citep{wasif2025drivemind} and Found-RL \citep{qu2026found} explore related grounded-reward designs. DriveVLM-RL \citep{huang2026drivevlmrl} extended it with a neuroscience-inspired dual-pathway architecture whose attention-gated large VLM (LVLM) performs semantic risk reasoning, achieving state-of-the-art collision avoidance. Notably, these methods use the VLM and LVLM only during offline training and remove them at deployment, leaving a lightweight policy with no test-time inference overhead. A parallel line explores
Vision-Language-Action (VLA) models that unify
perception, reasoning, and control end-to-end \citep{renz2025simlingo,wang2025alpamayo,qian2025agentthink,zhou2025autovla}, but they face an even more acute deployment challenge, since large-model inference at test time makes real-time control infeasible without specialized decoupling \citep{jiang2025survey}.
Across both lines, the focus remains on policy learning
within simulation, and transferring such policies from
simulator-native observations and simulator-coupled
control semantics to full-scale real vehicles remains
largely unanswered. 
This reflects a broader open problem in robotics and embodied control: moving policies trained with foundation-model guidance in simulation onto physical systems where their privileged observations and simulator-coupled actions do not exist. We address it for driving with a principled, theoretically grounded deployment framework for CARLA-trained VLM-guided RL policies.

\subsection{Sim-to-Real Methods for Autonomous Driving 
and Robotics}

Domain randomization \citep{tobin2017domain} improves robustness via diverse
training perturbations, but neither applies to structured
multi-channel BEV observations nor bridges action
semantics across platforms.
Domain adaptation aligns distributions through 
image translation (CycleGAN \citep{zhu2017unpaired}) or 
adversarial feature learning (DANN \citep{ganin2016domain}), 
but requires substantial real-world data and retraining 
when the deployment vehicle changes.
Canonical representation mapping bridges the
observation gap via a shared space: learned
approaches such as Lift-Splat-Shoot \citep{philion2020lift}
and BEVFormer \citep{li2024bevformer} offer strong
camera-to-BEV perception but require large-scale
supervised training. A lightweight alternative is Inverse 
Perspective Mapping (IPM) \citep{bertozzi1998gold}, which 
projects road-plane pixels into a top-down BEV without 
training data. Combined with a pre-trained segmentation 
model \citep{xie2021segformer}, IPM yields an 
interpretable, calibration-efficient bridge transferable 
across platforms via simple camera recalibration, the 
foundation of our GOB module.
System identification and controller calibration 
\citep{daza2023sim,salvato2021crossing,li2024platform} 
address the dynamics gap by fitting vehicle models to 
real behavior, but target low-level tracking controllers 
rather than the simulator-coupled action semantics of 
end-to-end RL policies.
Curriculum-based adaptation
\citep{bengio2009curriculum,salvato2021crossing,zhao2020sim} 
reduces optimization difficulty by progressively exposing 
the policy to more realistic distributions. Our TPT 
strategy applies this principle to the dual-gap setting, 
decoupling action-space adaptation from observation-space 
adaptation. Unlike Rapid Motor Adaptation (RMA) 
\citep{kumar2021rma}, which requires online real-world 
data, TPT relies entirely on geometry-based and 
physics-based bridging.
Related efforts span digital twins, reality-gap
modeling, deployment-oriented pipelines
\citep{voogd2023reinforcement,daza2023sim,li2024platform,huang2025sky},
and BEV generation and segmentation systems
\citep{jun2025comparative}. In contrast, our framework jointly addresses both gaps through interpretable modules and a transfer-error analysis linking module-level imperfections to sim-to-real degradation. To our knowledge, no prior sim-to-real driving framework couples such training-free bridging with a formal guarantee on the resulting transfer gap.

\subsection{Real-World Deployment of Simulation-Trained 
Driving Policies}

Although simulation-based learning has advanced rapidly,
real-world deployment remains limited. Most
simulator-trained policies are tightly coupled to their
training environment: their observations rely on
simulator-native representations (e.g., privileged
BEV masks) unavailable on real vehicles, and their action
semantics are calibrated to simulator dynamics. Without
mechanisms to bridge both couplings, direct deployment
fails even when the policy performs well in simulation.
Many studies validate only in CARLA or evaluate 
deployment-oriented ideas through offline replay, 
shadow-mode analysis, or scaled platforms rather than 
full-scale closed-loop operation 
\citep{delavari2025comprehensive,voogd2023reinforcement,liu2026learning}. 
Prior sim-to-real work often targets specialized tasks 
such as drifting or parking \citep{toth2024sim} or 
requires additional real-world adaptation after 
simulation training \citep{lin2025model}. World-model 
and generative-simulation approaches  
\citep{you2024bench2drive,ji2026world} strengthen training and
evaluation infrastructure but do not demonstrate direct 
deployment of simulator-trained RL policies on physical 
vehicles.
Foundation model-based methods have shown strong 
simulation performance, but zero-shot closed-loop 
deployment on full-scale real vehicles remains very 
limited \citep{li2024platform,voogd2023reinforcement}. 
The challenge is particularly acute for VLM-guided RL 
\citep{wasif2025drivemind,huang2025vlm,huang2026drivevlmrl}: 
these policies depend on simulator-privileged BEV 
observations no real sensor can replicate, and their 
action spaces are implicitly calibrated to simulator 
dynamics. Yet no work starts from a CARLA-trained VLM-guided RL policy and deploys it zero-shot on a full-scale vehicle, nor provides a formal guarantee bounding the resulting transfer gap. We close this gap by combining GOB, PAM, TPT, and RDP into a complete pipeline, supported by a decomposition principle and transfer guarantee that, beyond driving, help characterize when simulation-trained, foundation-model-guided policies can be deployed on physical systems without real-world training.


\section{Preliminaries and Problem Formulation}
\label{sec:preliminaries}

\subsection{Preliminaries}

\textbf{Partially Observable Markov Decision Process.}
In a closed-loop simulator, autonomous driving is 
formulated as a Partially Observable Markov Decision 
Process (POMDP) $\mathcal{M}_{\mathrm{sim}} = 
(\mathcal{S}, \mathcal{A}, \mathcal{T}, \mathcal{O}, 
\mathcal{R}, \phi, \gamma, d_0)$, where $\mathcal{S}$ 
is the state space, $\mathcal{A}$ is the action space, 
$\mathcal{T}(s' \mid s, a)$ is the transition dynamics, 
$\mathcal{O}$ is the observation space, 
$\mathcal{R}(s, a)$ is the reward function, 
$\phi(o \mid s)$ is the observation emission function, 
$\gamma \in (0,1)$ is the discount factor, and $d_0$ is 
the initial state distribution. Because the agent does 
not observe the full state $s$ directly, it receives 
an observation $o_t \in \mathcal{O}$ at each timestep 
and selects an action $a_t \in \mathcal{A}$ according 
to its policy $\pi_\theta(a_t \mid o_t)$. The policy 
is trained to maximize the expected discounted return 
\citep{sutton1998reinforcement,kiran2021deep,huang2025pe}:
\begin{equation}
J(\pi_\theta) = \mathbb{E}_{\pi_\theta}
\!\left[\sum_{t=0}^{T} \gamma^t r_t \right].
\label{eq:rl_objective}
\end{equation}

\textbf{VLM-Guided RL Reward.}
In recent VLM-guided RL frameworks 
\citep{huang2025vlm,wasif2025drivemind}, the observation 
$o_t$ is a structured simulator-native representation 
comprising a BEV semantic tensor, route information, 
and vehicle states. The reward function combines 
conventional driving objectives with semantic 
supervision derived from VLMs or CLIP-style encoders. 
Following the formulation of DriveVLM-RL 
\citep{huang2026drivevlmrl}, the reward can be 
expressed in the general form:
\begin{equation}
r_t = \mathcal{F}(r_t^{\mathrm{task}},\, 
r_t^{\mathrm{sem}},\, s_t),
\label{eq:reward_combined}
\end{equation}
where $r_t^{\mathrm{task}}$ is a sparse task reward 
(e.g., collision penalty), $r_t^{\mathrm{sem}}$ is a 
dense semantic reward derived from VLM-based 
visual-language alignment, and $s_t$ denotes vehicle 
state information. The function $\mathcal{F}$ 
represents the reward synthesis mechanism, which in 
DriveVLM-RL takes a multiplicative hierarchical form 
that integrates both components with vehicle dynamics 
constraints. Critically, all VLM components used to 
compute $r_t^{\mathrm{sem}}$ operate 
\emph{exclusively during training}: once training is 
complete, the reward computation is discarded entirely, 
and the deployed policy $\pi_\theta$ executes as a
lightweight neural network with no VLM inference at
test time.

\subsection{Problem Formulation}
\label{sec:problem}
Let $\pi_\theta^{\mathrm{sim}}$ denote a driving policy 
trained entirely in CARLA under the VLM-guided RL 
setting above, where the observation space 
$\mathcal{O}^{\mathrm{sim}}$ consists of 
simulator-native BEV representations and the action 
space $\mathcal{A}^{\mathrm{sim}}$ is calibrated to 
simulator dynamics. Let $\mathcal{O}^{\mathrm{real}}$ 
and $\mathcal{T}^{\mathrm{real}}$ denote the 
observation space and transition dynamics of the 
target real vehicle. Direct deployment of 
$\pi_\theta^{\mathrm{sim}}$ fails because two 
systematic mismatches arise:
\begin{equation}
\mathcal{O}^{\mathrm{sim}} \not\approx 
\mathcal{O}^{\mathrm{real}},
\qquad
\mathcal{T}^{\mathrm{sim}} \not\approx 
\mathcal{T}^{\mathrm{real}}.
\end{equation}

The \emph{observation gap} arises because 
$\pi_\theta^{\mathrm{sim}}$ is trained on privileged 
simulator BEV observations 
$\hat{o}_t^{\mathrm{sim}} \in 
\hat{\mathcal{O}}^{\mathrm{sim}} \subset 
\mathbb{R}^{192 \times 192 \times 14}$ that are 
spatially complete and semantically clean, whereas 
the real platform can only provide raw monocular 
front-view images from which BEV-like representations 
must be reconstructed with inherent noise, limited 
field of view, and segmentation imperfections. The 
\emph{dynamics gap} arises because even the same 
nominal control command induces substantially different 
motion in simulation and on a real vehicle due to 
differences in wheelbase, steering ratio, actuator 
latency, tire-road interaction, and low-level control 
response.

Importantly, the two gaps interact: a policy receiving
unfamiliar observations will generate unreliable action
outputs, which are then further distorted by dynamics
it was not trained to account for. As we show in
Section~\ref{sec:ablation}, naively transferring the
original policy under the real-vehicle observation
without any bridging falls to a lower bound well below
the deployable pipeline, motivating a principled
sequential curriculum that bridges the action and
observation gaps in turn rather than confronting their
combined distributional shift at once.

The sim-to-real deployment problem is therefore to 
construct an \emph{observation bridge} 
$\mathcal{G}: \mathcal{O}^{\mathrm{real}} \rightarrow 
\hat{\mathcal{O}}^{\mathrm{sim}}$ and an 
\emph{action bridge} 
$\mathcal{M}: \hat{\mathcal{A}} \rightarrow 
\mathcal{A}^{\mathrm{real}}$, where 
$\hat{\mathcal{O}}^{\mathrm{sim}} \subset 
\mathbb{R}^{192 \times 192 \times 14}$ denotes the 
simulator-compatible BEV observation space and 
$\hat{\mathcal{A}}$ is a platform-agnostic physical 
action space, such that the composed policy:
\begin{equation}
\pi^{\mathrm{real}}(a_t^{\mathrm{real}} \mid 
o_t^{\mathrm{real}})
= \mathcal{M}\!\left(
\pi_\theta^{\mathrm{sim}}\!\left(
\hat{o}_t^{\mathrm{sim}},\, s_t,\, w_t
\right)\right),
\label{eq:composed_policy}
\end{equation}
where $\hat{o}_t^{\mathrm{sim}} = 
\mathcal{G}(o_t^{\mathrm{real}}) \in 
\hat{\mathcal{O}}^{\mathrm{sim}}$ denotes the 
simulator-compatible BEV observation produced by the 
GOB module, $s_t$ denotes the ego vehicle state 
(speed, steering, throttle), and $w_t$ denotes the 
future waypoint sequence, both available from 
on-board sensors and a GPS route provider at 
deployment time. The goal is for this composed policy 
to achieve safe and stable closed-loop driving on the 
real vehicle under zero-shot transfer, using only 
lightweight calibration and without any real-world 
RL training data.


\section{Framework: Sim2Real-AD}
\label{sec:framework}

\subsection{Overview}
\label{sec:overview}

The role of Sim2Real-AD is not to redesign the 
VLM-guided RL algorithm itself, but to make such a simulator-trained policy transferable to the real world. Operationally, GOB and PAM close the sensing-and-dynamics domain gap by re-projecting real observations and control onto the manifold on which the policy was trained, while TPT controls the residual distribution shift, instantiating the decomposition formalized in Section~\ref{sec:theory_informal}. In this work, we instantiate the framework 
using DriveVLM-RL~\citep{huang2026drivevlmrl}, a 
representative and state-of-the-art VLM-guided RL 
framework for safe autonomous driving in CARLA. In DriveVLM-RL, the policy is trained on
simulator-native observations (BEV semantic masks,
route information, and vehicle states) with VLM-derived
semantic rewards. This yields effective, safety-oriented
policies in simulation, but the result is not directly
deployable on a physical vehicle because both its
observation interface and its action semantics are tied
to the simulator. Crucially, however, DriveVLM-RL invokes its VLM and LVLM components \emph{only} during offline training and removes them at deployment, so the deployed policy is a lightweight network with no VLM inference at test time~\citep{huang2026drivevlmrl}. This makes DriveVLM-RL a particularly suitable deployment target: the remaining barrier is not the computational cost of foundation models, but solely the simulator-bound observation and action interfaces, which is precisely what Sim2Real-AD bridges. As illustrated in Fig.~\ref{fig:overview}, the
proposed framework consists of four main components: 
GOB, PAM, TPT, and RDP. 

The GOB reduces the observation gap by 
converting real monocular front-view images into a 
simulator-compatible BEV semantic representation 
$\hat{o}_t^{\mathrm{sim}} \in 
\hat{\mathcal{O}}^{\mathrm{sim}}$. The PAM reduces 
the dynamics gap by redefining the policy output in 
terms of platform-agnostic physical quantities and 
translating them into executable vehicle commands. 
The TPT strategy stabilizes transfer by decoupling 
action-space adaptation from observation-space 
adaptation. Finally, the RDP integrates all modules 
into a complete closed-loop system for deployment on 
a real vehicle. Together they replace DriveVLM-RL's simulator-dependent
observation and action interfaces with deployment-oriented
bridges while preserving its simulation-training advantages,
enabling zero-shot closed-loop deployment on a real vehicle
with only lightweight platform calibration and no real-world
RL training or fine-tuning. Since the four modules operate
purely on the policy's observation and action interfaces,
Sim2Real-AD remains agnostic to the reward design used during
simulator training.

\begin{figure*}[t]
    \centering
    \includegraphics[width=0.99\linewidth]{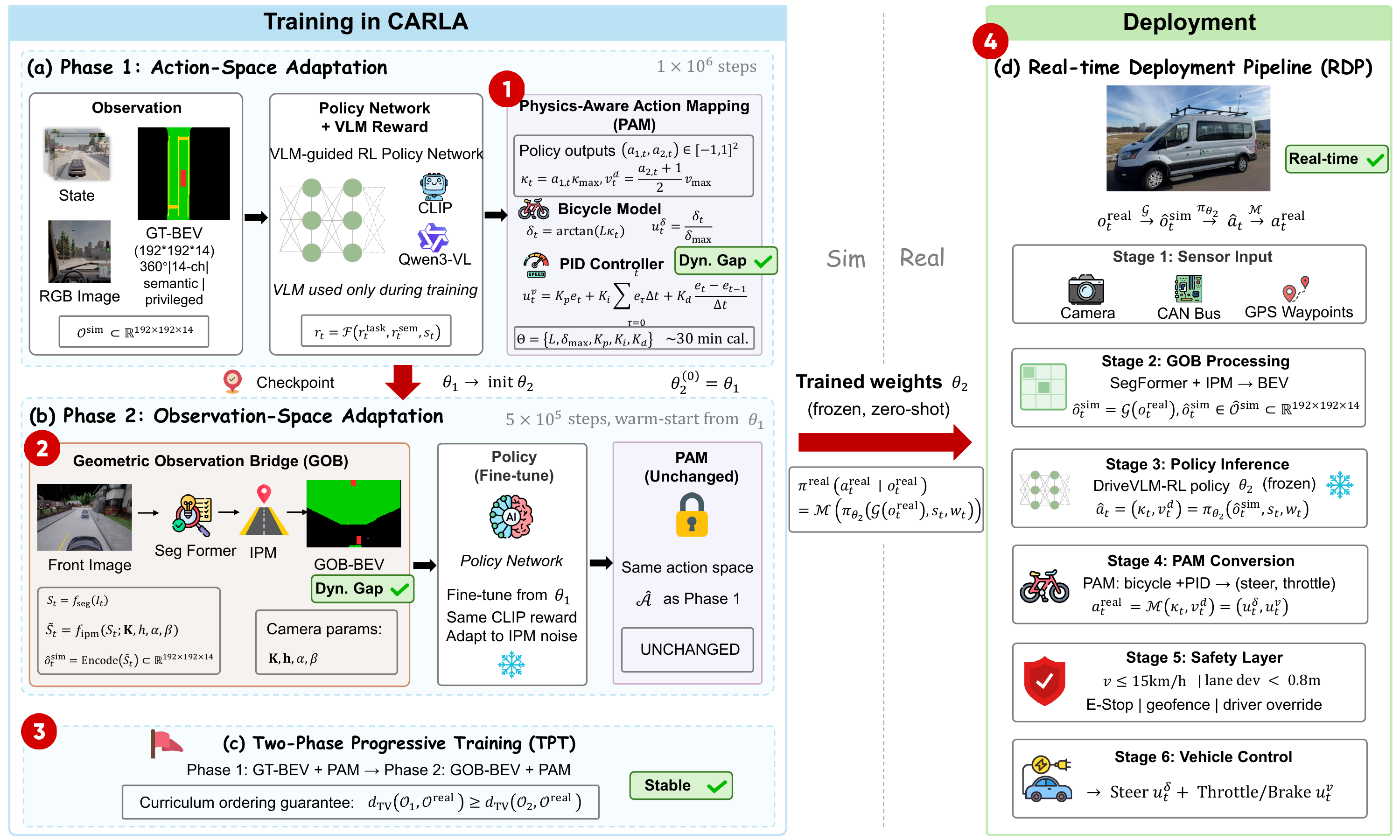}
    \caption{Overview of Sim2Real-AD. The framework bridges 
sim-to-real transfer through four components: the 
Geometric Observation Bridge (GOB), the Physics-Aware 
Action Mapping (PAM), the Two-Phase Progressive 
Training strategy (TPT), and the Real-time Deployment 
Pipeline (RDP). Instantiated here with DriveVLM-RL as the backbone, the framework is reward-agnostic and is demonstrated across multiple RL reward paradigms. } 

    \label{fig:overview}
\end{figure*}

\subsection{Geometric Observation Bridge}
\label{sec:gob}
The observation gap defined in 
Section~\ref{sec:problem} prevents direct deployment 
of $\pi_\theta^{\mathrm{sim}}$ because the policy 
expects structured BEV observations 
$\hat{o}_t^{\mathrm{sim}} \in 
\hat{\mathcal{O}}^{\mathrm{sim}}$ that are unavailable 
on a real vehicle. The GOB realizes the mapping 
$\mathcal{G}: \mathcal{O}^{\mathrm{real}} \rightarrow 
\hat{\mathcal{O}}^{\mathrm{sim}}$ by converting raw 
monocular front-view images into this 
simulator-compatible format through a two-step 
geometric pipeline. Unlike learned multi-camera BEV 
pipelines that aim to maximize perception accuracy 
under richer sensor setups 
\citep{philion2020lift,li2024bevformer,jun2025comparative}, 
our goal is to construct a lightweight 
simulator-compatible observation interface under the 
minimal monocular deployment configuration considered 
in this work.

\subsubsection{Unified BEV Representation}

Rather than letting the policy consume raw RGB images 
whose appearance differs substantially across domains, 
we process both simulated and real front-view images 
through the same deterministic camera-to-BEV pipeline 
and expose the policy only to the resulting BEV 
tensor. In this way, the policy always receives 
observations in the same spatially structured format, 
even though the underlying image source differs.

Let $I_t \in \mathbb{R}^{H \times W \times 3}$ denote 
the monocular RGB image captured at time step $t$. 
We first apply a semantic segmentation network to 
obtain a pixel-wise semantic map:
\begin{equation}
S_t = f_{\mathrm{seg}}(I_t),
\end{equation}
where $f_{\mathrm{seg}}(\cdot)$ denotes the 
segmentation model. In our implementation we use 
SegFormer-B0~\citep{xie2021segformer}, which provides 
a favorable trade-off between segmentation quality 
and inference efficiency: it achieves competitive mIoU 
while running at over 30 FPS on a single GPU, 
satisfying the real-time constraint. No fine-tuning 
on domain-specific data is performed; the model is 
applied zero-shot to both simulation and real-world 
images. The segmentation output contains 
driving-relevant semantic regions, including road 
surface, lane markings, vehicles, pedestrians, and 
traffic-related classes.

Next, we apply Inverse Perspective Mapping 
(IPM)~\citep{bertozzi1998gold} to project the 
segmented front-view image into a top-down BEV space:
\begin{equation}
\tilde{S}_t = f_{\mathrm{ipm}}(S_t;\, \mathbf{K},\, 
h,\, \alpha,\, \beta),
\label{eq:ipm}
\end{equation}
where $\mathbf{K} \in \mathbb{R}^{3\times 3}$ is the 
camera intrinsic matrix and $(h, \alpha, \beta)$ 
denote the camera mounting height, pitch, and roll. 
Under the planar-ground assumption, IPM maps image 
pixels on the road plane into a fixed BEV coordinate 
system representing a metric crop of 
$20\,\text{m} \times 20\,\text{m}$ centered on the 
ego vehicle. Camera calibration uses a standard checkerboard
procedure~\citep{zhang2000flexible} (about 15~min) and is performed
once per deployment vehicle.

\subsubsection{Multi-Channel BEV Construction}

The projected semantic map is encoded into a 
multi-channel BEV tensor:
\begin{equation}
\hat{o}_t^{\mathrm{sim}} = 
\mathrm{Encode}(\tilde{S}_t)
= \mathrm{Encode}\!\left(
    f_{\mathrm{ipm}}\!\left(
      f_{\mathrm{seg}}(I_t);\,
      \mathbf{K}, h, \alpha, \beta
    \right)
  \right),
\label{eq:gob_full}
\end{equation}
with size $192 \times 192 \times 14$, where each of 
the 14 channels corresponds to a binary occupancy 
mask for a specific semantic category (road surface, 
lane markings, vehicles, pedestrians, sidewalks, 
etc.). This 14-channel representation matches exactly 
the observation format expected by 
$\pi_\theta^{\mathrm{sim}}$, ensuring input 
compatibility without any policy modification. The 
GOB output is therefore:
\begin{equation}
\hat{o}_t^{\mathrm{sim}} = 
\mathcal{G}(o_t^{\mathrm{real}}),
\quad \hat{o}_t^{\mathrm{sim}} \in 
\hat{\mathcal{O}}^{\mathrm{sim}} \subset 
\mathbb{R}^{192 \times 192 \times 14},
\label{eq:gob_output}
\end{equation}
which is fed directly to the policy 
$\pi_\theta^{\mathrm{sim}}(a_t \mid 
\hat{o}_t^{\mathrm{sim}}, s_t, w_t)$ at each 
timestep. Because the same semantic channel layout is 
preserved across domains, the policy can continue to 
operate on the same input tensor format without any 
architectural modification. This compatibility is 
essential for transferring a simulator-trained policy 
to a real monocular perception stack. The full GOB 
pipeline is illustrated in Fig.~\ref{fig:gob}.

\begin{figure*}[t]
    \centering
    \includegraphics[width=0.99\linewidth]
    {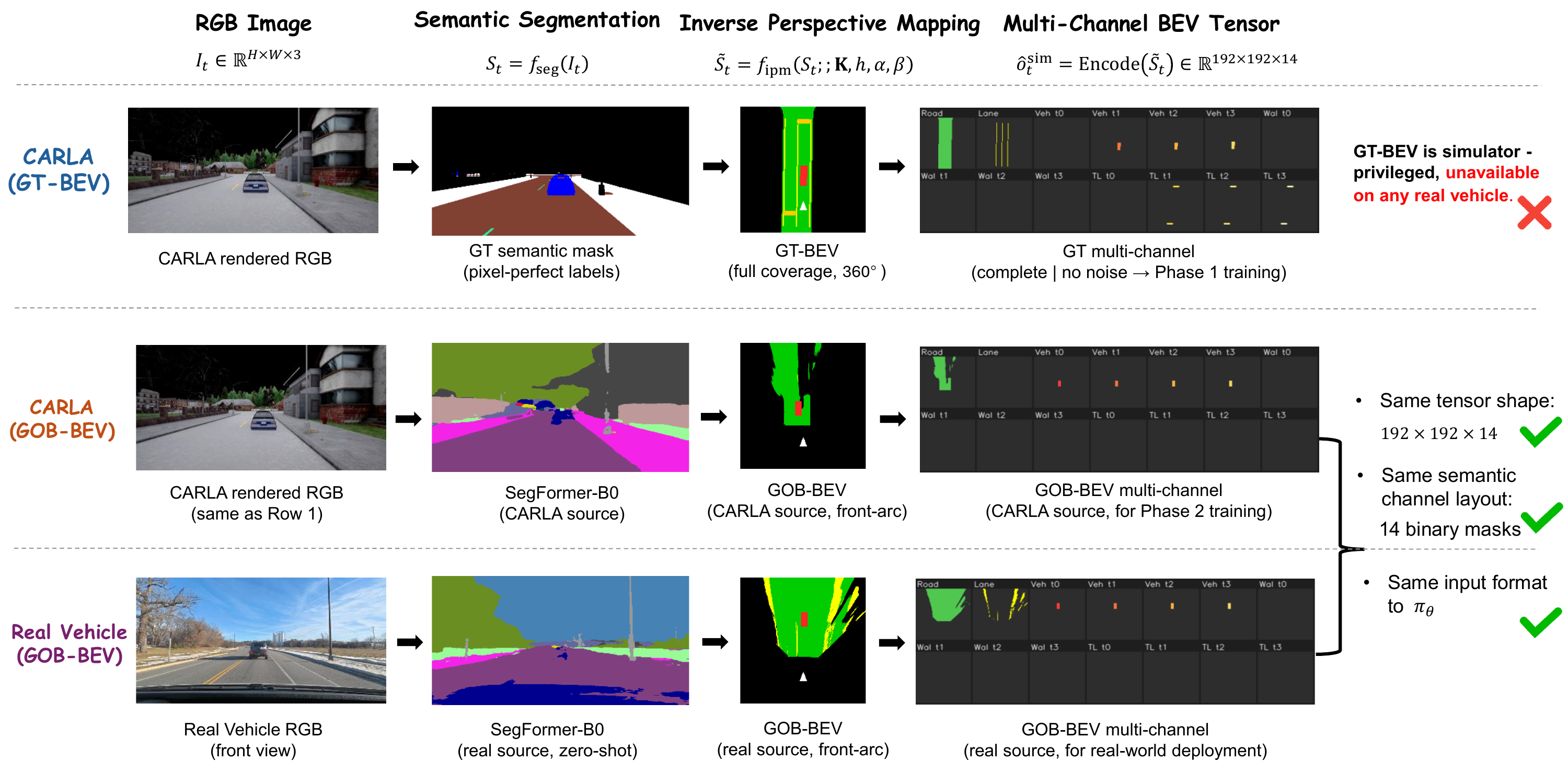}
    \caption{Geometric Observation Bridge: monocular front-view images to simulator-compatible BEV. Phase 1 uses simulator-privileged GT-BEV; Phase 2 and real-world deployment use GOB-generated BEV with the same tensor shape and channel layout.}
    \label{fig:gob}
\end{figure*}

\subsubsection{Observation Transfer Properties}
Under the lightweight deployment setting considered 
in this paper, the real vehicle is equipped only with 
a single front-view monocular camera. As a result, 
the generated $\hat{o}_t^{\mathrm{sim}}$ provides 
reliable information mainly in a front-facing region 
and degrades with distance. In particular, far-field 
road pixels are heavily compressed in the image plane 
and become more sensitive to segmentation noise after 
projection. Nevertheless, this degradation is 
acceptable because near-field structure is the most 
critical information for lane keeping, obstacle 
avoidance, and short-horizon control. More 
importantly, the observation \emph{format} remains 
unchanged: both simulator GT-BEV and IPM-generated 
$\hat{o}_t^{\mathrm{sim}}$ share the same tensor 
shape and semantic channel layout in 
$\hat{\mathcal{O}}^{\mathrm{sim}}$, differing mainly 
in coverage and quality rather than representation 
structure. This structural consistency is why 
observation adaptation can be handled through 
fine-tuning rather than retraining from scratch.

To characterize precisely how much the two BEV 
distributions differ across domains, we introduce the 
following notion of domain invariance for 
$\mathcal{G}$, which connects the perceptual gap to 
the downstream policy performance bound in 
Theorem~\ref{thm:main_informal}.

\begin{definition}[Domain-Invariant BEV 
Representation]
\label{def:bev}
A BEV encoding function 
$\mathcal{G}: \mathcal{I} \rightarrow 
\hat{\mathcal{O}}^{\mathrm{sim}}$ is 
\emph{domain-invariant} with tolerance 
$\epsilon > 0$ if:
\begin{equation}
\mathbb{E}_{\substack{I \sim \mathcal{D}^{\text{sim}}
\\ I' \sim \mathcal{D}^{\text{real}}}}
\!\left[\,
\bigl\| \mathcal{G}(I) - \mathcal{G}(I') \bigr\|_1
\;\Big|\;
\mathrm{scene}(I) = \mathrm{scene}(I')
\right] \leq \epsilon,
\end{equation}
where $\mathrm{scene}(\cdot)$ denotes the semantic 
content of an image and the expectation is conditioned 
on scene-matched pairs. A smaller $\epsilon$ means 
the two domains are more indistinguishable from the 
policy's perspective.
\end{definition}

\begin{remark}
GOB achieves a small $\epsilon$ by operating on 
semantic categories rather than raw pixel values: 
appearance differences due to lighting, texture, and 
rendering style are largely absorbed by the segmentation step 
and therefore do not directly propagate into 
$\hat{o}_t^{\mathrm{sim}}$. In practice, this tolerance 
$\epsilon$ is influenced primarily by the quality of the 
perception module, together with geometric approximation, 
camera calibration error, and projection distortion introduced 
by IPM, especially in far-field regions. Empirically, the 
segmentation component remains small on standard road 
scenes~\citep{xie2021segformer}, which helps keep the overall 
cross-domain discrepancy low. This tolerance $\epsilon$ is 
the key quantity governing the GOB contribution in 
Theorem~\ref{thm:main_informal}: reducing perception and 
projection errors directly tightens the sim-to-real 
performance bound, as formalized in 
Proposition~\ref{prop:gob_bound} (\ref{app:theory}).
\end{remark}

\subsection{Physics-Aware Action Mapping}
\label{sec:pam}

The dynamics gap defined in Section~\ref{sec:problem} 
persists even after $\mathcal{G}$ reduces the 
observation gap: the policy output 
$a_t = \pi_\theta^{\mathrm{sim}}(
\hat{o}_t^{\mathrm{sim}}, s_t, w_t)$ remains a 
simulator-calibrated command that produces different 
path curvatures on a real vehicle, 
$\kappa_{\mathrm{sim}}(a_t) \neq 
\kappa_{\mathrm{real}}(a_t)$, due to differences in 
wheelbase, steering ratio, actuator characteristics, 
and low-level control delay. PAM realizes the mapping 
$\mathcal{M}: \hat{\mathcal{A}} \rightarrow 
\mathcal{A}^{\mathrm{real}}$ by redefining the policy 
output in terms of platform-agnostic physical 
quantities rather than simulator-specific commands.

\subsubsection{Platform-Agnostic Action Space}

The key idea is to let the policy predict 
\emph{driving intent}, while leaving the final 
actuation conversion to a calibrated 
platform-dependent controller. Instead of predicting 
direct steering and throttle/brake, the policy outputs 
two normalized action variables 
$(a_{1,t}, a_{2,t}) \in [-1,1]^2$, which are mapped 
to a platform-agnostic action space:
\begin{equation}
\hat{a}_t = (\kappa_t, v_t^{d}) \in \hat{\mathcal{A}},
\end{equation}
where $\kappa_t$ denotes the desired path curvature 
and $v_t^{d}$ denotes the desired speed. The mapping 
is defined as:
\begin{equation}
\kappa_t = a_{1,t}\,\kappa_{\max}, \qquad
v_t^{d} = \frac{a_{2,t}+1}{2}\,v_{\max},
\end{equation}
where $\kappa_{\max}$ and $v_{\max}$ are predefined 
platform-independent limits. These quantities encode 
geometric and kinematic driving intent more directly 
than raw low-level commands, and thus remain more 
stable across platforms. In this formulation, the 
policy is responsible for deciding \emph{how sharply 
the vehicle should turn} and \emph{how fast it should 
move}, while the platform-specific controller converts 
this intent into executable actuation.

\subsubsection{Low-Level Control Conversion}

To convert the desired curvature into a steering 
command, we use a kinematic bicycle 
model~\citep{rajamani2006vehicle}. Let $L$ denote the 
wheelbase of the target vehicle. The desired front 
wheel steering angle is computed as:
\begin{equation}
\delta_t = \arctan(L \kappa_t),
\end{equation}
and the corresponding normalized steering command is:
\begin{equation}
u_t^{\delta} = \frac{\delta_t}{\delta_{\max}},
\end{equation}
where $\delta_{\max}$ denotes the maximum steering 
angle of the target vehicle. The same curvature 
command can thus be interpreted consistently across 
different platforms through lightweight calibration 
rather than policy retraining.

For longitudinal control, the desired speed $v_t^{d}$ 
is tracked by a low-level PID 
controller~\citep{astrom2006advanced}. The speed 
tracking error is:
\begin{equation}
e_t = v_t^{d} - v_t,
\end{equation}
and the corresponding longitudinal command is:
\begin{equation}
u_t^{v} = K_p e_t + K_i \sum_{\tau=0}^{t} 
e_\tau \Delta t + K_d 
\frac{e_t - e_{t-1}}{\Delta t},
\end{equation}
where $K_p$, $K_i$, $K_d$ are the controller gains 
and the integral term is clipped to avoid windup. 
The final executable vehicle command is:
\begin{equation}
a_t^{\mathrm{real}} = \mathcal{M}(\kappa_t, v_t^{d})
  = (u_t^{\delta},\, u_t^{v}).
\label{eq:pam_output}
\end{equation}

\subsubsection{Cross-Platform Calibration and Transfer Properties}

This design decouples the policy from the low-level 
actuation details of a particular vehicle platform. 
The policy no longer needs to implicitly learn a 
specific steering ratio or throttle response from the 
simulator; instead, it predicts motion intent in a 
platform-agnostic form. Deployment-time calibration 
requires adjusting only a small set of physical 
parameters:
\begin{equation}
\Theta = \{L, \delta_{\max}, K_p, K_i, K_d\},
\end{equation}
without modifying $\pi_\theta^{\mathrm{sim}}$ itself. 
In practice, $L$ and $\delta_{\max}$ are obtained 
from vehicle specifications, while the PID gains are 
calibrated through short step-response tests on the
target platform ($\approx$10~min).

The key structural reason this calibration is so 
lightweight is that $\mathcal{M}$ is not a monolithic 
mapping from simulator actions to real actuator 
commands, but a \emph{factored} one: it routes through 
a platform-agnostic physical intermediate space 
$\mathcal{P}$ that is independent of both the 
simulator and the target vehicle. We formalize this 
property below, as it is also what allows the PAM 
tracking error to be bounded and connected to the 
performance guarantee in 
Theorem~\ref{thm:main_informal}.
 
\begin{definition}[Platform-Agnostic Action Interface]
\label{def:pam}
An action mapping $\mathcal{M}: \hat{\mathcal{A}} 
\rightarrow \mathcal{A}^{\mathrm{real}}$ is 
\emph{platform-agnostic} if it factors through a 
physical intermediate space $\mathcal{P} = \mathbb{R}^2$ 
(target curvature $\kappa$ and desired speed $v^d$):
\begin{equation}
\mathcal{M} = \mathcal{M}_{\text{real}} \circ 
\mathcal{M}_{\text{sim}}^{-1},
\end{equation}
where $\mathcal{M}_{\text{sim}}^{-1}: \hat{\mathcal{A}} 
\rightarrow \mathcal{P}$ depends only on the simulator 
vehicle's kinematic parameters and 
$\mathcal{M}_{\text{real}}: \mathcal{P} \rightarrow 
\mathcal{A}^{\mathrm{real}}$ depends only on the real 
vehicle's kinematic parameters and PID gains $\Theta$.
\end{definition}

\begin{remark}[Why platform-agnostic actions improve 
transfer]
\label{rem:action_gap}
The factored structure of Definition~\ref{def:pam} 
has two direct consequences. First, it enables 
lightweight cross-platform deployment: to transfer to 
a new vehicle, only $\mathcal{M}_{\text{real}}$ needs 
to be updated by recalibrating $\Theta$, while 
$\mathcal{M}_{\text{sim}}^{-1}$ and the policy 
$\pi_\theta^{\mathrm{sim}}$ remain entirely unchanged. 
Second, it makes the dynamics gap quantifiable: 
because $\mathcal{P}$ consists of physical quantities 
($\kappa$, $v^d$) with well-defined tracking dynamics, 
the execution error introduced by imperfect PID 
control can be bounded analytically. Specifically, 
the lateral position error accumulated over a control 
horizon of $T$ steps is bounded by
$v_{\max}^2 T^2 \epsilon_{\mathrm{pid}}/2$,
where $\epsilon_{\mathrm{pid}}$ is the per-step 
curvature tracking error 
(Assumption~\ref{ass:pid}). This bound is the 
quantity that controls the PAM error term 
$C_2\,\epsilon_{\mathrm{pid}}$ in 
Theorem~\ref{thm:main_informal}, and is formalized 
in Proposition~\ref{prop:pam_tracking} 
(\ref{app:theory}).
\end{remark}

\subsection{Two-Phase Progressive Training}
\label{sec:tpt}

In our preliminary implementation, we initially 
attempted to introduce both changes at once, namely 
replacing simulator ground-truth BEV observations 
with IPM-generated BEV observations while 
simultaneously replacing simulator-coupled control 
outputs with physics-aware actions. However, this 
one-stage adaptation strategy was found to be 
unstable. A likely reason is that the policy must 
simultaneously adapt to two different forms of 
distribution shift: degraded perceptual inputs on 
the observation side and a newly defined action 
semantics on the control side. This substantially 
increases optimization difficulty and often leads 
to slow convergence or unstable learning. We 
therefore adopt a TPT strategy that decomposes 
sim-to-real adaptation into two sequential stages, 
as detailed in Algorithm~\ref{alg:tpt}.

\subsubsection{Phase~1: Action-Space Adaptation}

In the first phase, the policy is trained from 
scratch in CARLA for $1 \times 10^6$ steps using 
simulator ground-truth BEV observations 
$\hat{o}_t^{\mathrm{sim}} \in 
\hat{\mathcal{O}}^{\mathrm{sim}}$, while the 
original simulator-coupled action interface is 
replaced by the platform-agnostic action space 
$\hat{\mathcal{A}}$. Formally, the policy predicts:
\begin{equation}
\hat{a}_t = (\kappa_t, v_t^d) \in \hat{\mathcal{A}},
\end{equation}
while receiving clean and spatially complete 
simulator BEV inputs. The VLM-guided reward signal 
Eq.~\eqref{eq:reward_combined} provides semantic 
supervision throughout. Since the observation 
structure remains unchanged, the policy can focus 
entirely on learning how curvature and desired speed 
affect lane keeping, route following, and obstacle 
avoidance. Phase~1 thus isolates the dynamics-side 
transfer problem and resolves it before observation 
adaptation begins.

\subsubsection{Phase~2: Observation-Space Adaptation}

In the second phase, the action space $\hat{\mathcal{A}}$ 
is kept unchanged, but the observation source is replaced by the GOB-generated BEV produced from simulator front-view RGB images. That is, the policy now receives IPM-generated observations 
$\hat{o}_t^{\mathrm{sim}} = 
\mathcal{G}(I_t^{\mathrm{sim}}) \in 
\hat{\mathcal{O}}^{\mathrm{sim}}$ rather than 
privileged ground-truth BEV, for $5 \times 10^5$ 
additional steps. The same VLM-guided reward signal 
Eq.~\eqref{eq:reward_combined} continues to supervise 
training, ensuring that the VLM-guided reward 
structure is maintained under the new observation 
interface.

Let $\pi_{\theta_1}$ denote the policy learned in 
Phase~1. Phase~2 training is initialized from the 
Phase~1 checkpoint:
\begin{equation}
\theta_2^{(0)} = \theta_1,
\end{equation}
and further optimized under the observation 
distribution $\mathcal{O}_2$ induced by $\mathcal{G}$. 
This progressive initialization allows the policy to 
retain already-learned driving behavior and action 
semantics while specializing to the noisier and more 
limited observation distribution that better matches 
real deployment.

\subsubsection{Training Rationale and Deployment Alignment}

By separating the two sources of transfer difficulty, TPT turns sim-to-real adaptation into a curriculum-like process
\citep{bengio2009curriculum,salvato2021crossing,
zhao2020sim}: the IPM-generated BEV distribution $\mathcal{O}_2$ used in Phase 2 is designed to be closer to the real deployment distribution $\mathcal{O}^{\text{real}}$ than the ground-truth BEV distribution $\mathcal{O}_1$ used in Phase 1. We formalize this distributional 
ordering below, as it is precisely what determines 
the TPT residual term $C_3\,\delta/(1-\gamma)^2$ in 
Theorem~\ref{thm:main_informal}.

\begin{definition}[Progressive Observation Curriculum]
\label{def:tpt}
A two-phase training schedule 
$(T_1, T_2, \mathcal{O}_1, \mathcal{O}_2)$ is a 
\emph{progressive observation curriculum} if 
$T_1 + T_2 = T$ (total training steps) and the 
observation distributions satisfy:
\begin{equation}
d_{\mathrm{TV}}\!\left(\mathcal{O}_1,\; 
\mathcal{O}^{\text{real}}\right)
\;\geq\;
d_{\mathrm{TV}}\!\left(\mathcal{O}_2,\; 
\mathcal{O}^{\text{real}}\right),
\end{equation}
where $d_{\mathrm{TV}}$ denotes total variation 
distance, $\mathcal{O}_1$ is the ground-truth BEV 
distribution (Phase~1), and $\mathcal{O}_2$ is the 
IPM-generated BEV distribution (Phase~2). In other 
words, each phase exposes the policy to observations 
that are no further from real deployment than the 
previous phase.
\end{definition}

\begin{remark}[Motivation for progressive training]
\label{rem:progressive_training}
TPT satisfies Definition~\ref{def:tpt} by 
construction: ground-truth BEV ($\mathcal{O}_1$) is 
further from the real distribution than IPM-generated 
BEV ($\mathcal{O}_2$), because IPM operates on real 
camera geometry and already introduces the same class 
of projection artifacts and limited field of view 
present at deployment, whereas ground-truth BEV does 
not. This means Phase~2 reduces the residual 
distribution gap 
$d_{\mathrm{TV}}(\mathcal{O}_2, \mathcal{O}^{\text{real}})$ 
relative to what it would be after Phase~1 alone. 
The significance of this reduction is quantified by 
Theorem~\ref{thm:tpt_bound} 
(\ref{app:theory}):
\begin{equation*}
J(\pi_\theta^{\mathrm{sim}}, \mathcal{O}^{\text{real}}) 
\geq J(\pi_\theta^{\mathrm{sim}}, \mathcal{O}_2) -
\frac{2R_{\max}\,
d_{\mathrm{TV}}(\mathcal{O}_2, 
\mathcal{O}^{\text{real}})}{(1-\gamma)^2},
\end{equation*}
showing that the performance gap scales directly with 
$d_{\mathrm{TV}}(\mathcal{O}_2, 
\mathcal{O}^{\text{real}})$. This is precisely the 
TPT residual $C_3\,\delta/(1-\gamma)^2$ in 
Theorem~\ref{thm:main_informal}: longer Phase~2 
training further reduces 
$d_{\mathrm{TV}}(\mathcal{O}_2, 
\mathcal{O}^{\text{real}})$, directly tightening 
this term. A single-stage schedule that skips Phase~1 
would instead need 
$d_{\mathrm{TV}}(\mathcal{O}_1, 
\mathcal{O}^{\text{real}})$ in the bound, a 
substantially larger quantity, leaving the TPT 
residual much worse.
\end{remark}


\subsection{Real-Time Deployment Pipeline}
\label{sec:rdp}

After training, the final policy $\pi_{\theta_2}$ is 
integrated into a RDP for closed-loop vehicle 
operation. The purpose of this pipeline is to connect 
perception, route input, vehicle-state feedback, 
policy inference, action conversion, and safety 
monitoring into a complete execution stack that 
implements the composed policy 
Eq.~\eqref{eq:composed_policy} on a physical vehicle.

\subsubsection{Pipeline and Timing}
The deployment loop follows a 
perception--inference--control structure:
\begin{equation}
o_t^{\mathrm{real}}
\;\xrightarrow{\;\mathcal{G}\;}\;
\hat{o}_t^{\mathrm{sim}}
\;\xrightarrow{\;\pi_{\theta_2}\;}\;
\hat{a}_t
\;\xrightarrow{\;\mathcal{M}\;}\;
a_t^{\mathrm{real}},
\label{eq:rdp_pipeline}
\end{equation}
where the final command is further checked 
by the safety layer before execution, as illustrated in 
Fig.~\ref{fig:overview}(d).

The deployment loop consists of image acquisition, GOB processing, policy inference, PAM conversion, and command transmission. In steady-state pipelined execution, GOB processing dominates the critical path, and the measured average onboard compute latency is about 26.8 ms per cycle, well within the 50 ms budget of the 20 Hz control loop. Policy inference takes approximately 2 ms, while PAM conversion and PID computation take less than 1 ms.

\subsubsection{Waypoint Generation}

The policy requires route information and vehicle 
states consistent with the simulator training 
interface. To provide route input, we use a waypoint 
provider that supplies a sequence of future path 
points to the policy. In our primary implementation, 
the route is obtained from a pre-recorded GPS 
trajectory: the current vehicle position is matched 
to the nearest point on the route, and a set of 
future waypoints is extracted and transformed into 
the vehicle-relative coordinate frame. This design 
makes the real-world waypoint input compatible with 
the route representation used during simulator 
training. When GPS-based routing is unavailable, a 
vision-based fallback can also be constructed by 
extracting a lane centerline from the road surface 
and lane marking channels of 
$\hat{o}_t^{\mathrm{sim}}$. A vehicle-state 
interface provides real-time low-level feedback from 
the platform, including current speed, steering 
status, and other controller-relevant signals. In our 
implementation, these quantities are read from the 
vehicle through the CAN bus and converted into the 
normalized format expected by the policy and 
controller. Camera frames, waypoint updates, and 
vehicle-state messages are time-aligned before policy 
inference to ensure stable closed-loop execution.

\subsubsection{Safety Layer}
To improve operational safety, we place a safety 
layer on top of the learned policy and low-level 
controller. It enforces hard motion constraints 
including a maximum speed of 15~km/h during initial 
testing, consistent with campus low-speed autonomous 
vehicle testing 
protocols, a lateral
deviation limit of 0.8~m from the detected lane 
center, and a steering-rate limit 
$|\delta_t - \delta_{t-1}| \leq \Delta\delta_{\max}$ 
to prevent abrupt steering commands. Emergency 
braking is triggered when any of the following 
conditions is met: invalid policy outputs (NaN/Inf); 
camera frame timeout or CAN bus disconnection; 
emergency-stop button activation; vehicle exit from 
a predefined geofenced region; or safety driver 
intervention detected via steering-wheel torque 
exceeding a threshold. All safety checks execute at 
higher priority than the policy output and cannot be 
overridden. The system immediately releases control 
authority upon safety-driver takeover.

\subsubsection{Real-Time Execution and Modularity}

The observation bridge, route and state interfaces, 
policy, action mapping, and safety layer are loosely 
coupled and can be upgraded independently. For 
example, the IPM-based BEV generation can be replaced 
by a stronger camera-to-BEV model, or the PID 
controller replaced by a model predictive controller, 
without changing $\pi_{\theta_2}$ itself. The same 
modularity makes calibration lightweight: only the 
small parameter set $\Theta$ specified in 
Section~\ref{sec:pam} and the camera parameters 
$(\mathbf{K}, h, \alpha, \beta)$ must be specified 
before deployment.

Overall, RDP closes the gap between simulator 
training and physical execution by turning the 
composed policy:
\begin{equation}
\pi^{\mathrm{real}}(a_t^{\mathrm{real}} \mid 
o_t^{\mathrm{real}})
=
\mathcal{M}
\!\left(
\pi_{\theta_2}
\!\left(
\hat{o}_t^{\mathrm{sim}},\, s_t,\, w_t
\right)
\right)
\end{equation}
into a complete real-world driving stack that 
requires no VLM inference at test time, enabling 
safe and responsive closed-loop operation on a 
full-scale vehicle.

\subsection{Theoretical Guarantee}
\label{sec:theory_informal}

We conclude the framework section with a simplified statement of the main
theoretical guarantee, where the explicit dependence of each error coefficient
on $L_r$, $L_\pi$, $R_{\max}$, $v_{\max}$, $T$, and $\gamma$ is absorbed
into named constants for readability. The full expressions for these constants
and complete proofs are given in \ref{app:theory}.
 
\begin{theorem}[Zero-Shot Transfer Guarantee]
\label{thm:main_informal}
Under bounded segmentation error $\epsilon_{\mathrm{seg}}$, bounded PID
tracking error $\epsilon_{\mathrm{pid}}$, and bounded observation
distribution gap $d_{\mathrm{TV}}(\mathcal{O}_2, \mathcal{O}^{\text{real}})
\leq \delta$, the expected cumulative reward of the Sim2Real-AD policy
$\pi^{\text{real}}$ on the real vehicle satisfies:
\begin{equation}
\begin{aligned}
\mathbb{E}\!\left[\sum_{t=0}^{T} \gamma^t r_t^{\text{real}}\right]
\;\geq\;{}&
\mathbb{E}\!\left[\sum_{t=0}^{T} \gamma^t r_t^{\text{sim}}\right]
- \underbrace{C_1\, \epsilon_{\mathrm{seg}}}_{\text{GOB error}} \\
&- \underbrace{C_2\, \epsilon_{\mathrm{pid}}}_{\text{PAM error}}
- \underbrace{\dfrac{C_3\, \delta}{(1-\gamma)^2}}_{\text{TPT residual}},
\end{aligned}
\label{eq:main_bound}
\end{equation}
where $C_1, C_2, C_3 > 0$ are constants depending on the policy's Lipschitz
constant $L_\pi$, the reward Lipschitz constant $L_r$, the reward bound
$R_{\max}$, the maximum speed $v_{\max}$, the horizon $T$, and the discount
factor $\gamma$ (see \ref{app:main} for explicit expressions),
and $r_t^{\text{sim}}$ is the VLM-guided reward Eq.~\eqref{eq:reward_combined}
from the RL objective Eq.~\eqref{eq:rl_objective}.
\end{theorem}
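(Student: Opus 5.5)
The plan is a telescoping (hybrid) argument. I insert intermediate policy/environment pairs between the simulator return and the real-vehicle return, so that each consecutive difference isolates exactly one module's error. Concretely, I define four returns:
\begin{itemize}
\item $J_0$: the trained policy $\pi_{\theta_2}$ acting in CARLA with its intended physical actions executed exactly, so that the curvature and speed are realized perfectly.
\item $J_1$: the same setting, but with observations produced by $\mathcal{G}$ on real images of scene-matched states.
\item $J_2$: additionally executing actions through the calibrated $\mathcal{M}$ with imperfect PID tracking.
\item $J_3$: the full real-vehicle return of $\pi^{\mathrm{real}}$.
\end{itemize}
Then $J_3 - J_0 = (J_1-J_0)+(J_2-J_1)+(J_3-J_2)$, and I lower-bound each difference separately by $-C_1\epsilon_{\mathrm{seg}}$, $-C_2\epsilon_{\mathrm{pid}}$ and $-C_3\delta/(1-\gamma)^2$.

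\textbf{GOB term.} Using Definition~\ref{def:bev}, with $\epsilon$ controlled by $\epsilon_{\mathrm{seg}}$ plus IPM/calibration error as described in the Remark, I bound the expected $\ell_1$ discrepancy between the BEV observations fed to the policy. By $L_\pi$-Lipschitzness the per-step action discrepancy is then at most $L_\pi\epsilon$. Applying the reward Lipschitz constant $L_r$ and summing $\gamma^t$ over the horizon gives $|J_1-J_0|\le L_r L_\pi \epsilon \sum_t\gamma^t$, i.e.\ $C_1\epsilon_{\mathrm{seg}}$. This is exactly what Proposition~\ref{prop:gob_bound} provides, and I invoke it directly.

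\textbf{PAM term.} By Definition~\ref{def:pam}, the executed and intended curvatures differ by at most $\epsilon_{\mathrm{pid}}$ per step (Assumption~\ref{ass:pid}). Integrating twice at speed at most $v_{\max}$ yields the lateral deviation bound $v_{\max}^2T^2\epsilon_{\mathrm{pid}}/2$ of Proposition~\ref{prop:pam_tracking}. Feeding this state deviation through $L_r$ and discounting gives $C_2 = L_r v_{\max}^2T^2\sum_t\gamma^t/2$ or a similar constant.

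\textbf{TPT term.} The remaining change is the shift of the observation distribution from $\mathcal{O}_2$ to $\mathcal{O}^{\mathrm{real}}$. Here I apply Theorem~\ref{thm:tpt_bound} with $d_{\mathrm{TV}}\le\delta$, which gives $C_3=2R_{\max}$ and the $(1-\gamma)^{-2}$ factor. The underlying mechanism is the standard simulation lemma: a per-step TV shift of $\delta$ compounds into a state-occupancy shift of at most $\delta/(1-\gamma)$, and multiplying by the value range $R_{\max}/(1-\gamma)$ produces the second power. Finally I replace $J_0$ by $\mathbb{E}[\sum\gamma^t r^{\mathrm{sim}}_t]$, since Phase~2 training optimized the policy on that objective.

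\textbf{Main obstacle: keeping the hybrids consistent so nothing is double-counted.} The GOB step measures observation error on scene-matched pairs, while the TPT step measures distributional mismatch. Making them additive requires a coupling: I would take $\mathcal{O}_2$ as the reference, charge $\epsilon$ only for pointwise rendering and segmentation differences within the coupling, and charge $\delta$ for the residual marginal mismatch. The first thing to try is a maximal coupling of $\mathcal{O}_2$ and $\mathcal{O}^{\mathrm{real}}$. On the event where the coupled observations agree up to segmentation noise, I apply the Lipschitz bound; the complementary event has probability at most $\delta$ per step, where I simply pay $2R_{\max}$.

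A second subtlety is compounding. A per-step deviation of the state trajectory under $L_\pi$ can grow geometrically. To avoid this, I would either state $C_1$ and $C_2$ with an explicit horizon dependence, as the theorem permits, or assume bounded closed-loop sensitivity so that the constants remain finite.
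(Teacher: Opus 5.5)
Your proposal is essentially the paper's own proof. The paper also telescopes $r_t^{\text{sim}}-r_t^{\text{real}}$ through two intermediate reward sequences (GOB-substituted BEV, then added PID tracking error) and bounds the three differences by Proposition~\ref{prop:gob_bound} with $L_r$, Proposition~\ref{prop:pam_tracking} with $L_r$, and Proposition~\ref{thm:tpt_bound}, giving $C_1=L_rL_\pi/(1-\gamma)$, $C_2=L_r v_{\max}^2T^2/(2(1-\gamma))$ and $C_3=2R_{\max}$; the coupling and closed-loop compounding issues you flag are not handled there (it just asserts each per-step bound and sums against $\sum_t\gamma^t\le 1/(1-\gamma)$), so your caveats point to looseness in that same argument rather than to a different route.
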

 
\begin{remark}
Theorem~\ref{thm:main_informal} identifies \emph{three independent sources}
of sim-to-real performance degradation and shows that each is separately
controllable: the GOB error decreases with better segmentation, the PAM error
decreases with tighter PID calibration, and the TPT residual decreases with
longer Phase~2 training or higher-fidelity IPM. When all three error terms
vanish, real-vehicle performance converges to simulation performance, as
formalized in Corollary~\ref{cor:convergence} (\ref{app:theory}).
To the best of our knowledge, this is among the first transfer-error decompositions for zero-shot sim-to-real deployment of a VLM-guided RL policy in autonomous driving.
\end{remark}


\section{Experiments}
\label{sec:experiments}

The experiments are structured to address the 
following research questions:
\textbf{RQ1:} Can the proposed Sim2Real-AD framework
transfer policies trained under different reward
paradigms through a single pipeline, keeping them
functional after sim-to-real transfer (reward-agnostic
generality)?
\textbf{RQ2:} What is the individual contribution of 
each module (GOB, PAM, TPT) to overall transfer 
performance?
\textbf{RQ3:} How much observation fidelity does GOB 
preserve compared to ground-truth BEV?
\textbf{RQ4:} Can the framework achieve zero-shot
deployment on a real vehicle?

\subsection{Simulation Experiments}

\subsubsection{Experimental Setup}
\label{sec:sim_setup}

\textit{1) Simulation Environment.}
All simulation experiments are conducted in CARLA 
0.9.13~\citep{dosovitskiy2017carla} with synchronous 
mode at 20~FPS. Consistent with~\citep{huang2026drivevlmrl}, 
models are trained exclusively on Town~2, a compact 
European-style urban layout with residential districts, 
commercial zones, single-lane roads, and signalized 
intersections that provides diverse driving conditions 
including straight roads, curved segments, T-junctions, 
and varying road geometries. For the cross-map limitation
analysis (Section~\ref{sec:limitations}), we additionally evaluate
on Towns~1, 3, 4, and~5, which present progressively harder
distribution shifts in road topology and traffic patterns.

\textit{2) Traffic Configuration.}
To evaluate robustness under realistic urban conditions 
with heterogeneous road users, we construct a complex 
traffic environment following~\citep{huang2026drivevlmrl}. 
Specifically, the simulation includes: 20 vehicles 
generating natural traffic flow interactions; 
20 pedestrians with randomized walking speeds 
(0.8--1.5~m/s) moving around sidewalks and crosswalks; 
20 motorcycles with short following distances (2.0~m) 
and $\pm$30\% speed variance, frequently producing 
cut-in behaviors; and 20 bicycles traveling at 
approximately 80\% below the speed limit, requiring 
safe and patient overtaking maneuvers. 

\textit{3) Navigation Routes.}
We employ dynamic route assignment during both 
training and evaluation. At each episode reset, two 
distinct spawn points are randomly selected from the 
101 predefined locations in Town~2 and the shortest 
path is computed via the A* algorithm. Episodes 
continue until the cumulative driving distance reaches 
3,000~m, providing comprehensive coverage of diverse 
navigation scenarios within a single episode. For 
evaluation, we use 10 predefined routes not 
encountered during training. These evaluation routes are defined within Town 2 using held-out spawn-point pairs.

\textit{4) Episode Termination.}
Each episode terminates upon: (i)~collision with 
static infrastructure, vehicles, pedestrians, 
cyclists, or motorcyclists; (ii)~the ego vehicle 
remaining stationary (speed $<$ 1~km/h) for more 
than 90 consecutive seconds, indicating a stuck 
condition; or (iii)~lateral deviation from the lane 
center exceeding 3~m.

\subsubsection{Training Configuration}
\label{sec:train_config}

All algorithms use Soft Actor-Critic 
(SAC)~\citep{haarnoja2018soft} with automatic 
entropy tuning as the backbone RL optimizer. 
Table~\ref{tab:hyperparams} summarizes the key 
hyperparameters for the two-phase progressive 
training.

\begin{table}[!t]
\centering
\caption{Training hyperparameters for two-phase
progressive training (TPT). Phase~2 warm-starts
from the Phase~1 checkpoint with a reduced
learning rate. The Phase~1 step count is the nominal schedule; VLM-RL and DriveVLM-RL train Phase~1 to $\approx$1.1$\times$10$^6$ steps (see the per-algorithm training-curve figures).}
\label{tab:hyperparams}
\renewcommand{\arraystretch}{1.25}
\setlength{\tabcolsep}{4pt}
\small
\begin{tabular}{lcc}
\toprule
Parameter & Phase~1 & Phase~2 \\
\midrule
Observation source     & GT-BEV          
  & GOB-BEV (GOB)  \\
Action space           & Physics (PAM)   
  & Physics (PAM)  \\
Training steps         & $1 \times 10^6$ 
  & $5 \times 10^5$ \\
Initialization         & Random          
  & Phase~1 checkpoint \\
Learning rate          
  & $1{\times}10^{-4}$ to $5{\times}10^{-7}$
  & $5{\times}10^{-5}$ to $5{\times}10^{-7}$ \\
Replay buffer size     & $10^5$  & $10^5$ \\
Batch size             & 256     & 256 \\
Discount $\gamma$      & 0.98    & 0.98 \\
Soft update $\tau$     & 0.02    & 0.02 \\
Train frequency        & 64 steps & 64 steps \\
Gradient steps         & 64      & 64 \\
BEV feature extractor  
  & \multicolumn{2}{c}{CustomCNN, 256-d} \\
\bottomrule
\end{tabular}

\end{table}

\textit{1) Observation and Action Spaces.}
The policy receives a dictionary observation 
comprising: (i)~a 14-channel semantic BEV mask at 
$192 \times 192$ resolution encoding 
road surface, lane markings, vehicles, pedestrians, 
traffic lights, and other semantic categories; 
(ii)~the next 15 waypoints along the planned route 
expressed as $(x,y)$ ego-centric coordinates at 
2~m intervals; and (iii)~vehicle state measurements 
(speed in km/h, steering angle, throttle command). 
In Phase~2 of TPT, the BEV is produced by the GOB 
module (IPM pipeline from a front-facing monocular 
camera) rather than the simulator's ground-truth 
semantic renderer.

For Sim2Real-AD, the policy outputs 
$\hat{a} = (a_\kappa, a_v) \in [-1,1]^2$, which 
PAM maps to platform-agnostic quantities:
\begin{equation}
\kappa = a_\kappa \cdot \kappa_{\max}, \qquad
v_{\mathrm{des}} = \tfrac{a_v + 1}{2} \cdot 
v_{\max},
\end{equation}
where $\kappa_{\max} = \tan(\delta_{\max}) / L$ 
with wheelbase $L = 2.875$~m and 
$\delta_{\max} = 70^\circ$ (CARLA default vehicle; 
Ford E-Transit van platform parameters are given in 
Section~\ref{sec:platform}), and 
$v_{\max} = 35$~km/h. A PID controller 
($K_p = 0.5$, $K_i = 0.05$, $K_d = 0.1$) converts 
desired speed to throttle/brake commands.

\textit{2) VLM Configuration.}
For algorithms employing CLIP-based reward shaping 
(DriveVLM-RL and VLM-RL), we use OpenCLIP's 
ViT-bigG-14~\citep{ilharco2021openclip} pretrained 
on LAION-2B, with reward blending coefficient 
$\alpha = 0.5$ and CLIP batch size of 64 frames. 
Since the CLIP reward is computed from first-person camera images rather than BEV observations, the GOB change in Phase 2 does not directly alter the reward computation interface, helping keep the training objective comparable across both phases. For DriveVLM-RL, 
a Qwen3-VL-4B LVLM with attention-gated YOLOv8s 
triggering provides dynamic semantic 
descriptions~\citep{huang2026drivevlmrl}.

\subsubsection{Evaluation Metrics}
\label{sec:metrics}

We report the following metrics, evaluated over 
10 predefined routes per town following 
\citep{huang2026drivevlmrl,huang2025vlm}:

\begin{itemize}
\item \textbf{Average Speed (AS, km/h$\uparrow$)}: 
Mean ego-vehicle speed during the episode.

\item \textbf{Route Completion (RC$\uparrow$)}: 
Fraction of the planned route completed before 
termination.

\item \textbf{Total Distance (TD, m$\uparrow$)}: 
Cumulative distance traveled per episode.

\item \textbf{Collision Speed (CS, 
km/h$\downarrow$)}: Mean speed at impact; reflects 
collision severity rather than frequency.

\item \textbf{Success Rate (SR$\uparrow$)}: 
Fraction of routes completed without collision 
or intervention.

\item \textbf{Average Collision (AC$\downarrow$)}: 
Mean collision count per evaluation route.

\item \textbf{Performance Retention (PR, 
\%$\uparrow$)}: 
\begin{equation}
\mathrm{PR}_{m} = \frac{M_{\mathrm{sim2real}}}
{M_{\mathrm{Original}}} \times 100\%,
\end{equation}
where $M_{\mathrm{Original}}$ is the corresponding Original-setup metric under GT-BEV and direct-action evaluation, and $M_{\mathrm{sim2real}}$ is the metric after applying the Sim2Real-AD transfer pipeline. PR measures how much of the Original reference performance is retained after transfer.
\end{itemize}

\subsubsection{Baseline Algorithms}
\label{sec:baselines}

Because its four modules (GOB, PAM, TPT, RDP)
operate entirely on the policy's input and output
interfaces, Sim2Real-AD is reward-agnostic: it can
wrap any RL-based driving policy, whether the reward
is hand-crafted, LLM-generated, or VLM-guided, and
transfer it without modification. To validate this,
we apply the identical pipeline to three algorithms
spanning distinct reward paradigms, each the
top-performing representative of its paradigm in the
13-method comparison of~\citep{huang2026drivevlmrl}:

\begin{itemize}
\item \textbf{ChatScene-SAC}~\citep{zhang2024chatscene}:
SAC with an expert-designed smoothness-focused
reward. Purely static and analytically computed
without any VLM component. We use the SAC variant
(rather than the PPO version reported
in~\citep{huang2026drivevlmrl}) so that all three
methods share an identical SAC backbone and differ
only in the reward paradigm.

\item \textbf{VLM-RL}~\citep{huang2025vlm}: SAC 
with static shaped reward + CLIP reward using
Contrasting Language Goal (CLG). Fixed prompts 
contrast ``clear road'' vs.\ ``collision'' using 
single-frame BEV images.

\item \textbf{DriveVLM-RL}~\citep{huang2026drivevlmrl} 
(our primary backbone, top-performing overall): 
SAC with a dual-pathway reward: a static pathway 
(CLIP-based CLG on BEV images) and a dynamic 
pathway (attention-gated LVLM reasoning). The 
dynamic pathway captures scene-level semantic risks 
through multi-frame visual understanding, achieving 
the highest SR and lowest CS among all compared 
methods.
\end{itemize}

All three share the same SAC backbone, network
architecture, observation space, and environment;
only the reward differs. We deliberately fix the
optimizer to SAC rather than mixing in, e.g., PPO:
since PAM already redefines the action space in
physical terms (curvature and desired speed),
changing the RL algorithm would alter both the reward
paradigm and the action-space optimization at once,
confounding the effect we aim to isolate. Fixing SAC
thus isolates the effect of reward design on
transferability. Accordingly, ``across algorithms''
here denotes these three VLM-guided RL methods, which
differ in reward paradigm while sharing a common SAC
backbone.

\begin{figure*}[!t]
    \centering
    \includegraphics[width=\linewidth]{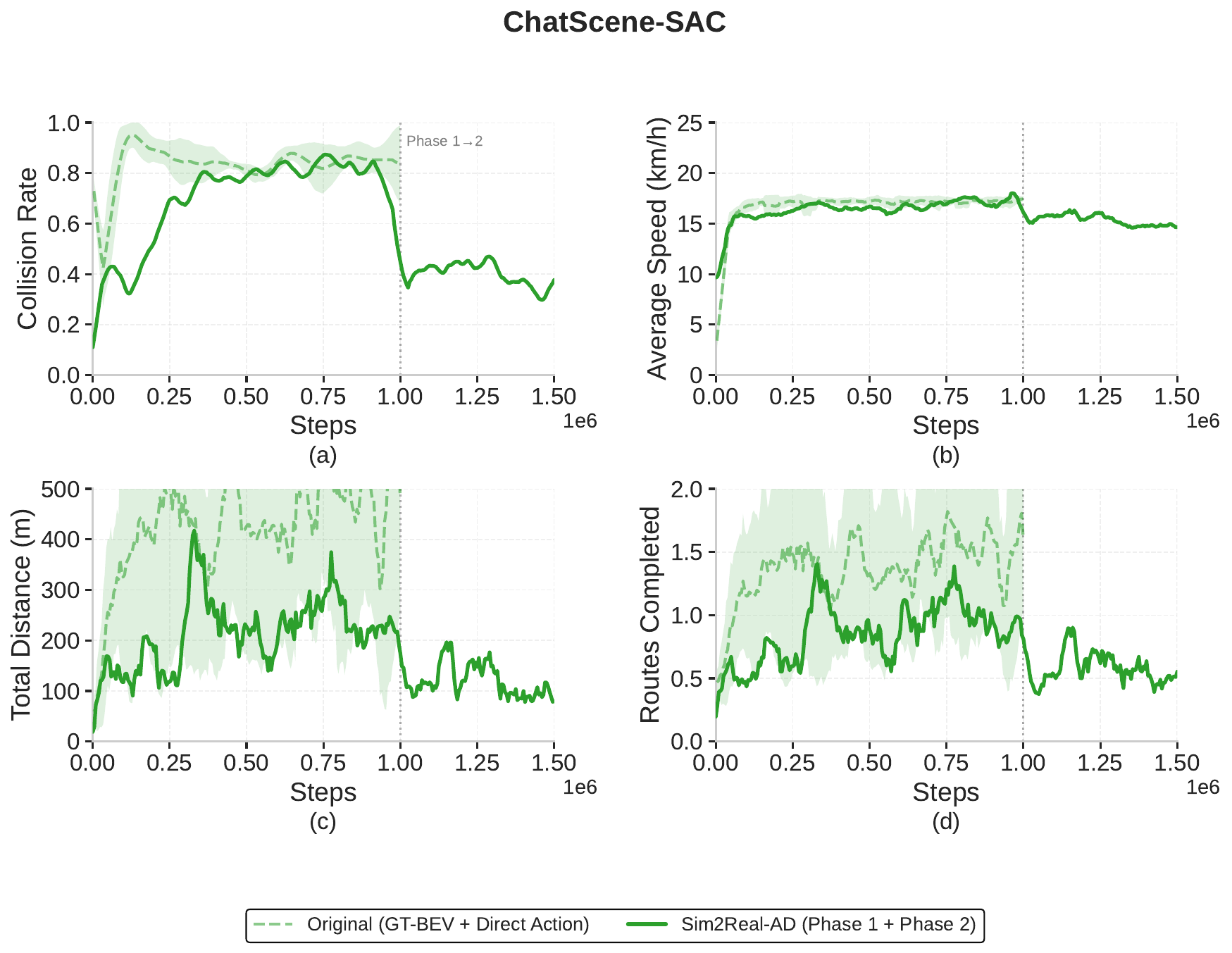}
    \caption{ChatScene-SAC training curves: Original 
    (dashed, GT-BEV + Direct Action, $\pm 1\sigma$ 
    over 3 seeds) vs.\ Sim2Real-AD (solid, Phase~1: 
    GT-BEV + PAM, Phase~2: GOB-BEV + PAM). Vertical 
    dotted line marks Phase~1-to-2 at 
    $1{\times}10^6$ steps. (a)~Collision rate. 
    (b)~Average speed. (c)~Total distance. 
    (d)~Routes completed. }
    \label{fig:train_chatscene}
\end{figure*}

\begin{figure*}[!t]
    \centering
    \includegraphics[width=\linewidth]{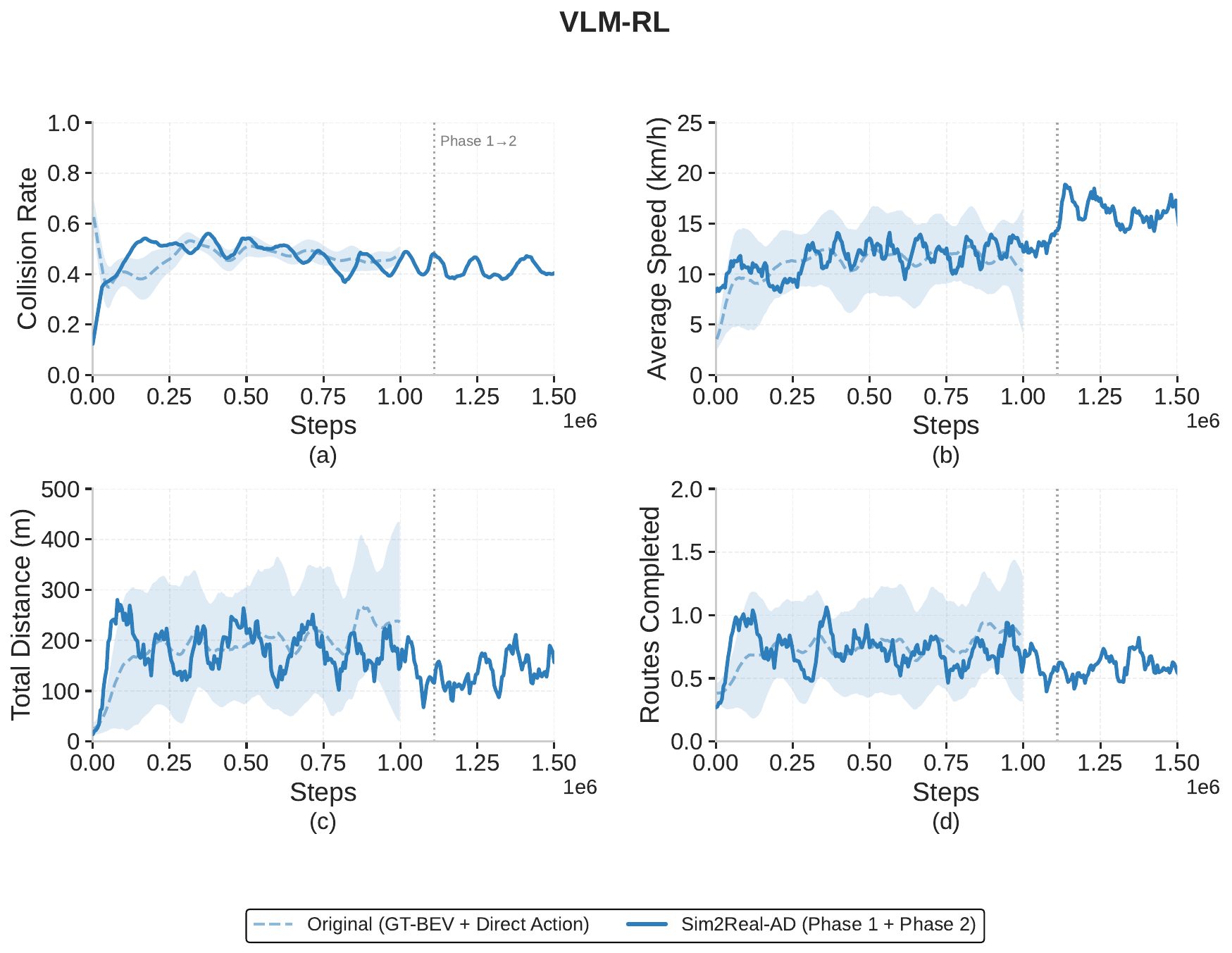}
    \caption{VLM-RL training curves: Original
    (dashed, GT-BEV + Direct Action, $\pm 1\sigma$
    over 3 seeds) vs.\ Sim2Real-AD (solid, Phase~1:
    GT-BEV + PAM, Phase~2: GOB-BEV + PAM). Vertical
    dotted line marks Phase~1-to-2 at
    $\approx\!1.1{\times}10^6$ steps. (a)~Collision rate.
    (b)~Average speed. (c)~Total distance. 
    (d)~Routes completed.}
    \label{fig:train_vlmrl}
\end{figure*}

\begin{figure*}[!t]
    \centering
    \includegraphics[width=\linewidth]{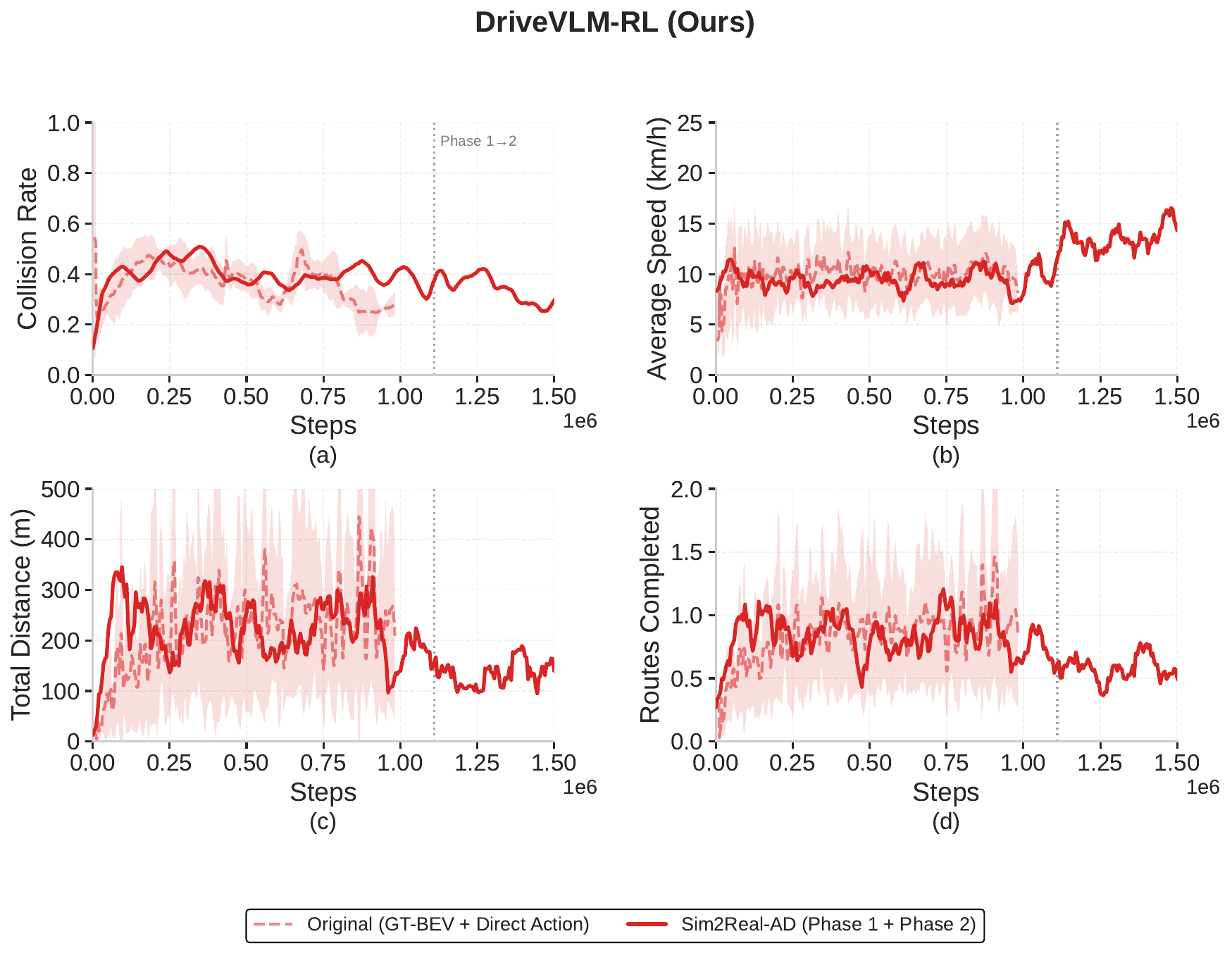}
    \caption{DriveVLM-RL training curves: Original
    (dashed, GT-BEV + Direct Action, $\pm 1\sigma$
    over 3 seeds) vs.\ Sim2Real-AD (solid, Phase~1:
    GT-BEV + PAM, Phase~2: GOB-BEV + PAM). Vertical
    dotted line marks Phase~1-to-2 at
    $\approx\!1.1{\times}10^6$ steps. (a)~Collision rate.
    (b)~Average speed. (c)~Total distance. 
    (d)~Routes completed.}
    \label{fig:train_drivevlm}
\end{figure*}

\subsubsection{Main Results: Sim-to-Real Transfer across Algorithms (RQ1)}
\label{sec:main_result}

We evaluate all three algorithms in two phases: a \emph{training phase} that examines
convergence stability under the TPT curriculum, and a \emph{testing phase} that reports
final policy performance across the three progressive transfer stages.

\textit{1) Training Performance Analysis.}

Figs.~\ref{fig:train_chatscene}--\ref{fig:train_drivevlm} present per-algorithm
training curves comparing the \emph{original} setup (dashed lines: GT-BEV +
Direct Action) against
\emph{Sim2Real-AD} (solid lines: Phase~1 GT-BEV + PAM to Phase~2 GOB-BEV + PAM).
In each figure, the vertical dotted line marks the Phase~1-to-2 transition, i.e.,
the step at which Phase~2 resumes from the Phase~1 checkpoint: this occurs at
$1{\times}10^6$ steps for ChatScene-SAC and at approximately $1.1{\times}10^6$ steps for
VLM-RL and DriveVLM-RL. Examining each algorithm in turn reveals how the
sim-to-real modules reshape training dynamics.

\textbf{ChatScene-SAC} (Fig.~\ref{fig:train_chatscene}) shows the most striking
contrast. Its original training (dashed) reaches the highest collision rate of the
three (0.85--0.95) with high speed ($\sim$17~km/h) and large total distance
(300--500~m). PAM alone does not tame this: throughout Phase~1 the collision rate
stays high and variable (roughly 0.45--0.85), only modestly below the original even
as distance and route completion drop, because the PID controller cannot reproduce
ChatScene-SAC's abrupt, aggressive commands. The collision rate falls sharply only at
the Phase~1-to-2 transition (to about 0.31--0.48), as the degraded GOB-BEV makes the
policy drive far more conservatively with much shorter trajectories. Its apparent
safety gain under full Sim2Real-AD is thus largely a by-product of shorter, more
cautious episodes rather than genuinely improved risk avoidance. Importantly, no
performance collapse occurs at the transition, confirming that TPT stabilizes the
observation-modality switch.

\textbf{VLM-RL} (Fig.~\ref{fig:train_vlmrl}) shows the \emph{closest tracking}
between its original and Sim2Real-AD curves: collision rates nearly overlap
(0.40--0.55) throughout Phase~1, indicating that its single-frame CLIP reward
(``clear road'' vs.\ ``collision'' on BEV images) yields behavior largely
invariant to the action-space change. In Phase~2, speed rises to 15--18~km/h,
exceeding the original; this acceleration likely reflects the noisier IPM
observation reducing caution. Route completion degrades mildly in Phase~2,
consistent with the single-frame CLIP reward being more sensitive to observation
shift than multi-frame VLM rewards.

\textbf{DriveVLM-RL} (Fig.~\ref{fig:train_drivevlm}). Under Sim2Real-AD, its
collision rate rises modestly in Phase~1 (to about 0.30--0.45, slightly above its
original level) as PAM reduces reactivity, then settles to 0.26--0.42 in Phase~2,
remaining the \emph{lowest} among the three throughout training. Average speed is
comparable to the original in Phase~1 (8--12~km/h) and rises to 11--16~km/h after
the transition, suggesting the GOB-BEV observation encourages more confident
acceleration once adapted. The distance and route-completion gap is largest in
Phase~2 while Phase~1 tracks closely, indicating that its semantic safety reasoning
transfers robustly even as the noisier observation shortens episodes.

\textbf{Cross-algorithm summary.} Three patterns hold across all algorithms:
(i)~no performance collapse occurs at the Phase~1-to-2 boundary, validating TPT's
progressive curriculum; (ii)~PAM (isolated in Phase~1) reduces distance and route
completion, but the Phase~1 collision rate remains governed by the reward paradigm,
increasing slightly for the already-safe DriveVLM-RL, staying comparable for VLM-RL,
and remaining high for ChatScene-SAC, whose sharp collision drop emerges only in
Phase~2 via shorter, more conservative trajectories; (iii)~Phase~2 speed rises for
VLM-RL and DriveVLM-RL but falls for ChatScene-SAC, as the noisier observation
encourages more aggressive throttle in the two lower-speed, semantically rewarded
policies.

Table~\ref{tab:train_perf} consolidates the final training-phase values behind
Figs.~\ref{fig:train_chatscene}--\ref{fig:train_drivevlm} across the three training
setups: the original policy (GT-BEV + Direct Action, 3 seeds), the Phase~1 policy
(GT-BEV + PAM), and the full Phase~2 policy (GOB-BEV + PAM + TPT, single physics seed). It
quantifies the consistent pattern visible in the curves: progressively introducing PAM
and then the degraded GOB observation trades travel distance and route completion for a
lower training-time collision rate, with all three methods ending at a similar low
collision rate ($0.30$--$0.40$). DriveVLM-RL additionally retains the longest mean
interval between collisions (ICT) at every setup, consistent with its safety-oriented
reward.

\begin{table*}[!t]
\centering
\caption{Final training-phase performance (mean over the last $5{\times}10^4$ steps
of each setup). \emph{Original} (GT-BEV + Direct Action, 3 training seeds),
\emph{Phase~1} (GT-BEV + PAM), and \emph{Phase~2} (full GOB-BEV + PAM + TPT,
single seed, no std); the same setups appear on the test routes in
Table~\ref{tab:transfer_stages}. AS: average speed (km/h); RC: routes completed;
TD: total distance (m); CR: collision rate; ICT: mean interval between collisions
(steps); DCF: collisions per km. Metrics follow~\citep{huang2026drivevlmrl}.}
\label{tab:train_perf}
\renewcommand{\arraystretch}{1.25}
\setlength{\tabcolsep}{5pt}
\small
\begin{tabular*}{\textwidth}{@{\extracolsep{\fill}}llcccccc@{}}
\toprule
Algorithm & Setup & AS$\uparrow$ & RC$\uparrow$ & TD$\uparrow$ & CR$\downarrow$ & ICT$\uparrow$ & DCF$\downarrow$ \\
\midrule
\multirow{3}{*}{ChatScene-SAC}
 & Original                 & 17.06 & 1.10 & 319.9 & 0.85 & 859 & 11.3 \\
 & Sim2Real-AD (Phase~1)    & 17.46 & 0.71 & 150.6 & 0.65 & 564 & 16.3 \\
 & Sim2Real-AD (Phase~2) & 14.69 & 0.42 & 72.1 & 0.33 & 643 & 25.7 \\
\midrule
\multirow{3}{*}{VLM-RL}
 & Original                 & 10.95 & 0.88 & 236.5 & 0.46 & 1640 & 8.9 \\
 & Sim2Real-AD (Phase~1)    & 11.64 & 0.61 & 121.8 & 0.40 & 1014 & 20.6 \\
 & Sim2Real-AD (Phase~2) & 18.21 & 0.49 & 101.2 & 0.40 & 743 & 17.6 \\
\midrule
\multirow{3}{*}{DriveVLM-RL}
 & Original                 & 9.82 & 0.91 & 235.5 & 0.27 & 4086 & 9.7 \\
 & Sim2Real-AD (Phase~1)    & 9.19 & 0.59 & 131.7 & 0.31 & 2342 & 13.7 \\
 & Sim2Real-AD (Phase~2) & 11.90 & 0.46 & 86.1 & 0.30 & 1130 & 22.4 \\
\bottomrule
\end{tabular*}
\end{table*}

\textit{2) Performance Evaluation in Testing.}
\label{sec:transfer_stages}

Table~\ref{tab:transfer_stages} consolidates all
three evaluation stages into a single unified view:
Original (pre-transfer GT-BEV baseline), Phase~1
(PAM action space only), and Phase~2 (full
Sim2Real-AD with GOB + PAM + TPT). For Original, the
VLM-RL and DriveVLM-RL upper bounds are taken from the
DriveVLM-RL test results~\citep{huang2026drivevlmrl},
while ChatScene-SAC is evaluated under the same
protocol; PR values in Phases~1 and~2 are computed
relative to Original. For Phase~1 and Phase~2, only a
single physics training seed exists, so the reported
$\pm$ std is over 3 evaluation runs of that one policy
with distinct route/traffic seeds.

\begin{table*}[!t]
\centering
\caption{Unified sim-to-real transfer results across three progressive setups (Town 2, 10 routes). Original is mean $\pm$ std over 3 training seeds (VLM-RL and DriveVLM-RL from~\citep{huang2026drivevlmrl}; ChatScene-SAC newly evaluated under the same protocol); Phase~1 and Phase~2 are mean $\pm$ std over 3 evaluation runs of the single transferred policy, so their std reflects evaluation robustness, not training reproducibility. PR(SR) and PR(TD) are relative to Original, a non-deployable GT-BEV upper bound. Best per column within each setup in \textbf{bold}.}
\label{tab:transfer_stages}
\renewcommand{\arraystretch}{1.25}
\setlength{\tabcolsep}{3pt}
\small
\begin{tabular*}{\textwidth}{@{\extracolsep{\fill}}
llcccccccc@{}}
\toprule
Setup & Algorithm
  & AS$\uparrow$
  & RC$\uparrow$
  & TD$\uparrow$
  & CS$\downarrow$
  & SR$\uparrow$
  & AC$\downarrow$
  & PR(SR)
  & PR(TD) \\
\midrule

\multirow{3}{*}{\shortstack[l]{
  \textbf{Original}\\GT-BEV\\Direct Action\\
  (Upper bound)}}
& ChatScene-SAC
  & \textbf{17.54}{\tiny$\pm$0.14}
  & 0.47{\tiny$\pm$0.13}
  & 160.45{\tiny$\pm$9.90}
  & 10.68{\tiny$\pm$1.98}
  & \textbf{0.63}{\tiny$\pm$0.15}
  & 0.37{\tiny$\pm$0.15}
  & --- & --- \\
& VLM-RL
  & 14.38{\tiny$\pm$1.53}
  & 0.51{\tiny$\pm$0.08}
  & 138.08{\tiny$\pm$16.68}
  & 10.09{\tiny$\pm$5.93}
  & 0.40{\tiny$\pm$0.00}
  & \textbf{0.10}{\tiny$\pm$0.10}
  & --- & --- \\
& \cellcolor{green!10}DriveVLM-RL
  & \cellcolor{green!10}14.54{\tiny$\pm$1.81}
  & \cellcolor{green!10}\textbf{0.57}{\tiny$\pm$0.03}
  & \cellcolor{green!10}\textbf{186.59}{\tiny$\pm$14.00}
  & \cellcolor{green!10}\textbf{1.75}{\tiny$\pm$3.02}
  & \cellcolor{green!10}0.57{\tiny$\pm$0.15}
  & \cellcolor{green!10}0.20{\tiny$\pm$0.26}
  & \cellcolor{green!10}---
  & \cellcolor{green!10}--- \\
\midrule

\multirow{3}{*}{\shortstack[l]{
  \textbf{Phase~1}\\GT-BEV\\+PAM only}}
& ChatScene-SAC
  & \textbf{16.85}{\tiny$\pm$0.15} & 0.42{\tiny$\pm$0.04} & 127.05{\tiny$\pm$12.49}
  & 6.95{\tiny$\pm$0.38} & 0.20{\tiny$\pm$0.10} & 0.80{\tiny$\pm$0.10} & 31.7\% & 79.2\% \\
& VLM-RL
  & 15.03{\tiny$\pm$1.00} & \textbf{0.46}{\tiny$\pm$0.03} & \textbf{137.61}{\tiny$\pm$30.69}
  & \textbf{0.14}{\tiny$\pm$0.13} & \textbf{0.37}{\tiny$\pm$0.15} & \textbf{0.23}{\tiny$\pm$0.06} & \textbf{91.7\%} & \textbf{99.7\%} \\
& \cellcolor{green!10}DriveVLM-RL
  & \cellcolor{green!10}14.61{\tiny$\pm$0.94} & \cellcolor{green!10}0.42{\tiny$\pm$0.07} & \cellcolor{green!10}117.58{\tiny$\pm$6.07}
  & \cellcolor{green!10}1.18{\tiny$\pm$2.03} & \cellcolor{green!10}\textbf{0.37}{\tiny$\pm$0.06} & \cellcolor{green!10}0.30{\tiny$\pm$0.20} & \cellcolor{green!10}64.3\% & \cellcolor{green!10}63.0\% \\
\midrule

\multirow{3}{*}{\shortstack[l]{
  \textbf{Phase~2}\\GOB-BEV\\+PAM+TPT\\
  (Full S2R)}}
& ChatScene-SAC
  & 15.36{\tiny$\pm$0.01} & 0.34{\tiny$\pm$0.00} & 70.28{\tiny$\pm$0.03}
  & 3.46{\tiny$\pm$1.09} & 0.20{\tiny$\pm$0.00} & 0.40{\tiny$\pm$0.00} & 31.7\% & 43.8\% \\
& VLM-RL
  & \textbf{16.07}{\tiny$\pm$2.10} & \textbf{0.39}{\tiny$\pm$0.06} & 95.02{\tiny$\pm$20.63}
  & 5.84{\tiny$\pm$4.74} & 0.20{\tiny$\pm$0.17} & \textbf{0.27}{\tiny$\pm$0.21} & \textbf{50.0\%} & \textbf{68.8\%} \\
& \cellcolor{green!10}DriveVLM-RL
  & \cellcolor{green!10}14.45{\tiny$\pm$1.57} & \cellcolor{green!10}0.38{\tiny$\pm$0.03} & \cellcolor{green!10}\textbf{99.98}{\tiny$\pm$6.64}
  & \cellcolor{green!10}\textbf{1.52}{\tiny$\pm$1.28} & \cellcolor{green!10}\textbf{0.27}{\tiny$\pm$0.06} & \cellcolor{green!10}0.30{\tiny$\pm$0.26} & \cellcolor{green!10}46.8\% & \cellcolor{green!10}53.6\% \\
\bottomrule
\end{tabular*}
\end{table*}

We draw the following observations from Tables~\ref{tab:transfer_stages} and~\ref{tab:failure_modes}.

\textbf{Reward-agnostic functional transfer (RQ1).}
The identical GOB/PAM/TPT pipeline applied to all three
reward paradigms keeps every transferred policy
\emph{operational} on the test routes, with non-trivial
route completion and travel distance at every stage. This
is what RQ1 targets: Sim2Real-AD carries a CARLA-trained
policy through a complete observation/action interface
replacement regardless of reward design, rather than
collapsing it to degenerate behavior.

\textbf{Performance degrades under transfer, but DriveVLM-RL keeps the best safety profile.}
From the Original upper bound to full Phase~2, success rate
drops markedly for all three methods (to $0.20$--$0.27$). At
Phase~2, DriveVLM-RL retains the highest success rate ($0.27$)
and by far the lowest collision severity ($1.52$~km/h, versus
$3.46$ and $5.84$~km/h for ChatScene-SAC and VLM-RL), consistent
with its safety-oriented dynamic-pathway reward; VLM-RL travels
farthest (PR(TD)~$=68.8\%$) but at much higher collision severity.
As Phases~1--2 use a single physics seed over 10 routes, the
absolute success rates carry non-trivial variance; the
collision-severity gap, however, is sizeable and aligns with the
real-vehicle results in Section~\ref{sec:real_world} (RQ4), which provide the primary real-world evidence of transfer fidelity.

\textbf{Stage-wise costs are consistent across methods.}
PAM (Phase~1) reduces travel distance and route completion
relative to the GT-BEV upper bound, because the curvature/PID
interface cannot reproduce the fine-grained maneuvers learned
in simulation; the GOB switch (Phase~2) reduces them further
(travel-distance retention falls to $43.8$--$68.8\%$ of
Original), as the monocular IPM BEV is noisier and
shorter-range. Average speed rises from Phase~1 to Phase~2 for
VLM-RL (from $15.03$ to $16.07$~km/h) while staying flat for DriveVLM-RL
(from $14.61$ to $14.45$~km/h), mirroring the training curves. Notably,
DriveVLM-RL is the only method with uniformly low collision
severity (CS~$\leq 1.75$~km/h at every stage), whereas both
baselines have high-severity stages (up to $\sim$10.7~km/h),
indicating that its semantic safety reward transfers robustly through the pipeline.

\textbf{At Phase~2, failures split between collisions and
red-light violations.}
Table~\ref{tab:failure_modes} categorizes the 30 Phase-2 episodes
per method. Collisions and red-light running are the two dominant
failure modes (stop-sign and stall terminations do not occur on
these Town~2 routes), both traceable to the noisier monocular IPM
BEV, which degrades obstacle clearance and traffic-light-state
estimation alike. DriveVLM-RL attains the most successful episodes
(8/30) and ChatScene-SAC the most collisions (12/30), while
red-light running dominates for VLM-RL (16/30). The residual
Phase-2 gap is thus driven as much by perception-limited signal
compliance as by collision avoidance, motivating the explicit
safety-monitoring layer in the RDP.

\begin{table*}[!t]
\centering
\caption{Phase~2 (full Sim2Real-AD) episode-outcome breakdown over
30 evaluation episodes per method (3 seeds $\times$ 10 routes,
Town~2). Counts (and \% of 30). ``Other'' aggregates stop-sign,
stall, and lane-invasion terminations, none of which occurred.}
\label{tab:failure_modes}
\renewcommand{\arraystretch}{1.25}
\setlength{\tabcolsep}{4pt}
\small
\begin{tabular*}{\textwidth}{@{\extracolsep{\fill}}lcccc@{}}
\toprule
Method & Success$\uparrow$ & Collision$\downarrow$ & Red-light$\downarrow$ & Other \\
\midrule
ChatScene-SAC & 6 (20\%) & 12 (40\%) & 12 (40\%) & 0 \\
VLM-RL        & 6 (20\%) & 8 (27\%) & 16 (53\%) & 0 \\
\rowcolor{green!10}DriveVLM-RL & \textbf{8 (27\%)} & 9 (30\%) & 13 (43\%) & 0 \\
\bottomrule
\end{tabular*}
\end{table*}

Taken together, the simulation transfer experiments establish
two things. First, Sim2Real-AD keeps a CARLA-trained
VLM-guided RL policy \emph{functional} after replacing its
observation and action interfaces, independently of the reward
paradigm. Second, and more importantly, the safety advantage of
the semantically grounded reward is \emph{preserved} through the
pipeline: DriveVLM-RL transfers with the highest Phase-2 success
rate and the lowest collision severity at every stage, while the
absolute magnitudes degrade under the compounded distribution
shift. This in-simulation ordering already foreshadows the
real-world outcome, and is confirmed directly by the zero-shot
closed-loop deployment on a full-scale vehicle in
Section~\ref{sec:real_world}, the central contribution of this
work.

\subsubsection{Ablation Study (RQ2)}
\label{sec:ablation}

We perform a module-level ablation using 
DriveVLM-RL as the base algorithm to quantify 
the contribution of each Sim2Real-AD component. 
Table~\ref{tab:ablation} reports evaluation 
results on Town~2.

\begin{table*}[!t]
\centering
\caption{Module ablation (DriveVLM-RL backbone, Town~2, 10 routes,
mean $\pm$ std). Only the full GOB~+~PAM~+~TPT configuration is
real-vehicle-deployable (\textbf{bold}); Original and
$^\dagger$PAM-only retain privileged GT-BEV (simulation-only), and
$^\ddagger$Direct Transfer feeds GOB-BEV to the unadapted GT-BEV
policy with no bridging, a proxy for naive deployment. Original and
Direct Transfer are the upper and lower bounds; the three stage rows
match the DriveVLM-RL entries of Table~\ref{tab:transfer_stages}.
Variance is over 3 training seeds (Original, Direct Transfer) or 3
evaluation seeds (PAM-only, full). PR(SR): success rate relative to Original.}
\label{tab:ablation}
\renewcommand{\arraystretch}{1.25}
\setlength{\tabcolsep}{2.5pt}
\small
\begin{tabular*}{\textwidth}{@{\extracolsep{\fill}}lccccccc@{}}
\toprule
Configuration & GOB & PAM & TPT
  & SR$\uparrow$ & AC$\downarrow$
  & TD$\uparrow$ & PR(SR) \\
\midrule
\rowcolor{gray!8}
Original (upper bound)
  & & &
  & 0.57{\tiny$\pm$0.15}
  & 0.20{\tiny$\pm$0.26}
  & 186.59{\tiny$\pm$14.00} & 100\% \\
\midrule
+ PAM only$^\dagger$
  & & \checkmark &
  & 0.37{\tiny$\pm$0.06}
  & 0.30{\tiny$\pm$0.20}
  & 117.58{\tiny$\pm$6.07} & 64.3\% \\
\rowcolor{green!10}
+ GOB + PAM + TPT (ours)
  & \checkmark & \checkmark & \checkmark
  & \textbf{0.27}{\tiny$\pm$0.06}
  & \textbf{0.30}{\tiny$\pm$0.26}
  & \textbf{99.98}{\tiny$\pm$6.64}
  & \textbf{46.8\%} \\
\midrule
\rowcolor{gray!8}
Direct transfer$^\ddagger$ (lower bound)
  & & &
  & 0.20{\tiny$\pm$0.00}
  & 0.20{\tiny$\pm$0.00}
  & 84.06{\tiny$\pm$6.20} & 35.0\% \\
\bottomrule
\end{tabular*}
\end{table*}

Three findings stand out. \textit{Each bridging stage trades simulator headroom for
real-vehicle deployability.} Relative to the GT-BEV upper bound
(SR~$0.57$), the PAM action bridge alone costs a moderate amount
(SR~$0.37$, PR~$64.3\%$), mainly from PID speed-tracking lag and
the nonlinear curvature-to-steering mapping at high curvatures, a
loss recoverable through tighter PID calibration
(Theorem~\ref{thm:main_formal}, PAM term); since it retains
privileged GT-BEV, PAM-only isolates this action-side cost but
cannot run on a real vehicle. Replacing GT-BEV with the deployable
GOB-BEV and applying the TPT curriculum yields the full pipeline at
SR~$0.27$ (PR~$46.8\%$); the further drop reflects the noisier,
shorter-range monocular IPM BEV, which distorts the anticipatory
cues for obstacle avoidance beyond $15$~m
(Section~\ref{sec:gob_analysis}). \textit{Given the deployable GOB-BEV observation, the framework
beats naive transfer.} Direct transfer, which feeds GOB-BEV to the
unadapted GT-BEV-trained policy with its original direct-action
output and no bridging, sets the lower bound (SR~$0.20$,
PR~$35.0\%$, TD~$84$~m). The full pipeline recovers to SR~$0.27$ and
TD~$100$~m, a $35\%$ relative SR gain, by adapting the observation
distribution through TPT while holding the action interface and
reward fixed (Definition~\ref{def:tpt},
Remark~\ref{rem:progressive_training}), consistent with the TPT
residual term in Theorem~\ref{thm:main_informal}. \textit{Only the full configuration is real-vehicle-deployable.}
Original and PAM-only rely on privileged GT-BEV, and direct transfer
on a direct action space producing no physical commands; none can
run on a real platform. The full GOB~+~PAM~+~TPT configuration is the
only one both deployable and above the naive lower bound, and is
therefore the configuration validated on the real vehicle in
Section~\ref{sec:real_world}.

\subsubsection{GOB Observation Quality Analysis 
(RQ3)}
\label{sec:gob_analysis}

\textit{1) CARLA GOB-BEV vs.\ GT-BEV Visual 
Comparison.}
Fig.~\ref{fig:gob_carla} compares the front-view camera input, the GOB-BEV, and the ground-truth semantic BEV across four representative CARLA frames (straight car-following, oncoming vehicles, cyclists, and an intersection). Two structural differences
are evident: (i)~GOB-BEV captures road surface and
lane markings only within the forward fan-shaped
field of view ($\sim$0--20~m ahead), whereas GT-BEV
provides full $360^\circ$ coverage; (ii)~front-view
vehicles are projected to approximate BEV positions
(red), while GT-BEV encodes all surrounding traffic.
Despite these coverage limits, GOB-BEV faithfully
reconstructs the \emph{road geometry directly ahead
of the ego vehicle}, the region most critical for the
policy's lane-keeping and collision-avoidance decisions.

\begin{figure*}[!t]
    \centering
    \includegraphics[width=\linewidth]
    {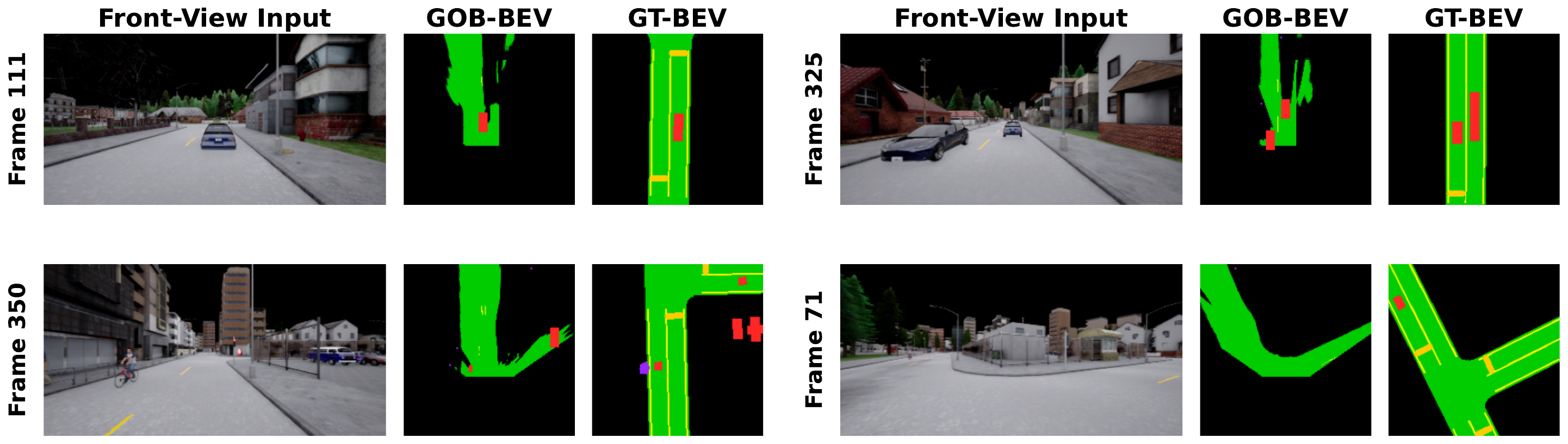}
    \caption{CARLA GOB evaluation on 4 
    representative frames (2 per row). Each 
    case shows (left to right): front-view 
    camera input, GOB-BEV, and GT-BEV. The 
    GOB-BEV captures near-field road geometry 
    (green), lane markings (yellow), and 
    vehicles (red) within the forward 
    fan-shaped field of view; the GT-BEV 
    provides full $360^\circ$ coverage.}
    \label{fig:gob_carla}
\end{figure*}

\textit{2) Quantitative Channel Analysis.}
We evaluate GOB-BEV fidelity on 200 paired frames
from CARLA using per-channel IoU between GOB-BEV and
GT-BEV and channel activation (fraction of nonzero
pixels) (Fig.~\ref{fig:channel_iou}).

The moderate road-channel IoU of $0.35$
(Fig.~\ref{fig:channel_iou}(a)) stems primarily from
\emph{over-projection} rather than missing coverage:
GOB-BEV labels $51.1\%$ of pixels as road versus
$31.1\%$ for GT-BEV (a $1.6\times$ ratio), so within
the camera footprint road \emph{recall} is high
($0.79$ of the forward GT road surface) but
\emph{precision} is only $0.42$. Coverage asymmetry is
minor: only $14\%$ of the GT road lies outside the
monocular field of view, and restricting the IoU to
the forward footprint raises it only from $0.35$ to
$0.38$. Lane, vehicle, walker, and traffic-light
channels show near-zero IoU ($<$0.01) because GOB uses
color-based lane detection ($0.2\%$ activation vs.\
GT's $3.1\%$), objects are projected to approximate
rather than exact positions, and walkers and lights
rarely occupy enough pixels after projection. The road
channel, which dominates the BEV and the policy's
lane-keeping, is thus the one GOB reconstructs most
faithfully, whereas object channels are not spatially
aligned with GT. The policy tolerates this degraded
observation not because GOB-BEV is pixel-accurate but
because Phase~2 fine-tuning adapts it to the GOB-BEV
distribution, which, though over-projected, is
temporally stable (Fig.~\ref{fig:temporal}).

\begin{figure*}[!t]
    \centering
    \includegraphics[width=\linewidth]
    {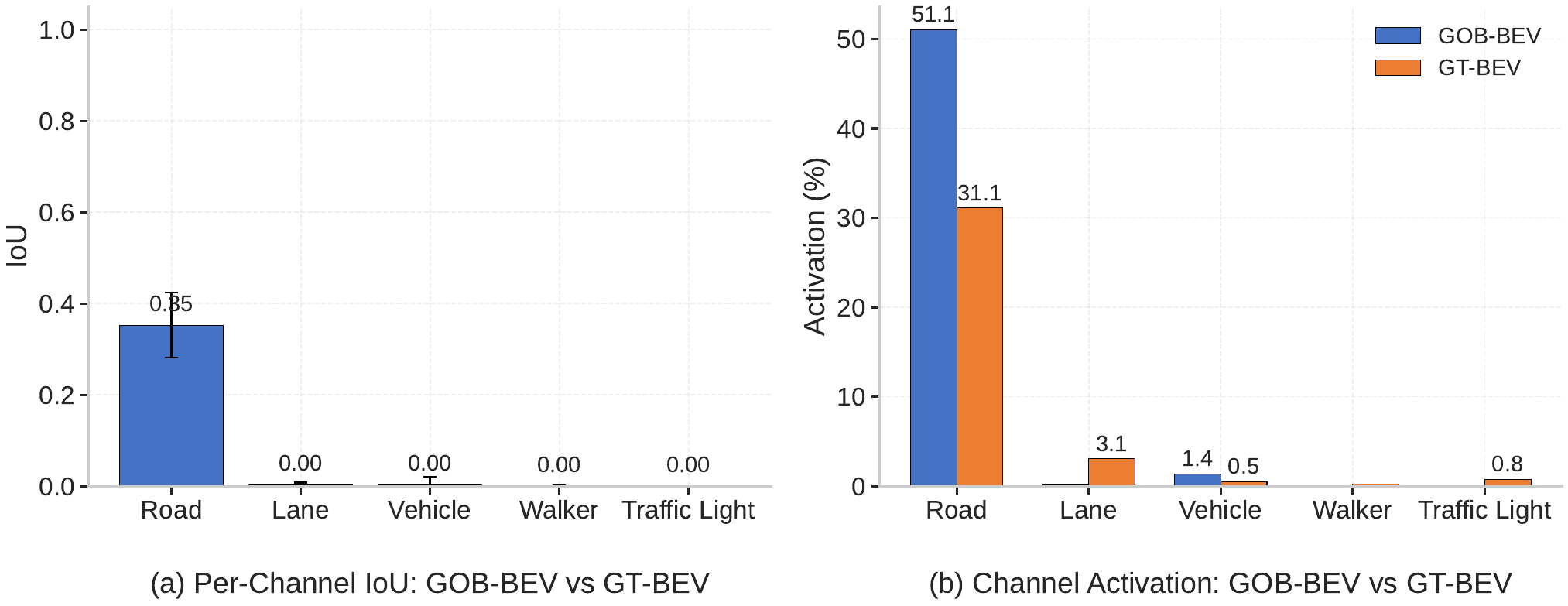}
    \caption{Quantitative GOB evaluation on 200 
    CARLA frames. (a)~Per-channel IoU between 
    GOB-BEV and GT-BEV: road achieves moderate 
    IoU ($0.35$) limited by road over-projection (precision $0.42$);
    other channels are near-zero due to 
    detection-method differences. (b)~Channel 
    activation comparison: GOB-BEV 
    over-activates road (51.1\% vs.\ 31.1\%) 
    due to forward-fan projection, while lane 
    and object channels have lower activation 
    than GT, reflecting the single-camera 
    field-of-view limitation.}
    \label{fig:channel_iou}
\end{figure*}

\textit{3) Temporal Consistency.}
For a deployed policy, temporal stability of the BEV
is as important as absolute accuracy, since flickering
masks cause erratic control. On the Ford E-Transit van we
evaluate two camera mounts: cam0 (forward-facing), the
primary deployment camera in
Section~\ref{sec:real_world}, and cam1 (elevated), as a
comparison. Over a 163-frame real-world sequence
(Fig.~\ref{fig:temporal}), cam0 attains a mean
consecutive-frame road-channel IoU of $0.944$ ($90.1\%$
of frames above $0.90$) and cam1 $0.969$ ($95.1\%$ above
$0.90$); cam1's higher stability reflects its elevated,
pitch-robust viewing angle. Brief dips below $0.90$
coincide with sharp heading changes during lane
transitions. This confirms that GOB produces smooth
observations suitable for closed-loop RL control at
20~Hz, with cam0 fully meeting the primary deployment
configuration's stability requirements.

\begin{figure*}[!t]
    \centering
    \includegraphics[width=0.95\linewidth]
    {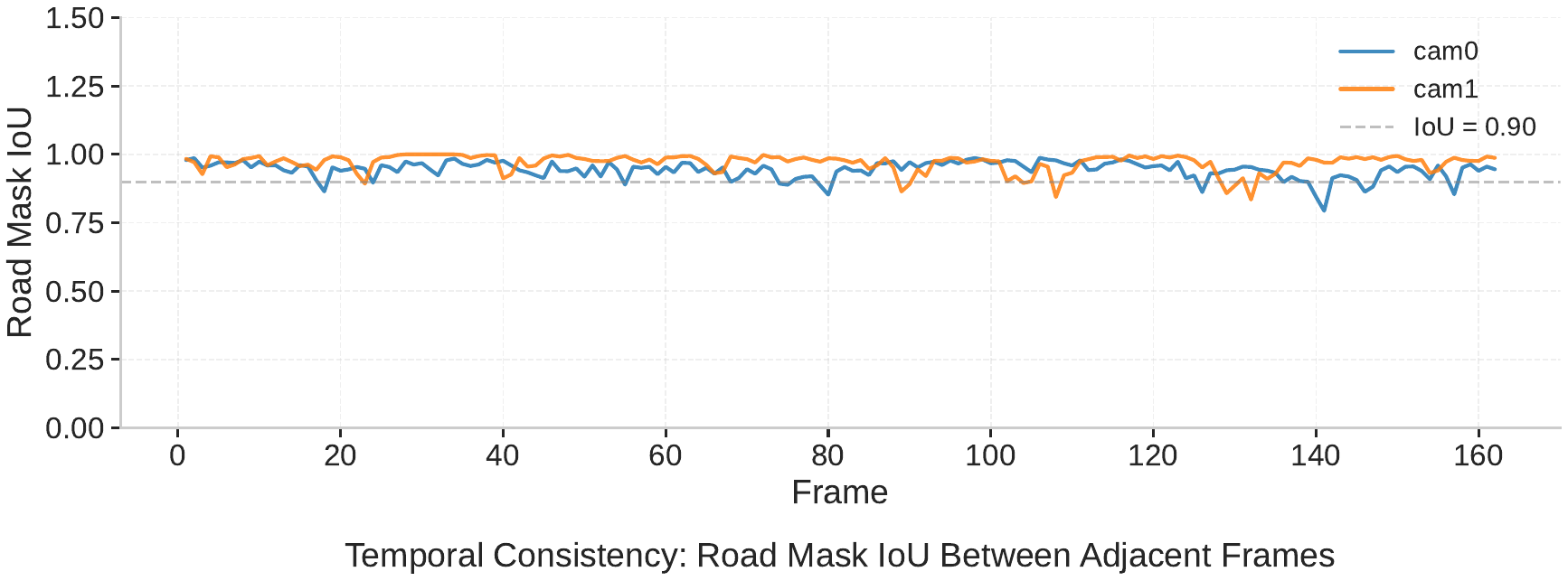}
    \caption{Temporal consistency of the road 
    mask: IoU between adjacent frames on 
    real-world driving data. cam0 
    (forward-facing) achieves 
    mean IoU~$= 0.944$; cam1 (elevated mount) achieves mean 
    IoU~$= 0.969$. The dashed line marks 
    IoU~$= 0.90$.}
    \label{fig:temporal}
\end{figure*}

\begin{figure*}[!t]
    \centering
    \includegraphics[width=\linewidth]
    {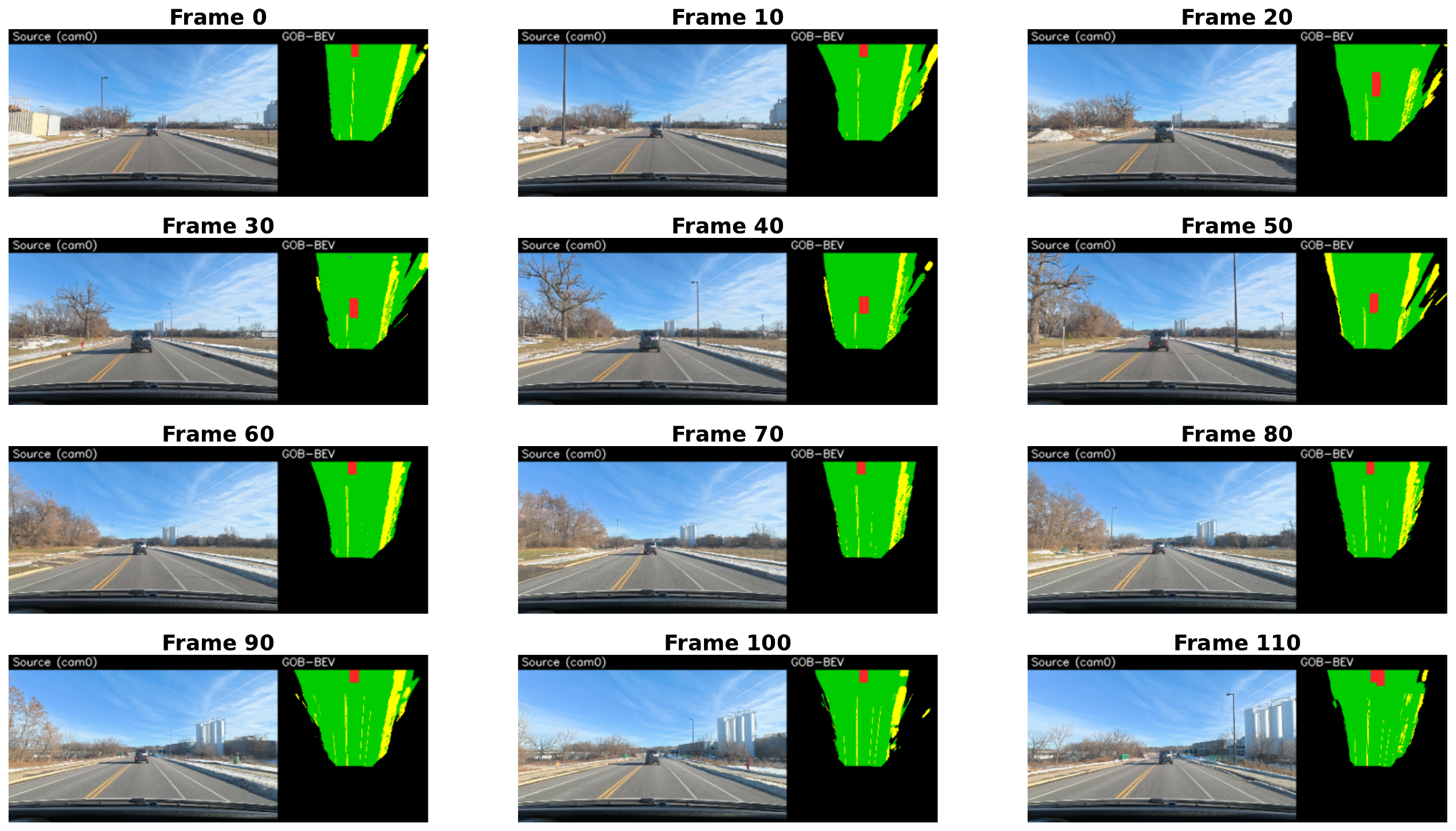}
    \caption{Real-world GOB output on 12 
    representative frames (Frame 0--110, 
    interval 10) from the Ford E-Transit van 
    primary forward-facing camera (cam0), 
    arranged in a 4$\times$3 grid. Each cell 
    shows the source image (left) and GOB-BEV 
    (right). Road surface (green), lane 
    markings (yellow), and vehicles (red) are 
    consistently detected across diverse road 
    conditions using zero-shot transfer from 
    Cityscapes-pretrained SegFormer-B0.}
    \label{fig:gob_real}
\end{figure*}

\textit{4) Real-World GOB Visualization.}
Fig.~\ref{fig:gob_real} presents 12 
representative GOB outputs from the primary 
forward-facing camera (cam0) on real-world 
driving footage, arranged in a 
4$\times$3 grid (frames labeled 0--110 at 
interval 10). The pipeline produces 
interpretable BEV masks across diverse 
conditions: straight roads with clear lane 
markings, gentle curves, and scenes with 
leading vehicles. Road surface (green) forms 
a consistent fan-shaped region ahead; lane 
markings (yellow) are detected via color 
filtering restricted to the semantic road 
region; and vehicles (red) are projected to 
approximate BEV positions using the 
bottom-center ground contact point of each 
detected bounding box as the IPM projection 
anchor. Notably, the Cityscapes-pretrained
SegFormer-B0 model generalizes zero-shot to 
these real-world winter conditions without
any domain-specific fine-tuning, supporting
the claim that the GOB pipeline is
platform-agnostic: the same off-the-shelf
Cityscapes segmenter feeds both the CARLA and
the real-vehicle pipelines with no CARLA- or
vehicle-specific retraining.

\subsection{Real-World Deployment Experiments (RQ4)}
\label{sec:real_world}

We deploy the complete Sim2Real-AD framework on
a full-scale autonomous vehicle to validate
zero-shot sim-to-real transfer without any
real-world training data. All policy parameters
are frozen after simulation training; no
real-world fine-tuning is performed at any stage. We deploy two VLM-guided RL backbones through this pipeline, DriveVLM-RL and VLM-RL, which share the same CLIP-based static (CLG) reward and differ mainly in DriveVLM-RL's added dynamic, attention-gated semantic-risk pathway; comparing them under the same bridge isolates whether the richer semantic reward, not the transfer framework, drives real-world safety. We omit ChatScene-SAC, whose hand-crafted reward is not semantically grounded.

\subsubsection{Platform and Hardware}
\label{sec:platform}

The experiments were conducted on a 
lab-developed full-scale electric Ford E-Transit 
autonomous van equipped with a drive-by-wire 
system. As illustrated in 
Fig.~\ref{fig:real-word-car}(a), the sensor 
configuration includes three LiDAR units, seven 
high-resolution RGB cameras, and a front-facing 
radar, providing complementary spatial, semantic, 
and velocity information. A key point of the 
deployment is that \emph{Sim2Real-AD uses only 
the front-facing monocular camera for policy 
inference}, demonstrating that the framework 
does not require expensive multi-sensor fusion 
for the observation bridge. Onboard computation 
is supported by an NVIDIA RTX A6000 GPU, which 
handles real-time inference for both the policy 
network and the perception stack. For data 
logging, telemetry, and remote monitoring, the 
vehicle is equipped with a NETGEAR Nighthawk 
M6 Pro 5G router, providing high-bandwidth and 
low-latency wireless connectivity during 
experiments.

\begin{figure*}[!t]
    \centering
    \includegraphics[width=1\linewidth]
    {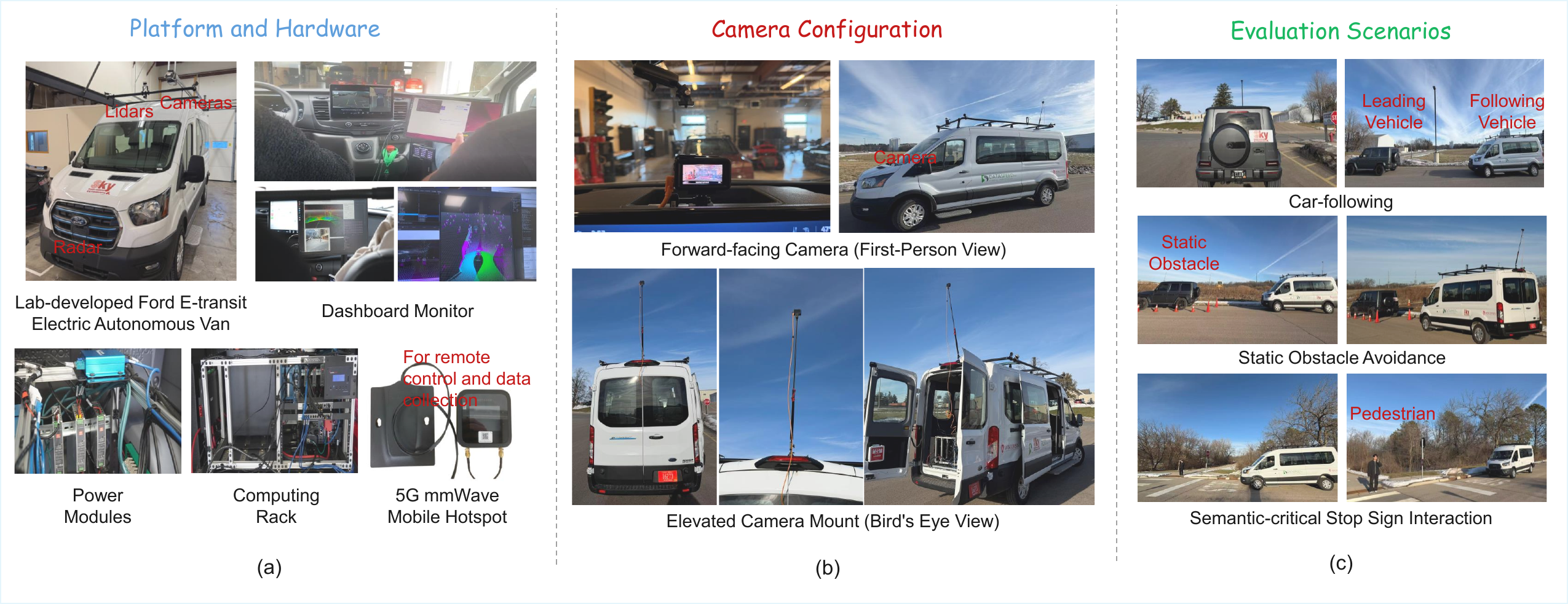}
    \caption{Real-world experimental platform 
    and evaluation scenarios. (a) Lab-developed 
    full-scale electric Ford E-Transit autonomous 
    van platform with drive-by-wire control, 
    multimodal perception sensors, onboard 
    computing, and wireless communication 
    infrastructure. (b) Camera configuration: 
    a forward-facing camera and an elevated 
    camera mount, matching the dual-camera setup 
    used in DriveVLM-RL training for reward 
    computation; only the forward-facing camera 
    is used for policy inference. (c) Real-world 
    driving scenarios used for evaluation: 
    routine car-following, static obstacle 
    avoidance, and semantic-critical stop sign 
    interaction.}
    \label{fig:real-word-car}
\end{figure*}

\subsubsection{Camera Configuration}
DriveVLM-RL's reward computation during 
training relies on two camera views: a 
forward-facing ego-centric view for the static 
CLIP-based pathway and an elevated view for 
the dynamic LVLM pathway, which benefits from 
a wider spatial context. To maintain consistency 
with this training configuration and enable 
shadow-mode reward analysis 
(Section~\ref{sec:deployment_protocol}), we
use two of the platform's cameras as
shown in Fig.~\ref{fig:real-word-car}(b): a
forward-facing camera and an
elevated camera mount. Both are 
calibrated using the checkerboard procedure 
described in Section~\ref{sec:calibration}. 
Critically, the RL policy itself receives only 
the forward-facing view as input through the 
GOB pipeline; the elevated camera is used 
exclusively for reward signal analysis and 
does not participate in the control loop.

\subsubsection{Platform Calibration and Deployment Preparation}
\label{sec:calibration}
A key practical advantage of Sim2Real-AD is its 
minimal setup effort compared to learning-based 
domain adaptation methods~\citep{zhu2017unpaired,ganin2016domain}, 
which typically require thousands of real-world 
images and iterative GPU training. 
Table~\ref{tab:calibration} summarizes the 
platform-specific calibration items together with 
the route preparation needed for real-world deployment. 
The vehicle calibration itself requires approximately
30~min and no training data, while route waypoint
recording adds about 5~min for a given test site. The PID gains for the E-Transit van differ from
the CARLA defaults owing to the vehicle's
greater mass and different throttle response
characteristics. Recalibration requires only
$\sim$10~min of step-response testing on a
straight road, while the remaining calibration
items are obtained through standard camera and
vehicle measurements. This efficiency follows
from the PAM design (Definition~\ref{def:pam}),
which isolates the vehicle-specific mapping and
allows the platform-dependent component to be
updated without modifying the policy.

\begin{table*}[!t]
\centering
\caption{Practical setup for real-world deployment 
on the Ford E-Transit van, including platform calibration 
and route preparation. Vehicle calibration requires 
approximately 30~min; total practical setup time is 
approximately 35~min. No training data are required.}
\label{tab:calibration}
\renewcommand{\arraystretch}{1.25}
\setlength{\tabcolsep}{4pt}
\small
\begin{tabular*}{\textwidth}{@{\extracolsep{\fill}}llll@{}}
\toprule
Parameter & Value & Method & Time \\
\midrule
Front camera field of view 
  & $110^\circ$ FOV 
  & Checkerboard~\citep{zhang2000flexible} 
  & 15\,min \\
Camera height ($h$)              
  & 1.7\,m          
  & Tape measure & 2\,min \\
Camera pitch ($\alpha$)          
  & $0^\circ$        
  & Included above & --- \\
Wheelbase ($L$)                  
  & 3.67\,m          
  & Vehicle manual & 1\,min \\
Max steer angle ($\delta_{\max}$) 
  & $38.5^\circ$    
  & Vehicle manual & 1\,min \\
PID gains $(K_p, K_i, K_d)$     
  & (0.8, 0.1, 0.15) 
  & Step-response test & 10\,min \\
\midrule
\textbf{Vehicle calibration subtotal} & & & \textbf{$\sim$30\,min} \\
\midrule
Route waypoints                  
  & 47 points        
  & One manual drive & 5\,min \\
\midrule
\textbf{Total practical setup time} & & & \textbf{$\sim$35\,min} \\
\bottomrule
\end{tabular*}

\end{table*}

\subsubsection{Deployment Protocol}
\label{sec:deployment_protocol}

The DriveVLM-RL-trained actor network runs at 20~Hz,
generating curvature and desired speed commands
that are translated to steering and throttle
via PAM. Owing to its lightweight architecture,
the policy satisfies real-time constraints and
runs fully on onboard computation. Although PAM maps policy outputs to desired speeds up to $v_{\max }$ = 35 km/h, the safety layer constrains actual vehicle speed to 15 km/h during initial real-world testing. To analyze the behavior of DriveVLM-RL's
dual-pathway reward structure under real-world 
conditions, we additionally run both reward 
pathways in shadow mode during deployment: 
they process live sensor data and produce 
reward signals, but their outputs have no 
effect on vehicle control decisions. This 
shadow-mode analysis is entirely separate from 
the control loop and is consistent with the 
training-deployment decoupling property 
described in Section~\ref{sec:preliminaries}: 
the deployed policy is a standalone lightweight 
network with no VLM inference in the control 
path. Shadow mode serves purely as a diagnostic 
tool to examine the alignment between semantic 
risk estimation and real-world behavioral 
responses, and does not alter the zero-shot 
transfer evaluation in any way.

\subsubsection{Evaluation Scenarios}
\label{sec:scenarios}

We evaluate the Sim2Real-AD framework across
three real-world driving scenarios of
increasing semantic complexity, as illustrated
in Fig.~\ref{fig:real-word-car}(c). As noted above, both DriveVLM-RL and VLM-RL are deployed through the identical Sim2Real-AD pipeline, enabling a controlled comparison of the two reward paradigms under the same bridge.

\begin{itemize}
\item \textbf{S1: Routine Car-Following.}
The agent maintains safe longitudinal control
behind a leading vehicle exhibiting
non-constant speed on an open road, evaluating
basic speed regulation and distance keeping.

\item \textbf{S2: Static Obstacle Avoidance.}
Static obstacles are placed along the driving
route, requiring the agent to detect and safely
maneuver around unexpected obstructions while
maintaining lane-level control.

\item \textbf{S3: Semantic-Critical Stop Sign
Interaction.} The agent must recognize a stop
sign and execute appropriate stopping behavior,
potentially in the presence of pedestrians
near the crosswalk, evaluating semantic
understanding and traffic rule compliance.
\end{itemize}

All experiments were conducted on December~23, 2025: Scenario~1 along Sprocket Drive, Madison, WI, and Scenarios~2 and~3 along Discovery Path, Madison, WI.

\subsubsection{Offline Policy Reaction Test on Real Recordings}
\label{sec:offline_test}

Before the closed-loop trials, we replay the recorded real-world cam0 frames through the deployed DriveVLM-RL checkpoint via GOB, without actuating the vehicle; the policy receives the real per-frame ego speed and straight-ahead waypoints (the sites are straight or lightly curved single-lane segments), with hazard frames labeled by YOLO. Running the actual checkpoint on real frames, this is a fully reproducible reaction check (Table~\ref{tab:offline_test}). The attention gate fires only on real hazards, rising from $0\%$ in routine following to $19\%$ at the static obstacle and $46\%$ in the stop-sign-with-pedestrian scenario, and the policy reacts accordingly: its commanded desired speed is essentially unchanged during routine following but drops by $0.7$~km/h at the obstacle and $2.4$~km/h at the pedestrian on gate-on frames, scaling with hazard severity. We read these as a responsiveness check rather than a quality measure, since open-loop single-frame replay only approximates closed-loop control: the reported magnitudes are raw commands near $v_{\max}$ on a clear road, whereas realized closed-loop speeds are far lower ($\sim$1--3~m/s, Fig.~\ref{fig:exp1_overall_results}), bounded by the 15~km/h safety cap and the lead vehicle's pace. The comparative safety advantage over VLM-RL is established by the closed-loop results (Fig.~\ref{fig:quantitative_safety}), and latency is profiled separately in Section~\ref{sec:realtime} (Table~\ref{tab:latency}).

\begin{table*}[!t]
\centering
\caption{Offline policy reaction test: recorded real-world cam0 frames replayed through the deployed DriveVLM-RL checkpoint via GOB. The attention gate fires selectively on real hazards, and the commanded desired speed (decoded as $(a_2+1)/2 \times v_{\max}$, $v_{\max}=35$~km/h) drops on gate-on frames, scaling with hazard severity. This is a reproducible responsiveness check, not a superiority claim (Fig.~\ref{fig:quantitative_safety}).}
\label{tab:offline_test}
\renewcommand{\arraystretch}{1.25}
\setlength{\tabcolsep}{4pt}
{\small

\begin{tabular*}{\textwidth}{@{\extracolsep{\fill}}lccccc@{}}
\toprule
Scenario & Frames & Gate activation & \multicolumn{2}{c}{Desired speed (km/h)} & $\Delta$ \\
\cmidrule(lr){4-5}
 & & & clear & gate-on & \\
\midrule
S1: Routine car-following   & 163 & 0/163 (0\%)  & 33.7 & n/a  & n/a \\
S2: Static obstacle         & 59  & 11/59 (19\%) & 32.9 & 32.2 & $-0.7$ \\
S3: Stop sign + pedestrian  & 65  & 30/65 (46\%) & 32.4 & 30.0 & $-2.4$ \\
\bottomrule
\end{tabular*}

}
\end{table*}

\begin{figure*}[!t]
    \centering
    \begin{subfigure}[b]{0.48\linewidth}
        \centering
        \includegraphics[width=\linewidth]
        {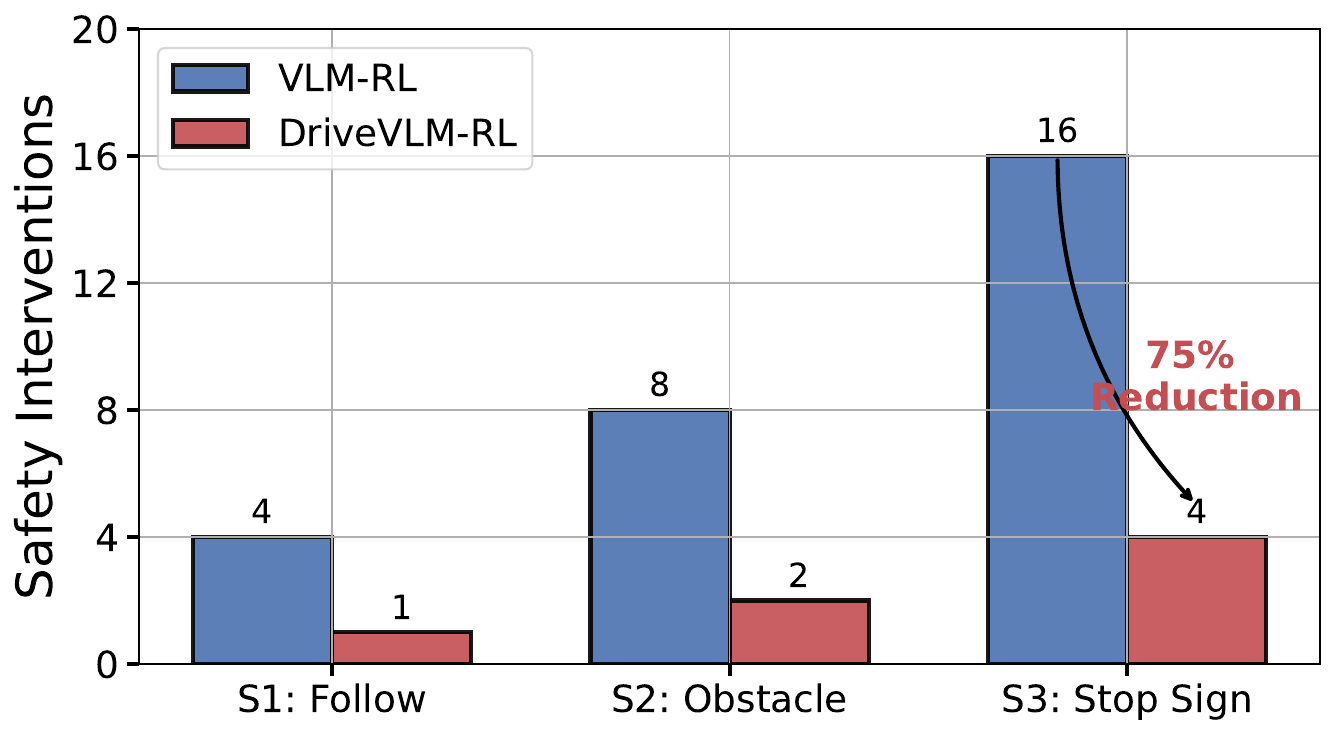}
        \label{fig:safety_a}
    \end{subfigure}
    \hfill
    \begin{subfigure}[b]{0.48\linewidth}
        \centering
        \includegraphics[width=\linewidth]
        {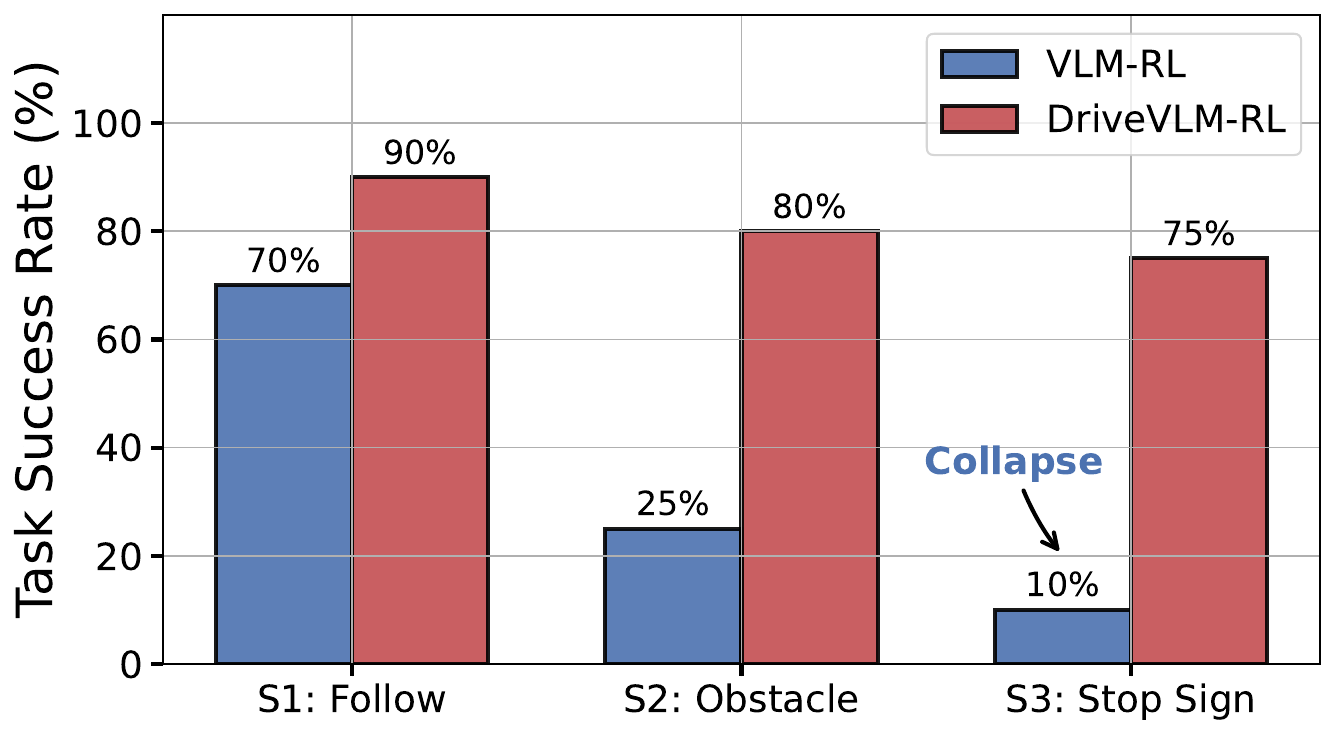}
        \label{fig:safety_b}
    \end{subfigure}
    \caption{Quantitative real-world evaluation
    results over 20 trials per scenario.
    (a) Safety driver interventions across
    three scenarios. (b) Task success rates.
    Both algorithms are deployed through the
    identical Sim2Real-AD pipeline.}
    \label{fig:quantitative_safety}
\end{figure*}

\subsubsection{Quantitative Results}
\label{sec:closedloop}
For each scenario, we perform 20 independent
trials under identical conditions. We
define three trial outcomes:

\begin{itemize}
\item \textbf{Success:} the task is completed 
fully autonomously, i.e., reaching the designated goal region or completing the required interaction without any human intervention and within a reasonable time budget.
\item \textbf{Safety Violation:} a trained 
safety driver intervenes to prevent a collision 
or traffic rule violation; such trials are not 
counted as successes.
\item \textbf{Stagnation:} the vehicle remains 
safe but fails to complete the task within the 
allotted time due to indecision or overly 
conservative behavior.
\end{itemize}

Results are summarized in 
Fig.~\ref{fig:quantitative_safety}. Safety 
violations are reported explicitly through 
safety driver intervention counts; stagnation 
cases are reflected implicitly in the task 
success rate. We emphasize that these on-vehicle scenarios are intentionally isolated and low-speed, and are therefore not directly comparable in difficulty to the dense-traffic, signal-regulated 3000\,m simulation routes; Section~\ref{sec:limitations} explains how the resulting success rates should be interpreted. Given the single platform, single test site, and single day, we present these closed-loop results as a proof-of-concept case study rather than a comprehensive field evaluation.

\begin{figure*}[!t]
    \centering
    \begin{subfigure}[b]{\linewidth}
        \centering
        \includegraphics[width=1\linewidth]{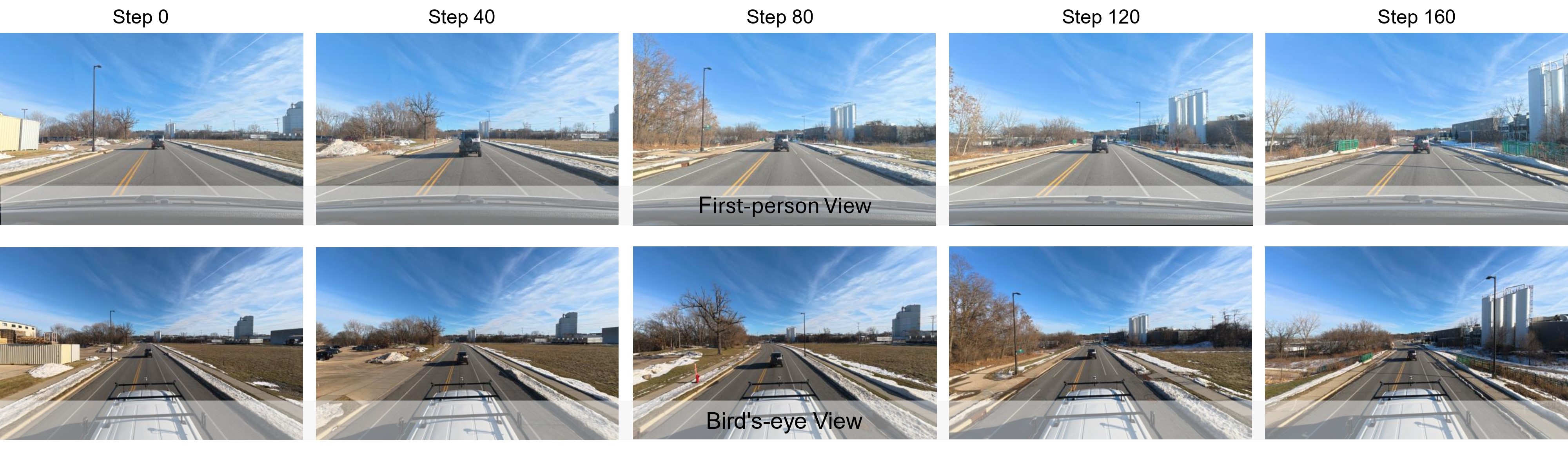}
        \caption{Representative car-following episode: first-person view (top) and bird's-eye view (bottom)}
        \label{fig:case_carfollow}
    \end{subfigure}

    \vspace{0.25cm}

    \begin{subfigure}[b]{0.48\linewidth}
        \centering
        \includegraphics[width=\linewidth]{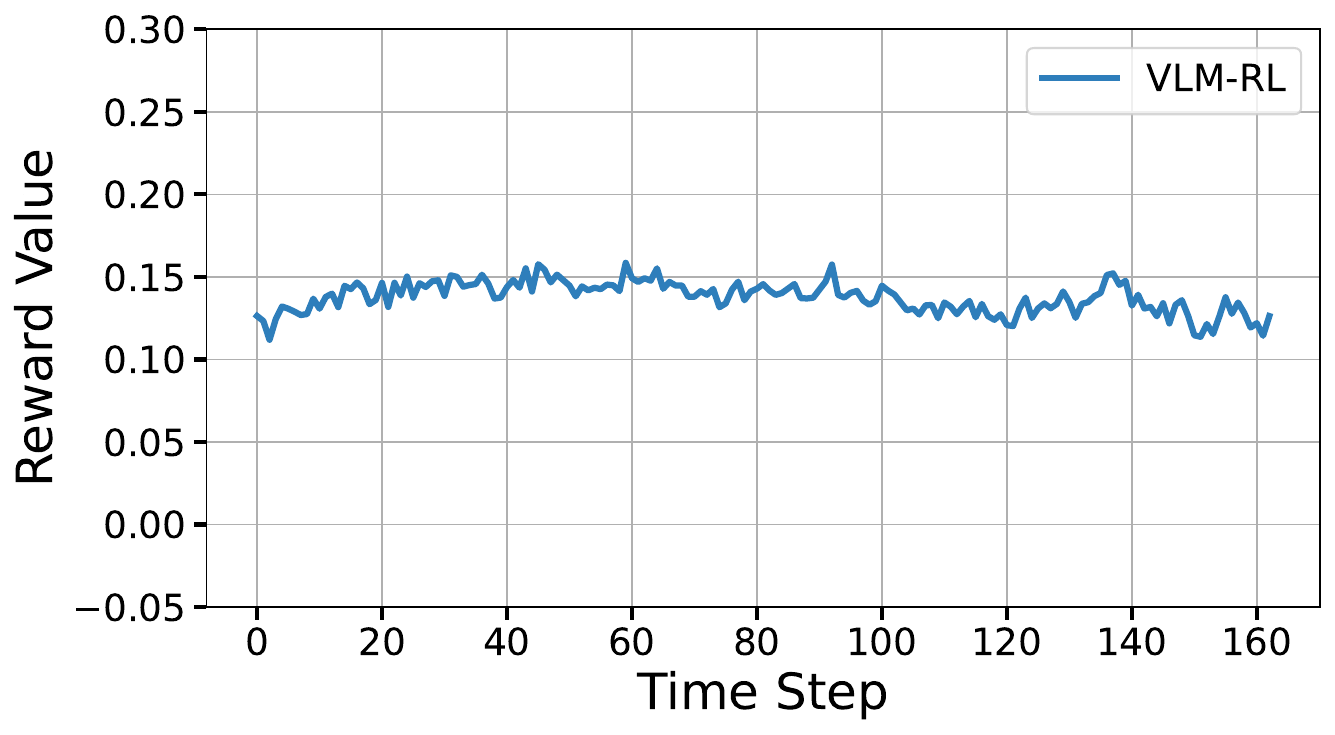}
        \caption{VLM-RL reward}
    \end{subfigure}\hfill
    \begin{subfigure}[b]{0.48\linewidth}
        \centering
        \includegraphics[width=\linewidth]{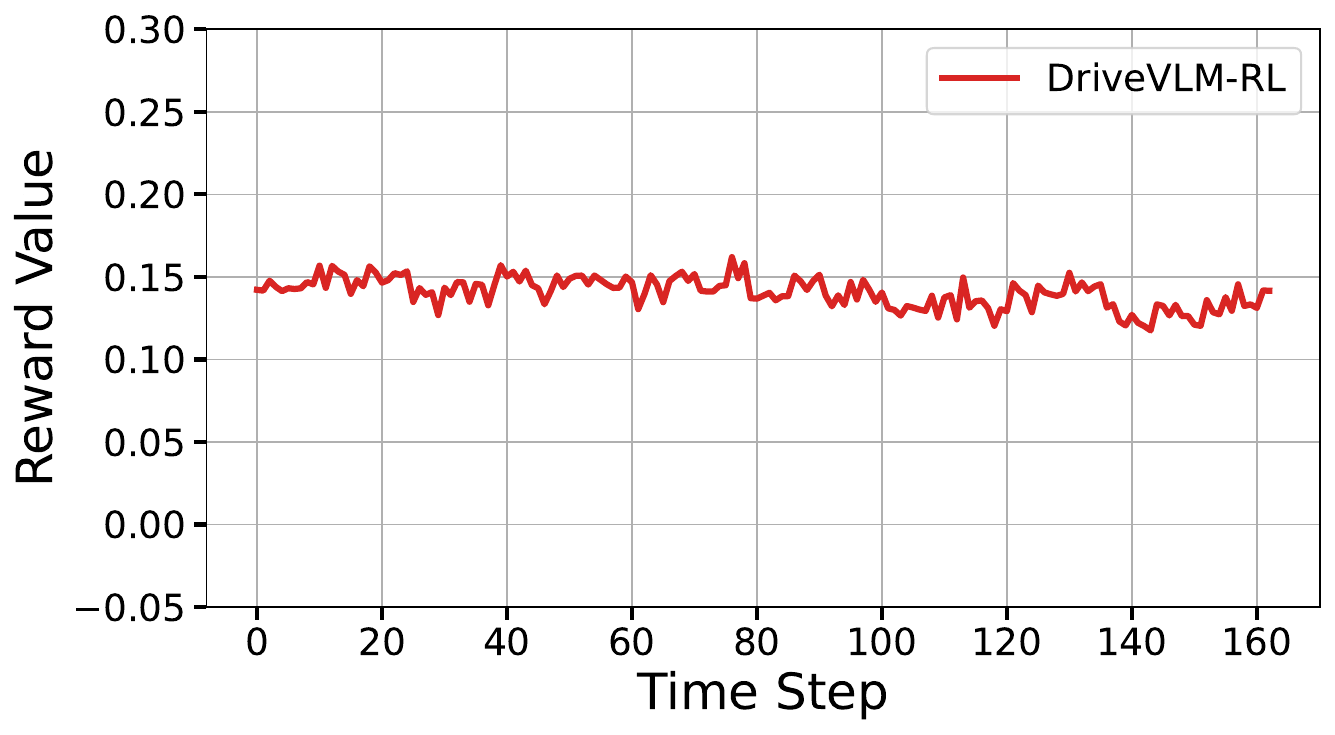}
        \caption{DriveVLM-RL reward}
    \end{subfigure}\\[2mm]
    \begin{subfigure}[b]{0.48\linewidth}
        \centering
        \includegraphics[width=\linewidth]{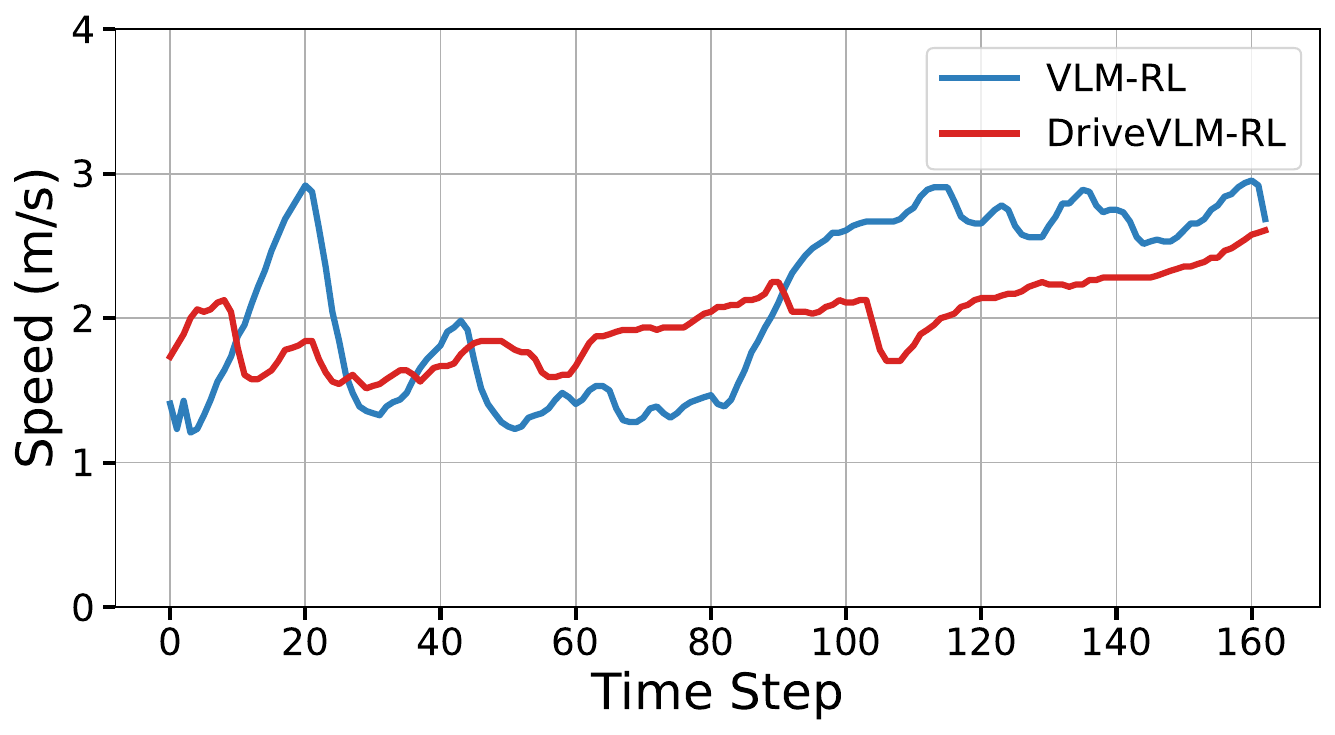}
        \caption{Speed profiles}
    \end{subfigure}\hfill
    \begin{subfigure}[b]{0.48\linewidth}
        \centering
        \includegraphics[width=\linewidth]{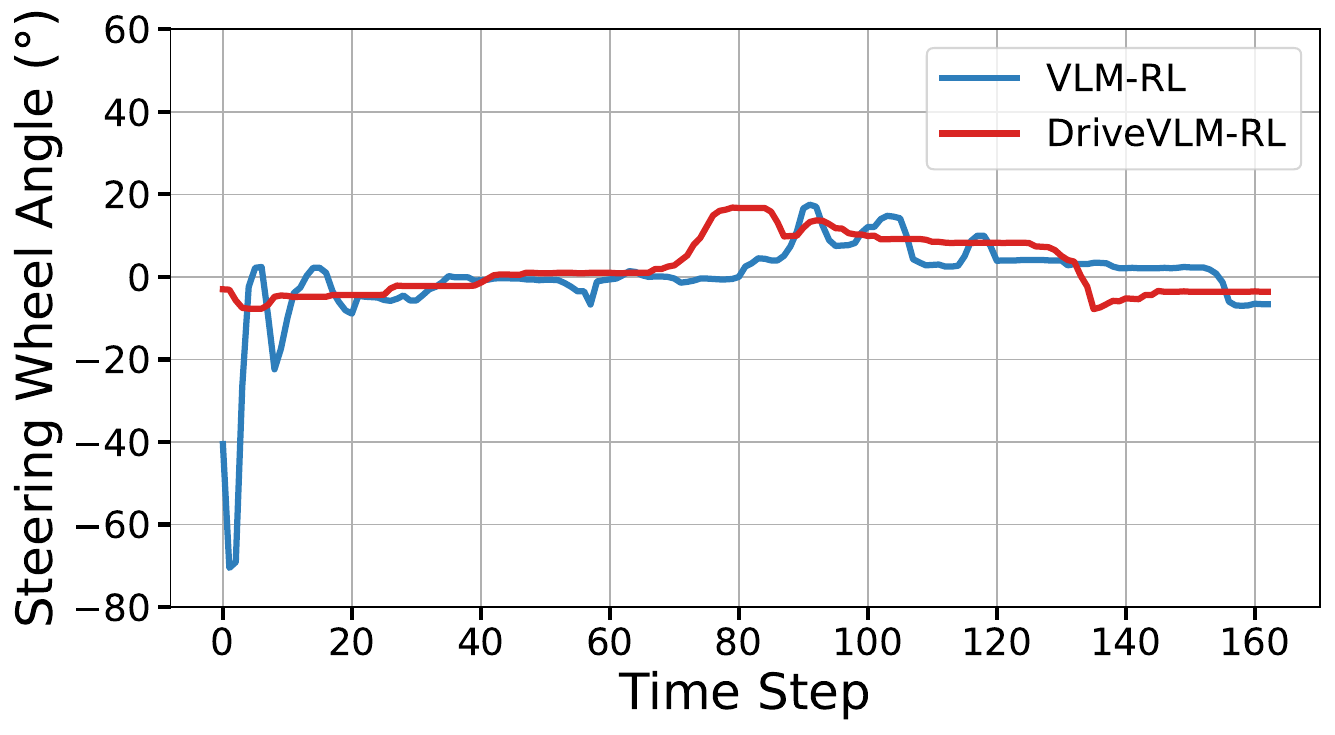}
        \caption{Steering profiles}
    \end{subfigure}

    \caption{Representative S1 (routine car-following) episode. (a) First-person (top, policy input) and bird's-eye (bottom, shadow-mode reward) views; the ego vehicle keeps a consistent following distance without triggering the dynamic pathway. (b,c) VLM-RL and DriveVLM-RL reward signals: both stay stable and positive, with no dynamic intervention in this routine scenario. (d) Speed: DriveVLM-RL regulates speed more smoothly than VLM-RL. (e) Steering: DriveVLM-RL makes more gradual adjustments.}
    \label{fig:exp1_overall_results}
\end{figure*}

Sim2Real-AD enables functional zero-shot
deployment. DriveVLM-RL deployed through Sim2Real-AD
achieves success rates of 90\%, 80\%, and 75\%
across S1--S3 (18, 16, and 15 of 20), without
any real-world training data or fine-tuning.
This is first-hand evidence that the four-module pipeline (GOB, PAM, TPT, RDP) supports closed-loop operation on a full-scale vehicle for these scoped scenarios, though not, on its own, general sim-to-real driving competence. The framework preserves semantic safety structure, not merely driving capability.
The largest difference appears in S3, where VLM-RL
requires safety-driver intervention in 16 of 20 trials
(80\%) versus DriveVLM-RL's 4 (20\%), a 75\% reduction
in critical safety violations; overall, VLM-RL succeeds in
only 14, 5, and 2. The S2 and S3 gaps are large, whereas the
S1 difference (18 vs.\ 14) lies within the binomial noise of
$n=20$ and is comparable. VLM-RL's static single-frame CLIP
reward cannot reliably distinguish a temporary obstacle or
stop-sign context from a leading vehicle, whereas
DriveVLM-RL's dynamic pathway supplies context-dependent
reasoning under real-world visual variability. Because both algorithms run through the
identical pipeline, these gaps isolate the reward design, not
the bridging modules, and match the simulation transfer
ordering, where DriveVLM-RL retained the highest post-transfer
success rate and lowest collision severity
(Table~\ref{tab:transfer_stages}). This pattern holds only within this scoped setting, and the small sample ($n=20$) warrants caution.

\begin{figure*}[!t]
    \centering
    \begin{subfigure}[b]{\linewidth}
        \centering
        \includegraphics[width=1\linewidth]{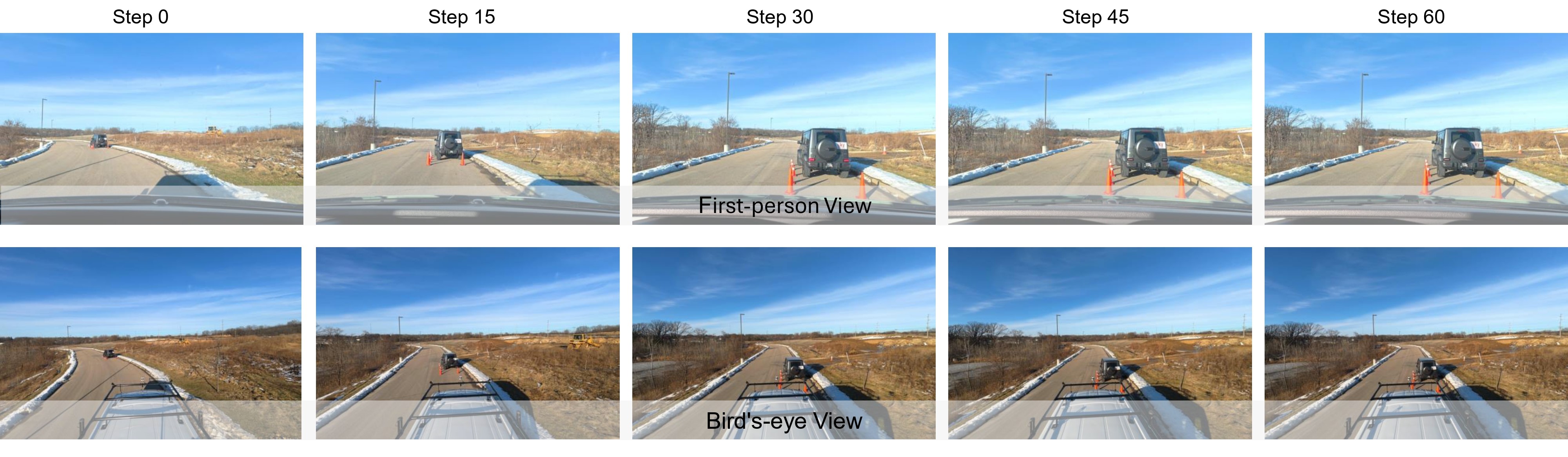}
        \caption{Representative VLM-RL failure case: first-person view (top) and bird's-eye view (bottom)}
        \label{fig:case_obstacle}
    \end{subfigure}

    \vspace{0.25cm}

    \begin{subfigure}[b]{0.48\linewidth}
        \centering
        \includegraphics[width=\linewidth]{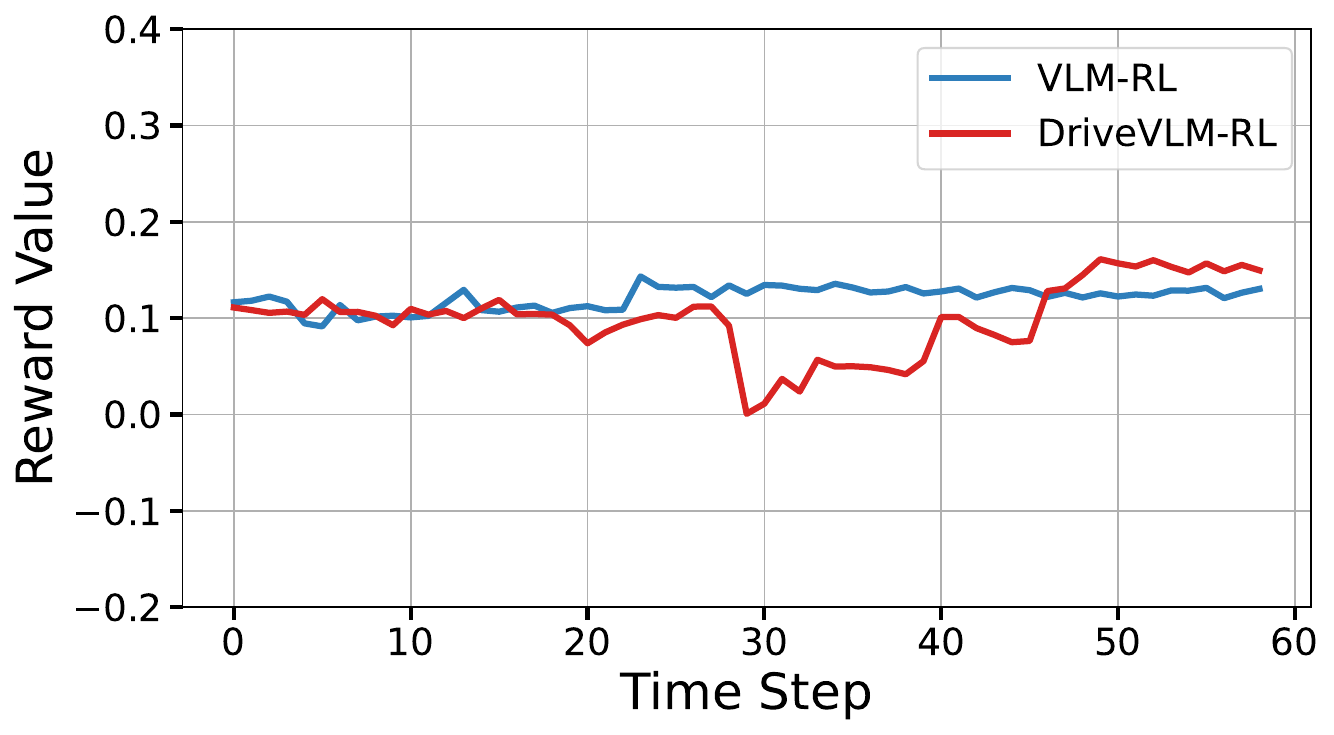}
        \caption{Reward signals}
        \label{fig:exp2_reward}
    \end{subfigure}\hfill
    \begin{subfigure}[b]{0.48\linewidth}
        \centering
        \includegraphics[width=\linewidth]{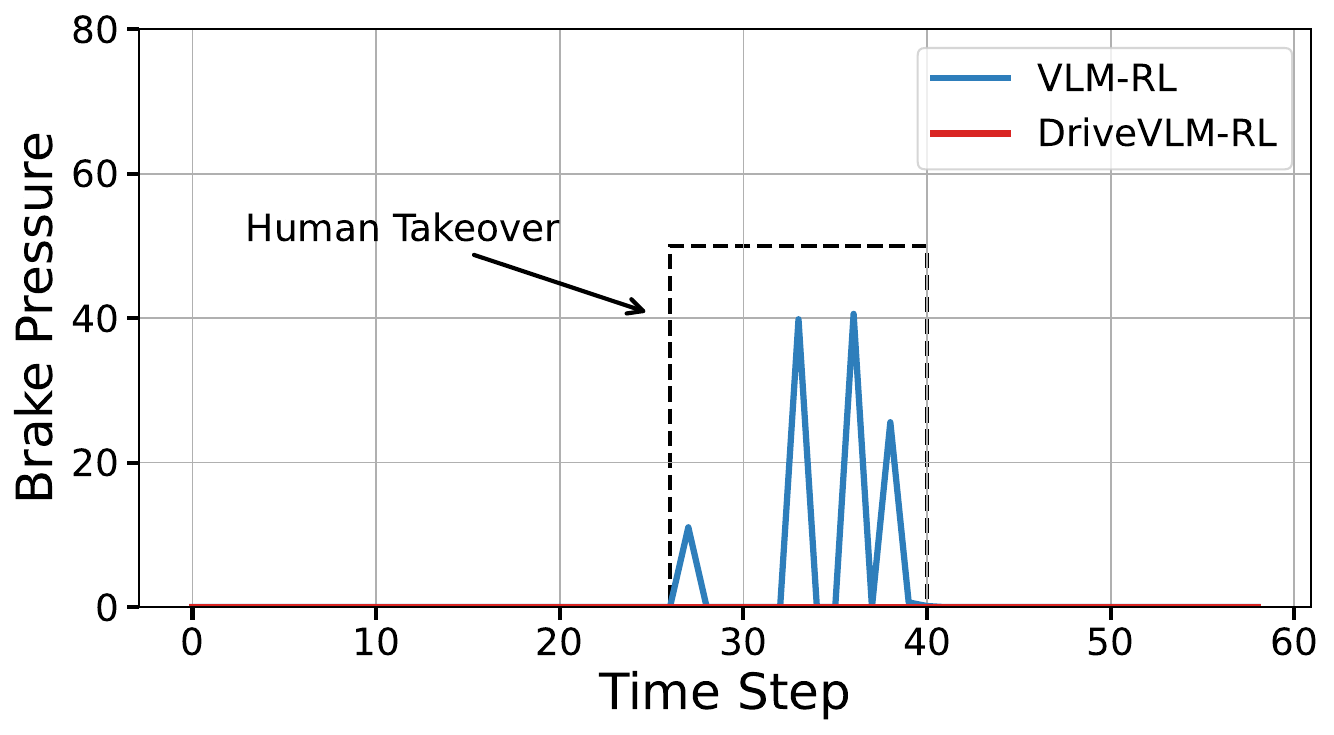}
        \caption{Brake pressure profiles}
        \label{fig:exp2_brake}
    \end{subfigure}\\[2mm]
    \begin{subfigure}[b]{0.48\linewidth}
        \centering
        \includegraphics[width=\linewidth]{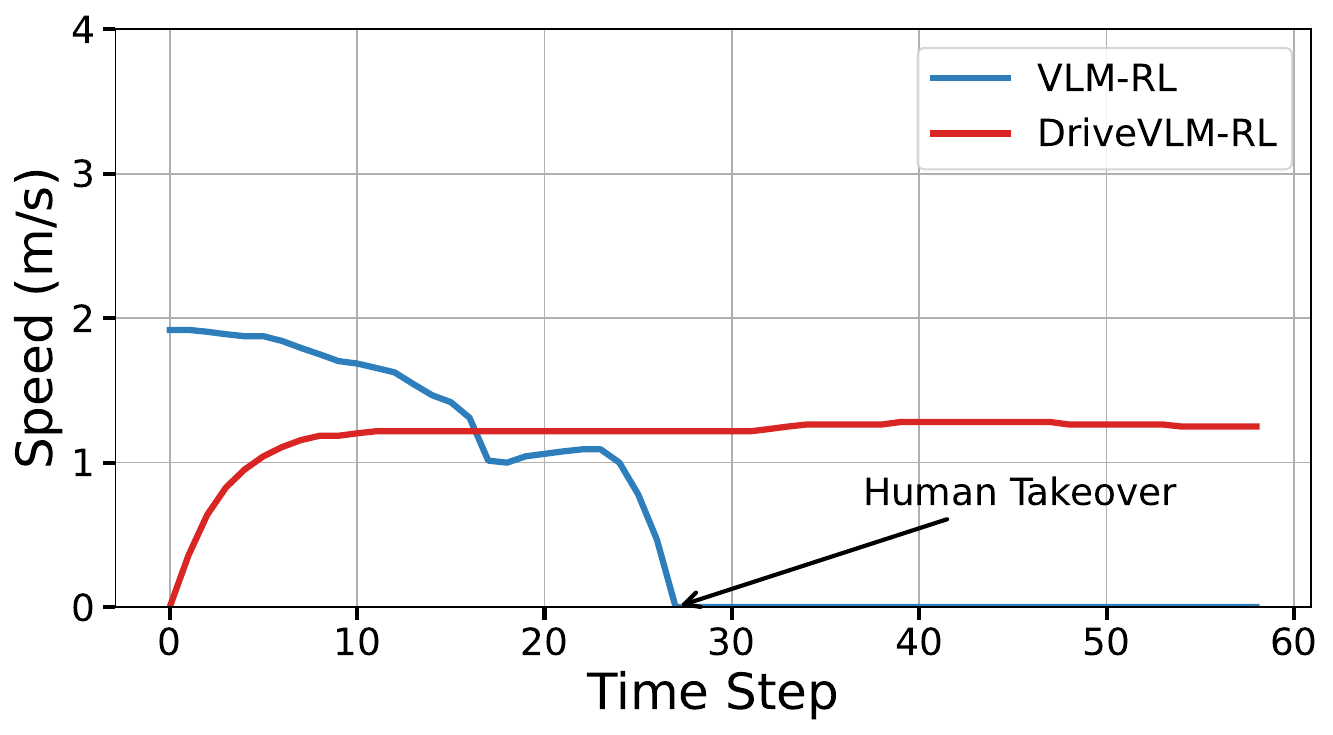}
        \caption{Vehicle speed profiles}
        \label{fig:exp2_speed}
    \end{subfigure}\hfill
    \begin{subfigure}[b]{0.48\linewidth}
        \centering
        \includegraphics[width=\linewidth]{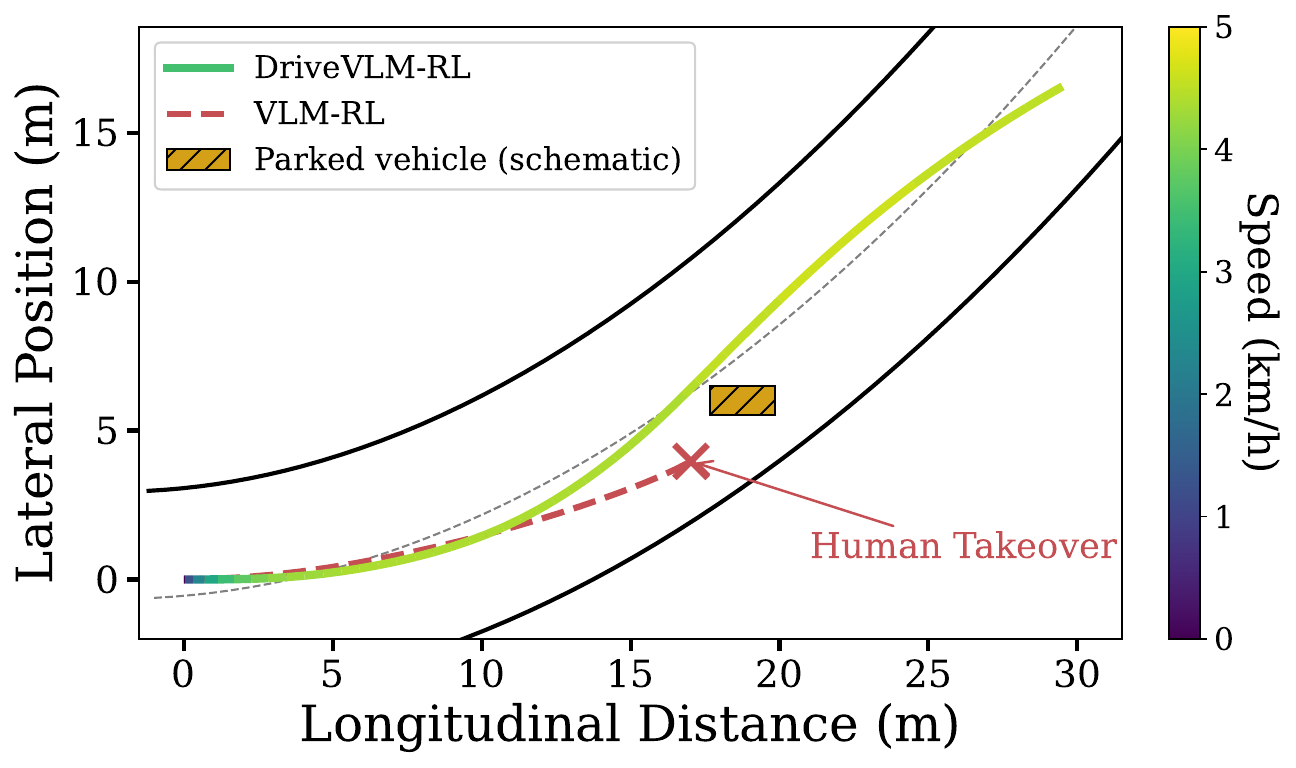}
        \caption{Vehicle trajectory}
        \label{fig:rw_traj}
    \end{subfigure}

    \caption{Representative S2 (static obstacle avoidance) episode. (a) First-person (top, policy input) and bird's-eye (bottom, shadow-mode reward) views; VLM-RL continues forward without a bypass and requires safety-driver intervention. (b) Reward signals: VLM-RL's reward stays flat (no dynamic pathway), whereas DriveVLM-RL's dips when its dynamic pathway activates on the obstacle. (c) Brake: VLM-RL decelerates but does not redirect; DriveVLM-RL decelerates in a controlled way before the bypass. (d) Speed: VLM-RL approaches the obstacle without bypass; DriveVLM-RL preserves forward progress. (e) Vehicle trajectory (on-vehicle logs, ego frame rotated by the initial heading; lane markings and parked-vehicle box are schematic): DriveVLM-RL bypasses the obstacle and continues, whereas VLM-RL stops short and is taken over ($\times$).}
    \label{fig:exp2_overall_results}
\end{figure*}

\subsubsection{Qualitative Case Studies}
\label{sec:closedloop_qual}
To further illustrate how Sim2Real-AD preserves 
algorithm-specific behavioral properties through 
the sim-to-real pipeline, we analyze 
representative behaviors in each scenario.

\textbf{S1: Routine Car-Following.}
Fig.~\ref{fig:case_carfollow} illustrates a 
representative DriveVLM-RL episode from two
camera views: the first-person view
used for policy control, and the bird's-eye view
used for shadow-mode reward analysis.
Throughout the episode, the ego vehicle follows 
a leading vehicle along a roadway without 
abrupt maneuvers or external disturbances. As shown in
Figs.~\ref{fig:exp1_overall_results}(b)--(c),
the reward signal stays stable and positive for both
methods throughout this scenario. The dynamic pathway
does not trigger, since the attentional gate detects
no safety-critical objects within the detection radius,
so DriveVLM-RL's reward is carried entirely by its
static pathway. This is expected: routine
car-following involves only a leading vehicle
and no semantically critical events such as
pedestrians, obstacles, or rule violations,
which are the situations the dynamic pathway
is designed to handle. The resulting control behaviors are compared
in
Figs.~\ref{fig:exp1_overall_results}(d)--(e). 
VLM-RL exhibits noticeable speed oscillations 
and abrupt steering corrections, whereas 
DriveVLM-RL achieves smoother speed profiles 
and more gradual steering adjustments. This 
difference demonstrates that Sim2Real-AD 
successfully transfers the behavioral 
smoothness advantage of DriveVLM-RL's 
dual-pathway design to the real vehicle: in 
routine scenarios, the dynamic pathway 
introduces no unnecessary semantic 
intervention, and the policy's learned 
conservative control style transfers intact 
through the GOB and PAM bridges.

\textbf{S2: Static Obstacle Avoidance.}
Fig.~\ref{fig:case_obstacle} shows a 
representative case from two camera views: 
the first-person view used as policy
input, and the bird's-eye view used for
shadow-mode reward analysis. A parked vehicle 
partially occupies the driving lane and is 
accompanied by traffic cones, indicating a 
temporary obstruction that should be bypassed 
rather than waited for. As shown in
Fig.~\ref{fig:exp2_overall_results}(b), 
VLM-RL's static CLIP-based reward provides 
no spatial disambiguation between a stopped 
vehicle and a temporary obstacle. Without a 
signal to initiate lateral bypass, the policy 
continues forward until the vehicle approaches 
the obstacle dangerously close, necessitating 
safety driver intervention. The brake pressure
profile in
Fig.~\ref{fig:exp2_overall_results}(c) and
speed profile in
Fig.~\ref{fig:exp2_overall_results}(d) are consistent with this: the policy decelerates but does not redirect, resulting in a straight-line approach 
toward the obstacle rather than a bypass
maneuver.
\begin{figure*}[!t]
    \centering
    \begin{subfigure}[b]{\linewidth}
        \centering
        \includegraphics[width=1\linewidth]{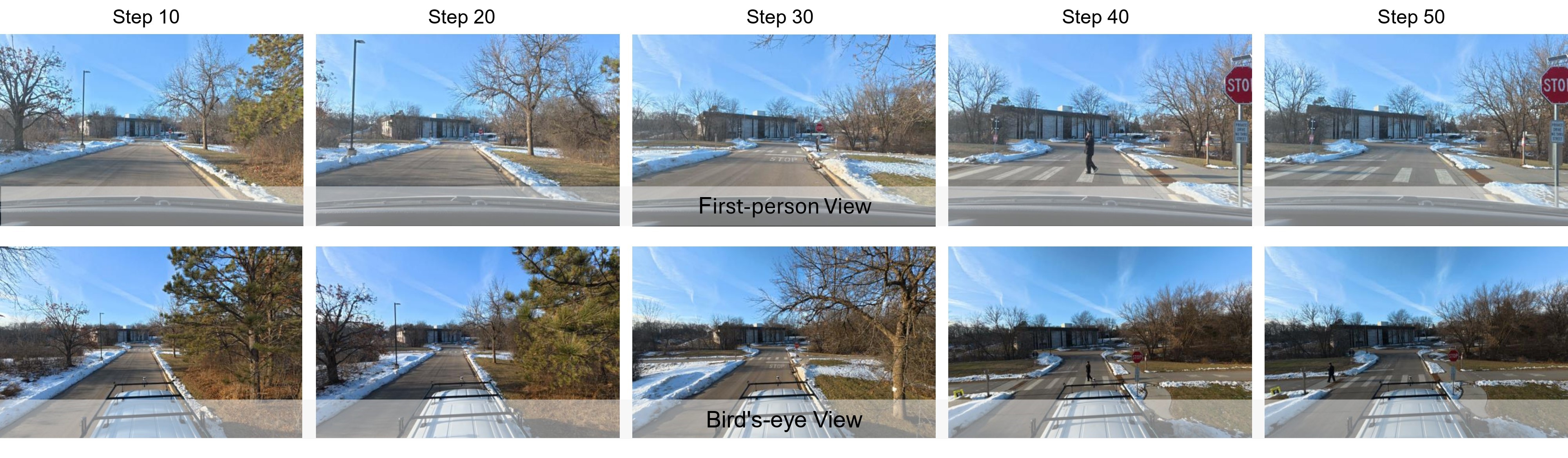}
        \caption{Representative stop-sign episode: first-person view (top) and bird's-eye view (bottom); a pedestrian is at the crosswalk ahead of the stop sign}
        \label{fig:case_stopsign}
    \end{subfigure}

    \vspace{0.25cm}

    \begin{subfigure}[b]{0.48\linewidth}
        \centering
        \includegraphics[width=\linewidth]{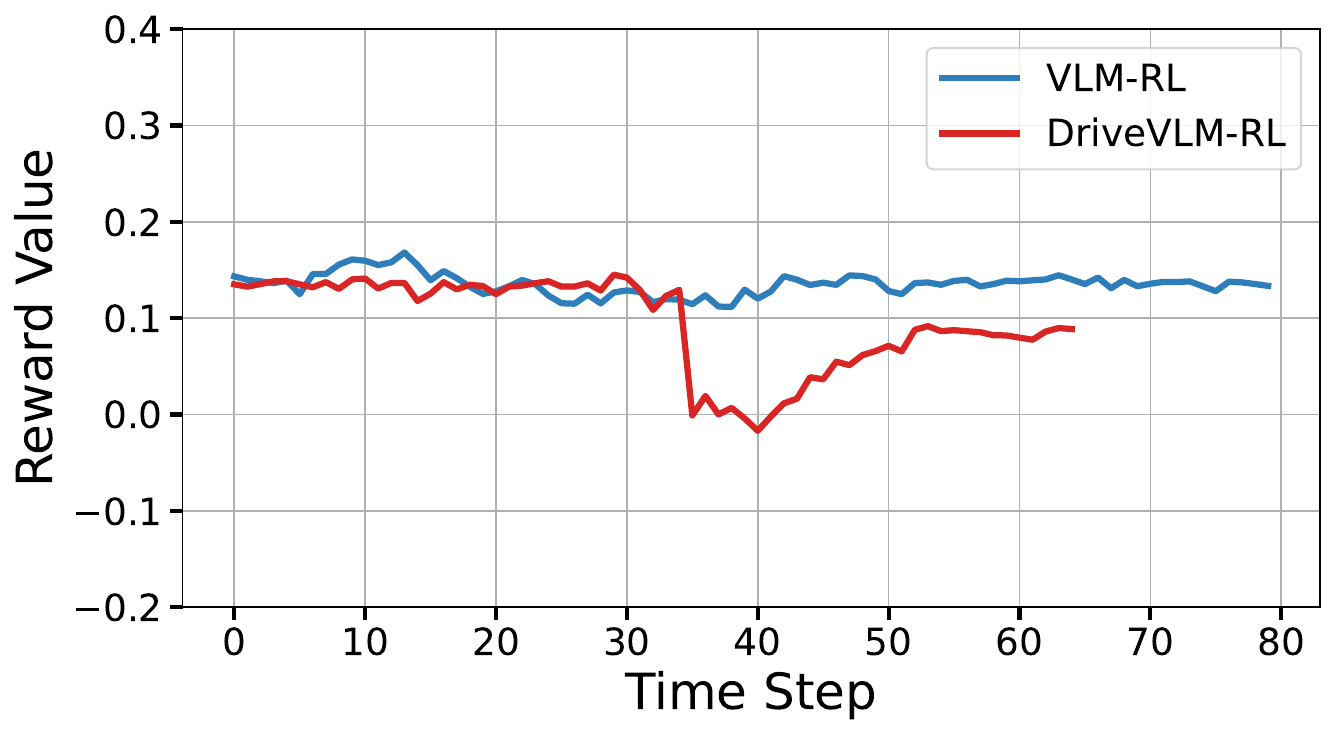}
        \caption{Reward signals}
        \label{fig:exp3_reward}
    \end{subfigure}\hfill
    \begin{subfigure}[b]{0.48\linewidth}
        \centering
        \includegraphics[width=\linewidth]{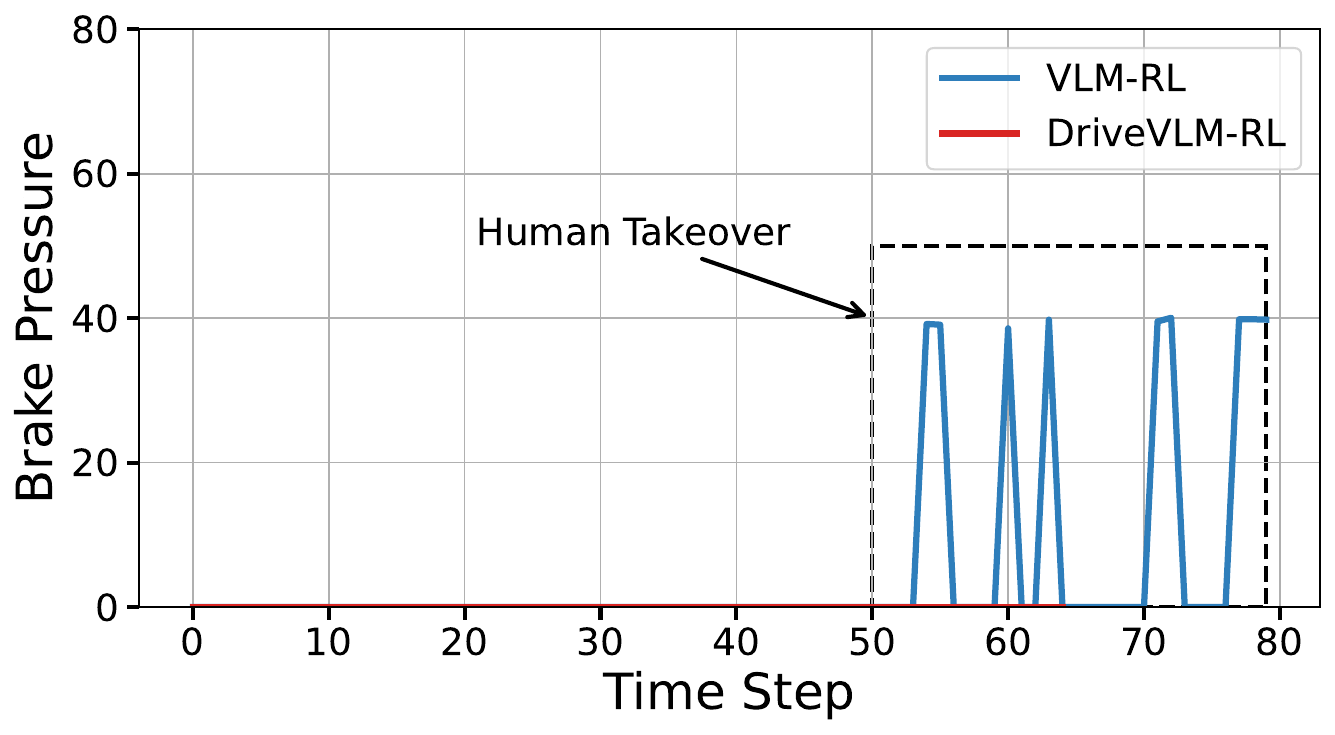}
        \caption{Brake pressure profiles}
        \label{fig:exp3_brake}
    \end{subfigure}\\[2mm]
    \begin{subfigure}[b]{0.48\linewidth}
        \centering
        \includegraphics[width=\linewidth]{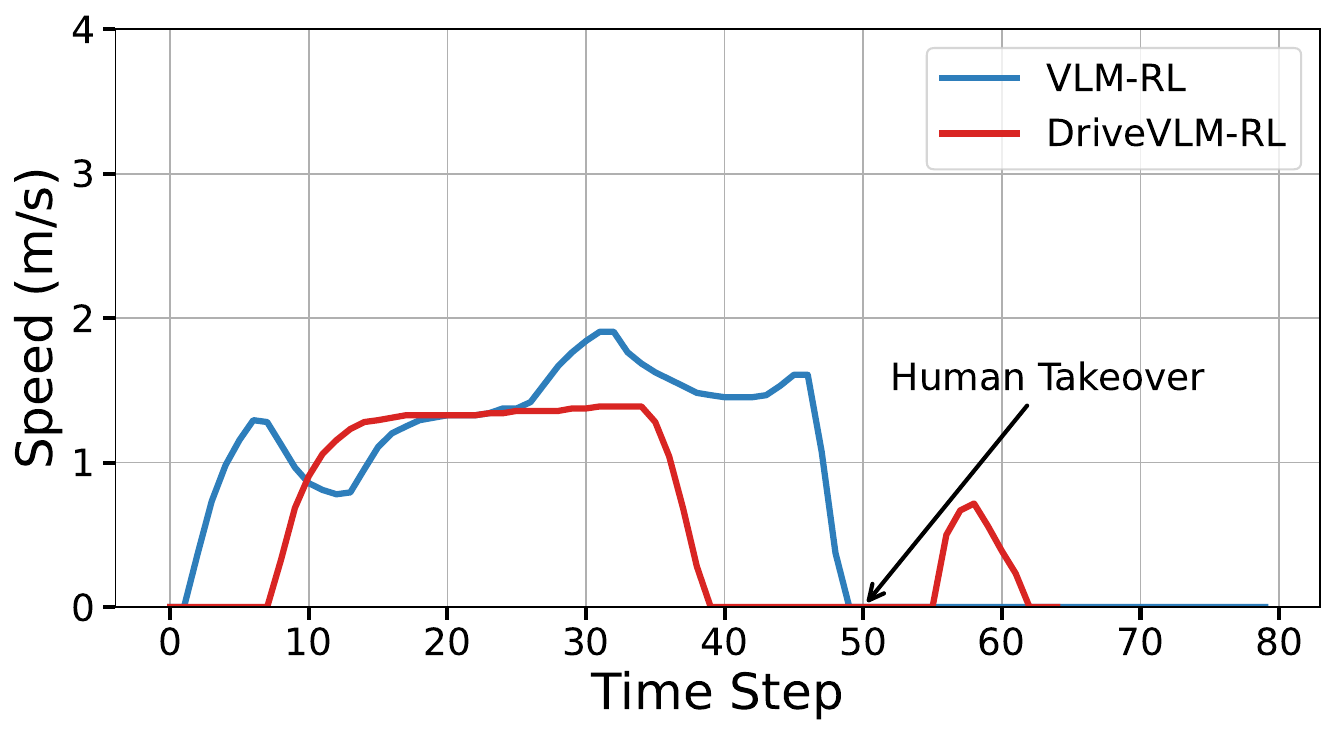}
        \caption{Vehicle speed profiles}
        \label{fig:exp3_speed}
    \end{subfigure}\hfill
    \begin{subfigure}[b]{0.48\linewidth}
        \centering
        \includegraphics[width=\linewidth]{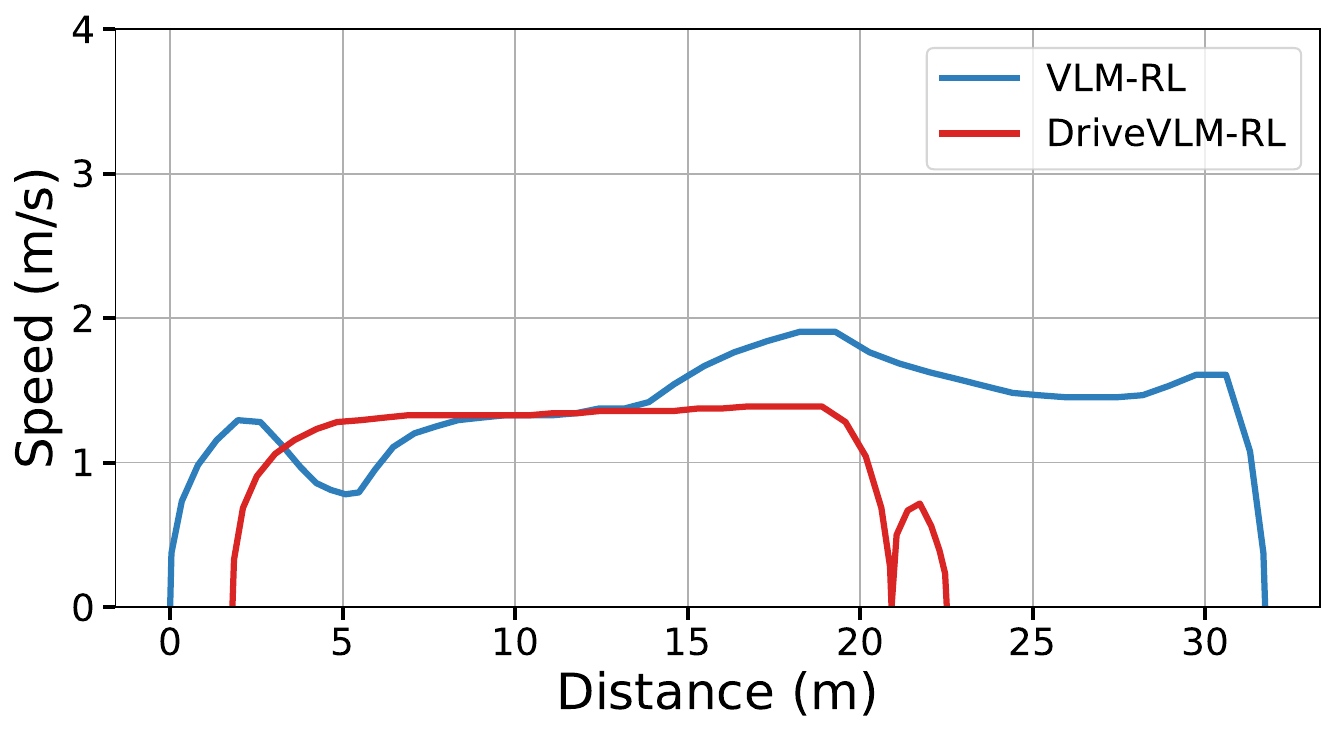}
        \caption{Distance and speed}
        \label{fig:rw_distspeed}
    \end{subfigure}

    \caption{Representative S3 (stop-sign interaction with a pedestrian) episode. (a) First-person (top, policy input) and bird's-eye (bottom, shadow-mode reward) views; VLM-RL fails to yield and requires safety-driver intervention, whereas DriveVLM-RL stops smoothly before the crosswalk. (b) Reward signals: VLM-RL's reward stays flat (no dynamic pathway), whereas DriveVLM-RL's turns negative when its dynamic pathway activates on the pedestrian. (c) Brake: VLM-RL brakes abruptly only at takeover; DriveVLM-RL decelerates smoothly and early. (d) Speed: VLM-RL holds speed until intervention; DriveVLM-RL decelerates to a complete stop before the crosswalk. (e) Distance and speed (on-vehicle logs): DriveVLM-RL increases clearance to the pedestrian while stopping before the crosswalk, whereas VLM-RL fails to yield.}
    \label{fig:exp3_overall}
\end{figure*}
In contrast, DriveVLM-RL successfully completes
the task. As Fig.~\ref{fig:exp2_overall_results}(b) shows, its reward dips when the obstacle enters the critical detection region: the attentional gate activates and triggers the dynamic pathway, whose LVLM selects the language goal ``An obstacle is on the road'' and introduces a negative reward term that penalizes continued forward approach and drives the bypass decision, whereas VLM-RL's reward stays flat. The resulting trajectory in Fig.~\ref{fig:exp2_overall_results}(e) shows DriveVLM-RL completing the maneuver past the obstacle and continuing without any safety-driver intervention, whereas VLM-RL stops short and is taken over. This case study illustrates how Sim2Real-AD transfers DriveVLM-RL's
context-dependent semantic reasoning to the 
real vehicle. The dynamic pathway's ability to 
detect safety-critical scenes and shift reward 
polarity, i.e., preserved intact through the GOB 
and PAM bridges, is precisely what enables 
correct disambiguation in this scenario, a 
capability that static reward paradigms cannot 
replicate.

\textbf{S3: Semantic-Critical Stop Sign 
Interaction.}
Fig.~\ref{fig:case_stopsign} shows a 
representative episode from two camera views: 
the first-person view used as policy
input, and the bird's-eye view for
shadow-mode reward analysis. As the ego vehicle 
approaches a stop-controlled intersection, a 
pedestrian enters the crosswalk region, creating 
a semantically critical situation that cannot 
be resolved through geometric cues alone. As shown in
Fig.~\ref{fig:exp3_overall}(b), VLM-RL's
static reward fails to encode the stop sign 
rule and pedestrian right-of-way, providing 
no signal to initiate early deceleration. The
vehicle continues forward without yielding,
approaching the pedestrian dangerously close and
necessitating safety-driver intervention. This 
is reflected in the brake pressure profile in
Fig.~\ref{fig:exp3_overall}(c) and speed
profile in
Fig.~\ref{fig:exp3_overall}(d), where abrupt
braking spikes coincide with safety driver
takeover.
In contrast, as shown in
Fig.~\ref{fig:exp3_overall}(b), DriveVLM-RL
selectively activates the dynamic pathway when 
the attentional gate detects the
safety-critical event. When the gate activates, the LVLM reasons over sequential visual observations and selects the description ``A pedestrian is crossing the road ahead,'' which drives the dynamic reward sharply negative and shifts the combined reward polarity to penalize continued forward motion. This
triggers an early, smooth deceleration, as illustrated by Fig.~\ref{fig:exp3_overall}(e), which
shows DriveVLM-RL maintaining increasing 
clearance distance to the pedestrian while 
progressively reducing speed to a complete 
stop before the crosswalk, without any
safety driver intervention. This case study illustrates how Sim2Real-AD transfers DriveVLM-RL's
multi-frame semantic reasoning capability to
the real vehicle. The dynamic pathway's ability 
to aggregate sequential observations, apply 
semantic filtering, and select among candidate 
language descriptions (all learned 
exclusively in CARLA simulation) translates 
directly into correct traffic rule compliance 
under real-world conditions through the GOB 
and PAM bridges. This is the most demanding 
validation of Sim2Real-AD's transfer fidelity: 
the policy must not only navigate correctly 
but interpret symbolic traffic semantics and 
reason about vulnerable road users, a 
capability that static reward paradigms cannot 
encode.

\subsubsection{Real-Time Feasibility}
\label{sec:realtime}

Any sim-to-real deployment framework must let the transferred policy meet real-time control requirements on onboard hardware.
Table~\ref{tab:latency} profiles the onboard compute pipeline of Sim2Real-AD. The measured compute latency of $26.8 \pm 1.5$~ms satisfies the 20~Hz control rate with a 23.2~ms per-cycle margin (46.4\% of the 50~ms budget remaining). GPU work, namely SegFormer-B0 segmentation (6.2~ms) and policy inference (2.0~ms), accounts for only about 8~ms; the dominant costs are CPU-side image preprocessing (8.4~ms) and IPM/BEV construction (10.1~ms), which are straightforward to optimize and leave substantial headroom.
\begin{table*}[!t]
\centering
\caption{Per-stage onboard compute latency over 300 frames; sensor capture and CAN-bus transmission are platform I/O and excluded. Control budget: 50~ms at 20~Hz.}
\label{tab:latency}
\renewcommand{\arraystretch}{1.25}
\small
\begin{tabular*}{\textwidth}{@{\extracolsep{\fill}}lcc@{}}
\toprule
Pipeline Stage & Latency (ms) 
  & Budget Used (\%) \\
\midrule
Image preprocessing (resize, normalize)
  & $8.4 \pm 0.4$ & 16.8 \\
Semantic segmentation (SegFormer-B0)
  & $6.2 \pm 0.3$ & 12.4 \\
IPM projection + BEV construction (GOB)
  & $10.1 \pm 0.9$ & 20.3 \\
Policy inference (SAC forward)
  & $2.0 \pm 0.2$ & 4.0 \\
PAM (bicycle model + PID)
  & $<0.1$ & $<0.2$ \\
\midrule
\textbf{Total (onboard compute)}
  & $\mathbf{26.8 \pm 1.5}$
  & \textbf{53.6} \\
\bottomrule
\end{tabular*}

\end{table*}
\begin{figure*}[!t]
    \centering
    \begin{subfigure}[b]{0.48\linewidth}
        \centering
        \includegraphics[width=\linewidth]
        {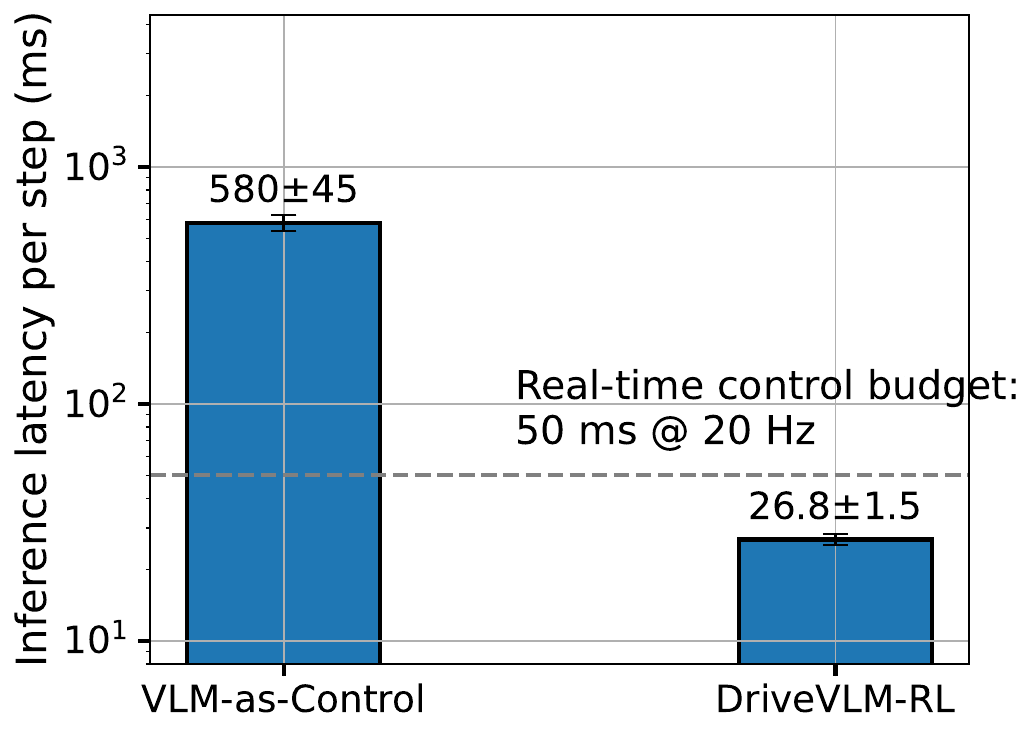}
        \caption{End-to-end inference latency
        comparison}
        \label{fig:latency}
    \end{subfigure}
    \hfill
    \begin{subfigure}[b]{0.48\linewidth}
        \centering
        \includegraphics[width=\linewidth]
        {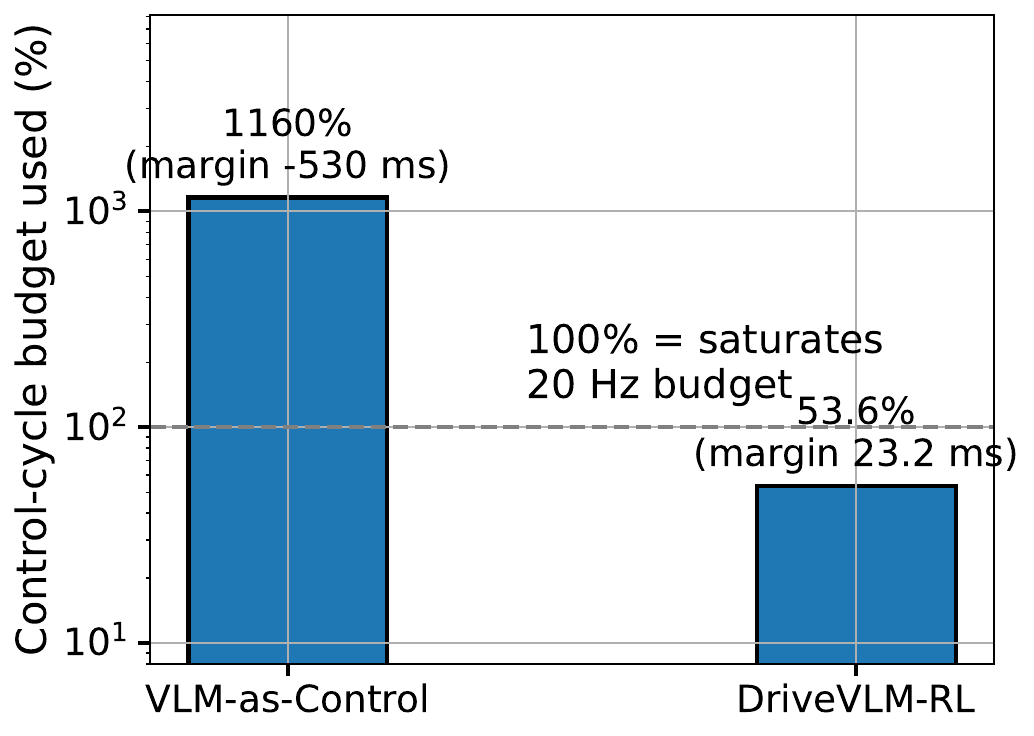}
        \caption{Control budget utilization}
        \label{fig:utilization}
    \end{subfigure}
    \caption{Real-time feasibility on the Ford E-Transit van onboard GPU. Sim2Real-AD satisfies the 20 Hz control budget with an average onboard compute latency of 26.8 ms, whereas direct VLM-as-control inference is far beyond the real-time limit.}
    \label{fig:realtime_performance}
\end{figure*}
To contextualize this result, we benchmark
against LeapAD \citep{mei2024continuously}, a representative
VLM-as-Control approach, comparing computational
efficiency rather than safety. Rather than deploy the
full LeapAD, we map camera observations to high-level
meta-actions (e.g., accelerate, decelerate, idle) using
Qwen3-VL, translated into low-level commands through a
predefined action library. Both pipelines run on the same
onboard GPU, and reported latencies cover end-to-end
perception and inference (excluding sensor acquisition
and CAN communication). As shown in
Fig.~\ref{fig:realtime_performance}(a), a 
single Qwen3-VL forward pass incurs 
$580 \pm 45$~ms per step, limiting the 
achievable control frequency to 1--2~Hz and 
consuming 1160\% of the 20~Hz budget, 
making real-time closed-loop operation 
infeasible. In contrast, Sim2Real-AD's 
deployed pipeline requires 53.6\% of 
the budget, as illustrated in 
Fig.~\ref{fig:realtime_performance}(b). The 
deployed system also occupies approximately 
0.4~GB of GPU memory for model weights and activations (GOB backbone + SAC policy), 
compared to 24.5~GB required to load
Qwen3-VL under FP16 precision. This order-of-magnitude smaller memory and compute footprint translates into lower onboard power draw, especially valuable on battery-electric platforms, where energy spent on computation trades directly against driving range. These results confirm that Sim2Real-AD's training-deployment decoupling, with all VLM and LVLM components confined to offline training, is not merely a design convenience but a practical necessity for real-time vehicle control on standard onboard hardware (Section~\ref{sec:overview}).

\section{Discussion and Limitations}
\label{sec:limitations}

We additionally evaluated cross-town generalization: trained on Town~2 and tested on the unseen Towns~1,~3,~4, and~5, both the original DriveVLM-RL policy and its Sim2Real-AD counterpart collapse to near-floor success (SR $0.00$--$0.10$, per-town differences within run-to-run noise). The bottleneck is the policy's exposure to a \emph{single} training town, not the observation- or action-space transfer introduced by Sim2Real-AD. This matches our central principle: the gap to a new environment separates into a \emph{sensing-and-dynamics} domain gap (real camera and actuators vs.\ simulator), which GOB and PAM close by re-projecting inputs and outputs onto the simulator-trained manifold, and an orthogonal \emph{task-and-geometry} gap (novel topology, traffic density, signals), which the transfer modules do not address. Cross-town evaluation isolates the latter and collapses, whereas the on-vehicle scenarios keep it small while the sensing-and-dynamics domain gap is large, precisely the regime Sim2Real-AD targets. The real-world success rates should be read in this light: the deployment scenarios use in-support road geometries and are deliberately simpler than the dense, signal-regulated 3000\,m simulation routes, so the high success rates are evidence of \emph{sensing-and-dynamics domain transfer}, not of generalization to geometrically novel or higher-complexity environments. Closing the latter would require diversifying the training distribution (multi-town or multi-route training, or domain randomization over topology and traffic density), which is complementary to the transfer mechanisms studied here and left to future work.

\section{Conclusion}
\label{sec:conclusion}

This paper addressed how to deploy simulation-trained, foundation-model-guided driving policies on physical vehicles. Its central principle is that the simulation-to-reality gap decomposes into a sensing-and-dynamics domain gap and an orthogonal task-and-geometry gap, the former closable without any real-world policy training. We realized this principle as \textbf{Sim2Real-AD}, a modular framework that bridges the observation (GOB) and action (PAM) spaces, stabilizes transfer with a two-phase curriculum (TPT), and runs in real time (RDP). It transfers a CARLA-trained policy to a full-scale Ford E-Transit van with only $\sim$30 minutes of calibration and no real-world RL data, backed by a formal transfer guarantee that bounds the deployment gap by three independently controllable error terms. Simulation experiments confirmed transfer across reward paradigms while preserving the semantically grounded reward's safety advantage and validating each module; the central real-world demonstration is zero-shot closed-loop deployment across three scenarios, as a proof-of-concept case study. The framework is reward-agnostic, validated across ChatScene-SAC, VLM-RL, and DriveVLM-RL. By deploying lightweight, energy-efficient driving policies zero-shot on electrified vehicles, Sim2Real-AD offers a practical step toward intelligent and sustainable transportation.

Several directions remain open for future
work. The 15~km/h speed limit is
safety-conservative; higher speeds require
tighter PID calibration and possibly
multi-camera sensing for longer-horizon BEV
coverage, reducing the PAM and GOB error terms.
The forward-only monocular setup limits GOB
quality at sharp turns and wide intersections,
which surround-view sensing would improve. GPS
route dependency could be removed via online map
matching or vision-based lane extraction.
Finally, incorporating a few real-world rollouts
into Phase~2 (analogous to Rapid Motor Adaptation
\citep{kumar2021rma}) could reduce the residual
observation-distribution gap and further close the
zero-shot performance ceiling. More broadly, this decomposition principle is not specific to driving: it offers a general, theoretically grounded route for moving simulation-trained, foundation-model-guided policies onto physical platforms across robotics and embodied control, where real-world data is costly or unsafe.

\section*{Acknowledgment}
This work was supported by the University of Wisconsin-Madison's Center for Connected and Automated Transportation (CCAT), a part of the larger CCAT consortium, a USDOT Region 5 University Transportation Center funded by the U.S. Department of Transportation, Award \#69A3552348305. The contents of this paper reflect the views of the authors, who are responsible for the facts and the accuracy of the data presented herein, and do not necessarily reflect the official views or policies of the sponsoring organization.

\clearpage
\appendix
\setcounter{lemma}{0}
\setcounter{theorem}{0}
\setcounter{corollary}{0}
\setcounter{proposition}{0}
\setcounter{remark}{0}
 
 \section{Pseudocode for Training and Deployment}
\label{app:algorithms}

This appendix provides complete pseudocode for the 
Two-Phase Progressive Training procedure 
(Algorithm~\ref{alg:tpt}) and the Real-Time 
Deployment Pipeline (Algorithm~\ref{alg:rdp}) of
Sim2Real-AD.

\begin{algorithm}[H]
\caption{Two-Phase Progressive Training (TPT)}
\label{alg:tpt}
\begin{algorithmic}[1]

\Require
  DriveVLM-RL reward oracle $\mathcal{R}(\cdot)$~\citep{huang2026drivevlmrl},
  GOB pipeline $\mathcal{G}$ (SegFormer-B0 + IPM, 
  calibrated with $\mathbf{K}, h, \alpha, \beta$),
  PAM limits $\kappa_{\max},\, v_{\max}$,
  steps $T_1 = 1\times10^6$, $T_2 = 5\times10^5$,
  SAC hyperparameters ($\lambda,\tau,B,\mathcal{D}$),
  reward interval $\Delta$, warmup $N_{\text{warmup}}$

\medskip
\State \textbf{// Phase~1: Action-Space Adaptation}
\State Initialize policy $\pi_\phi$, Q-functions 
       $Q_\theta$, target $\theta^{-} \leftarrow 
       \theta$,\; $N_{\text{ready}} \leftarrow 0$

\For{$t = 1, 2, \ldots, T_1$}

  \State Observe $o_t = (\hat{o}_t^{\mathrm{sim}},\, 
         o_t^{\text{cam}},\, s_t,\, w_t)$ from CARLA
  \State $(\hat{a}_{1,t},\, \hat{a}_{2,t}) \sim 
         \pi_\phi(\cdot \mid o_t)$
  \State Decode PAM:\;
         $\kappa_t \leftarrow 
         \hat{a}_{1,t}\,\kappa_{\max}$,\quad
         $v_t^d \leftarrow 
         \tfrac{\hat{a}_{2,t}+1}{2}\,v_{\max}$
  \State Execute $(\kappa_t, v_t^d)$ in CARLA via 
         bicycle-model steering and simulated PID 
         speed tracking; observe $o_{t+1}$
  \State Store $(o_t,\, \hat{a}_t,\, o_{t+1},\;
         r_t \leftarrow \texttt{NaN},\;
         \texttt{ready} \leftarrow 0)$ in $\mathcal{D}$

  \If{$t \bmod \Delta = 0$}
    \State Sample mini-batch from $\mathcal{D}$ 
           where $\texttt{ready}=0$
    \State Annotate each transition with 
           $R_{\text{final}} \leftarrow 
           \mathcal{R}(o_i, o_{i+1})$;\;
           set $\texttt{ready} \leftarrow 1$,\;
           $N_{\text{ready}} \mathrel{+}= 1$
  \EndIf

  \If{$N_{\text{ready}} \geq N_{\text{warmup}}$}
    \State Sample mini-batch 
           $\{(o_i, \hat{a}_i, r_i, o_{i+1})\}$
           from $\mathcal{D}$ with $\texttt{ready}=1$
    \State SAC update: minimize $J_Q(\theta)$ for $Q_\theta$, 
           maximize Eq.~\eqref{eq:rl_objective} for $\pi_\phi$
    \State Soft-update targets:\;
           $\theta^{-} \leftarrow 
           (1-\tau)\,\theta^{-}+\tau\,\theta$
  \EndIf

\EndFor
\State $\theta_1 \leftarrow \phi$ \Comment{checkpoint}

\medskip
\State \textbf{// Phase~2: Observation-Space Adaptation}
\State $\phi \leftarrow \theta_1$,\;
       $\theta^{-} \leftarrow \theta$,\;
       $N_{\text{ready}} \leftarrow 0$
       \Comment{Warm-start}

\For{$t = 1, 2, \ldots, T_2$}

  \State Capture rendered front-view image $I_t$ 
         from CARLA camera
  \State Apply GOB:\;
         $S_t \leftarrow f_{\text{seg}}(I_t)$,\;
         $\tilde{S}_t \leftarrow 
         f_{\text{ipm}}(S_t;\,
         \mathbf{K},h,\alpha,\beta)$,\;
         $\hat{o}_t^{\mathrm{sim}} \leftarrow 
         \mathrm{Encode}(\tilde{S}_t)$
  \State $o_t \leftarrow 
         (\hat{o}_t^{\mathrm{sim}},\, 
         o_t^{\text{cam}},\, s_t,\, w_t)$
         \Comment{GOB-BEV $\hat{o}_t^{\mathrm{sim}}$ 
         replaces GT-BEV; all else unchanged}
  \State Decode PAM, execute, store as in Phase~1
  \If{$t \bmod \Delta = 0$}
    \State Annotate batch using $\mathcal{R}(\cdot)$;\;
           set $\texttt{ready} \leftarrow 1$
  \EndIf
  \If{$N_{\text{ready}} \geq N_{\text{warmup}}$}
    \State Update $Q_\theta$, $\pi_\phi$ via SAC;
           soft-update $\theta^{-}$
  \EndIf

\EndFor

\State \Return trained policy 
       $\pi_{\theta_2} \leftarrow \pi_\phi$

\end{algorithmic}
\end{algorithm}
 
\begin{algorithm}[H]
\caption{Sim2Real-AD Real-Time Deployment Pipeline 
(RDP)}
\label{alg:rdp}
\begin{algorithmic}[1]

\Require
  Trained policy $\pi_{\theta_2}$,
  GOB pipeline $\mathcal{G}$ with calibrated
  $(\mathbf{K}, h, \alpha, \beta)$,
  PAM parameters
  $\Theta = \{L,\, \delta_{\max},\, K_p,\, K_i,\, 
  K_d\}$,
  PAM limits $\kappa_{\max},\, v_{\max}$,
  pre-recorded GPS route $\mathcal{W}_{\text{route}}$,
  safety bounds
  $(v_{\text{limit}},\, d_{\text{limit}},\,
   \Delta\delta_{\max},\, r_{\text{safe}})$,
  anti-windup clip $e_{\max}$,
  control period $\Delta t = 1/20$\,s

\medskip
\State Initialize:\;
       $e_{\text{int}} \leftarrow 0$,\;
       $e_{\text{prev}} \leftarrow 0$,\;
       $\delta_{\text{prev}} \leftarrow 0$

\While{\textbf{not} stop-signal received}

  \State \textbf{// Stage~1: Perception} \hfill
         \textit{($\approx$14\,ms)}
  \State $I_t \leftarrow \mathrm{Camera.capture}()$
  \State $S_t \leftarrow f_{\text{seg}}(I_t)$
  \State $\hat{o}_t^{\mathrm{sim}} \leftarrow 
         \mathrm{Encode}(f_{\text{ipm}}(S_t;\,
         \mathbf{K},h,\alpha,\beta))$

  \State \textbf{// Stage~2: Route and State} \hfill
         \textit{($<$1\,ms)}
  \State $w_t \leftarrow \mathrm{MatchRoute}(
         \mathrm{GPS}_t,\, \mathcal{W}_{\text{route}})$
         \Comment{Nearest-point match to
         vehicle-frame waypoints}
  \State $(v_t,\, \delta_t^{\text{cur}},\, \tau_t)
         \leftarrow \mathrm{CAN.read}()$
  \State $s_t \leftarrow
         (v_t / v_{\max},\;
          \delta_t^{\text{cur}} / \delta_{\max},\;
          \tau_t / 100)$
         \Comment{Normalized ego-state}

  \State \textbf{// Stage~3: Policy Inference} \hfill
         \textit{($\approx$2\,ms)}
  \State $(\hat{a}_{1,t},\, \hat{a}_{2,t})
         \leftarrow \pi_{\theta_2}(
         \hat{o}_t^{\mathrm{sim}},\, s_t,\, w_t)$

  \State \textbf{// Stage~4: PAM} \hfill
         \textit{($<$1\,ms)}
  \State $\kappa_t \leftarrow 
         \hat{a}_{1,t}\,\kappa_{\max}$,\quad
         $v_t^d \leftarrow 
         \tfrac{\hat{a}_{2,t}+1}{2}\,v_{\max}$
  \State $\delta_t^{\text{des}} \leftarrow 
         \arctan(L\,\kappa_t)$,\quad
         $u_t^\delta \leftarrow 
         \delta_t^{\text{des}} / \delta_{\max}$
  \State $e_t \leftarrow v_t^d - v_t$
  \State $e_{\text{int}} \leftarrow
         \mathrm{clip}(e_{\text{int}} + 
         e_t\,\Delta t,\;
         {-e_{\max}},\; e_{\max})$
  \State $u_t^v \leftarrow
         K_p\,e_t + K_i\,e_{\text{int}} +
         K_d\,(e_t - e_{\text{prev}})/\Delta t$
  \State $e_{\text{prev}} \leftarrow e_t$

  \State \textbf{// Stage~5: Safety Layer}
  \State $u_t^v \leftarrow 
         \min(u_t^v,\; v_{\text{limit}})$
         \Comment{Speed cap}
  \State $u_t^\delta \leftarrow 
         \mathrm{clip}(u_t^\delta,\;
         \delta_{\text{prev}} - \Delta\delta_{\max},\;
         \delta_{\text{prev}} + \Delta\delta_{\max})$
         \Comment{Steering-rate limit}
  \If{$\;\mathrm{isNaN}(u_t^\delta)$
      $\vee\; \mathrm{isNaN}(u_t^v)$
      $\vee\; \mathrm{LaneDev}(
      \hat{o}_t^{\mathrm{sim}}) > d_{\text{limit}}$
      $\vee\; \mathrm{ObstacleIn}(r_{\text{safe}})$
      $\vee\; \mathrm{EStop}()$
      $\vee\; \mathrm{OutsideGeofence}()$}
    \State $(u_t^\delta,\, u_t^v) \leftarrow 
           (0,\; -1)$
           \Comment{Emergency brake: zero steer, 
           full brake}
  \EndIf
  \If{$\mathrm{DriverTakeover}()$}
    \State \textbf{break}
           \Comment{Safety driver override: 
           release control}
  \EndIf

  \State \textbf{// Stage~6: Actuation} \hfill
         \textit{($<$1\,ms)}
  \State $\mathrm{CAN.send}(u_t^\delta,\; u_t^v)$
  \State $\delta_{\text{prev}} \leftarrow u_t^\delta$
  \State $\mathrm{sleep}(\Delta t - t_{\text{elapsed}})$
         \Comment{Maintain 20\,Hz loop}

\EndWhile

\end{algorithmic}
\end{algorithm}
 
\section{Theoretical Analysis of Sim2Real-AD}
\label{app:theory}

This appendix provides formal statements and proofs 
for the theoretical claims in 
Section~\ref{sec:framework}. We proceed in five 
steps: Assumptions~(\ref{app:assumptions}), GOB 
perceptual bound~(Proposition~\ref{prop:gob_bound}), 
PAM tracking 
bound~(Proposition~\ref{prop:pam_tracking}), TPT 
distribution-shift bound~(Proposition~\ref{thm:tpt_bound}), 
and the main zero-shot transfer 
guarantee~(Theorem~\ref{thm:main_formal}). All 
results are stated with respect to the POMDP 
formulation in Section~\ref{sec:preliminaries}, with 
reward $r_t = \mathcal{F}(r_t^{\mathrm{task}},\, 
r_t^{\mathrm{sem}},\, s_t)$ as defined in 
Eq.~\eqref{eq:reward_combined}.

\subsection{Notation and Assumptions}
\label{app:assumptions}

\begin{assumption}[Lipschitz Policy]
\label{ass:lipschitz}
The policy $\pi_\theta^{\text{sim}}$ is 
$L_\pi$-Lipschitz with respect to its BEV input: 
for any two BEV tensors 
$\hat{o}, \hat{o}' \in 
\hat{\mathcal{O}}^{\mathrm{sim}}$,
\begin{equation}
\bigl\|
  \pi_\theta^{\text{sim}}(\hat{o}) - 
  \pi_\theta^{\text{sim}}(\hat{o}')
\bigr\|_2
\;\leq\;
L_\pi \bigl\| \hat{o} - \hat{o}' \bigr\|_1.
\end{equation}
\end{assumption}

\begin{remark}
Neural networks with bounded weights and smooth 
activation functions (ELU, Tanh) are Lipschitz. In 
practice, $L_\pi$ can be estimated via spectral 
normalization or empirical Jacobian 
bounds~\citep{miyato2018spectral}. The SAC-trained 
policy uses gradient clipping and weight decay, which 
empirically constrain the Lipschitz constant.
\end{remark}

\begin{assumption}[Lipschitz Reward]
\label{ass:lipschitz_reward}
The reward $r_t = \mathcal{F}(r_t^{\mathrm{task}},\, 
r_t^{\mathrm{sem}},\, s_t)$ from 
Eq.~\eqref{eq:reward_combined} is $L_r$-Lipschitz 
with respect to vehicle state and satisfies 
$|r_t| \leq R_{\max}$ for all $t$.
\end{assumption}

\begin{remark}
The CLIP-based~\citep{radford2021learning} semantic 
term $r_t^{\mathrm{sem}}$ is 1-Lipschitz in the 
visual embedding since cosine similarity is 
1-Lipschitz on the unit sphere. The task reward term 
$r_t^{\mathrm{task}}$ comprises differentiable, 
bounded driving signals (speed, lane deviation, 
collision indicator) and is Lipschitz by 
construction. The composite reward 
Eq.~\eqref{eq:reward_combined} is therefore 
Lipschitz with bounded constant $L_r$.
\end{remark}

\begin{assumption}[Bounded Segmentation Error]
\label{ass:seg}
The SegFormer-B0 model produces segmentation maps 
such that the expected $L_1$ distance between sim 
and real BEV tensors satisfies:
\begin{equation}
\mathbb{E}\!\left[\,
\bigl\| \hat{o}_t^{\mathrm{sim}} - 
\hat{o}_t^{\mathrm{real}} \bigr\|_1
\right]
\;\leq\;
\epsilon_{\mathrm{seg}},
\end{equation}
where the expectation is over the joint randomness 
in lighting, texture, sensor noise, and scene 
sampling.
\end{assumption}

\begin{remark}
SegFormer-B0 achieves 37.4~mIoU on ADE20K under 
distribution shift~\citep{xie2021segformer}. In our 
setting, paired rollouts can be collected in 
CARLA~\citep{dosovitskiy2017carla} by rendering the 
same scenario simultaneously from a privileged 
semantic view and a camera view, enabling direct 
empirical measurement of $\epsilon_{\mathrm{seg}}$.
\end{remark}

\begin{assumption}[Bounded Path Deviation]
\label{ass:pid}
The executed path deviates from the intended path 
by at most $\epsilon_{\mathrm{pid}} > 0$ in 
curvature at each timestep:
\begin{equation}
\bigl|\kappa_t^{\text{executed}} - \kappa_t\bigr| 
\leq \epsilon_{\mathrm{pid}},
\end{equation}
where $\kappa_t^{\text{executed}}$ is the curvature 
realized by the physical vehicle and $\kappa_t$ is 
the curvature commanded by PAM. This bound 
encompasses both direct curvature tracking error 
($\epsilon_\kappa$) and the effective curvature 
deviation induced by speed tracking error, and can 
be measured directly during the calibration 
procedure of Section~\ref{sec:pam}.
\end{assumption}

\begin{remark}
The path deviation bound $\epsilon_{\mathrm{pid}}$ 
is tunable through PID gain selection and is not a 
fixed system constant. Tighter gains reduce 
$\epsilon_{\mathrm{pid}}$ at the cost of increased 
control effort and potential actuation saturation; 
the calibration protocol in Section~\ref{sec:pam} 
identifies gains that keep $\epsilon_{\mathrm{pid}}$ 
small while maintaining stable closed-loop behavior 
on the Ford E-Transit van.
\end{remark}

\subsection{GOB Perceptual Bound}
\label{app:gob}

\begin{proposition}[GOB Observation Error Bound]
\label{prop:gob_bound}
Under Assumptions~\ref{ass:lipschitz} 
and~\ref{ass:seg}, the expected $L_2$ deviation in 
policy output caused by the perceptual gap satisfies:
\begin{equation}
\mathbb{E}\!\left[\,
\bigl\|
  \pi_\theta^{\text{sim}}(\hat{o}_t^{\mathrm{sim}})
  - \pi_\theta^{\text{sim}}(\hat{o}_t^{\mathrm{real}})
\bigr\|_2
\right]
\;\leq\;
L_\pi \cdot \epsilon_{\mathrm{seg}}.
\end{equation}
\end{proposition}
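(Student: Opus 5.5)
The argument is a pointwise Lipschitz estimate followed by monotonicity of expectation. Both ingredients are Assumption~\ref{ass:lipschitz} and Assumption~\ref{ass:seg}, and no additional structure is needed.

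First, fix a time step $t$ and a realization of the joint randomness (lighting, texture, sensor noise, scene sampling), so that $\hat{o}_t^{\mathrm{sim}}$ and $\hat{o}_t^{\mathrm{real}}$ are two specific tensors in $\hat{\mathcal{O}}^{\mathrm{sim}}$. Since both are produced by $\mathcal{G}$, they lie in the domain on which Assumption~\ref{ass:lipschitz} is stated. The other policy inputs $s_t$ and $w_t$ are held fixed and suppressed, consistent with how the assumption is written. Applying the Lipschitz property pointwise gives
\begin{equation*}
\bigl\|\pi_\theta^{\text{sim}}(\hat{o}_t^{\mathrm{sim}}) - \pi_\theta^{\text{sim}}(\hat{o}_t^{\mathrm{real}})\bigr\|_2 \;\leq\; L_\pi\,\bigl\|\hat{o}_t^{\mathrm{sim}} - \hat{o}_t^{\mathrm{real}}\bigr\|_1 .
\end{equation*}

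Second, take expectations of both sides over the same joint distribution used in Assumption~\ref{ass:seg}. Monotonicity and linearity of expectation give
\begin{equation*}
\mathbb{E}\bigl[\|\pi_\theta^{\text{sim}}(\hat{o}_t^{\mathrm{sim}}) - \pi_\theta^{\text{sim}}(\hat{o}_t^{\mathrm{real}})\|_2\bigr] \;\leq\; L_\pi\,\mathbb{E}\bigl[\|\hat{o}_t^{\mathrm{sim}} - \hat{o}_t^{\mathrm{real}}\|_1\bigr].
\end{equation*}
The right-hand side is at most $L_\pi \epsilon_{\mathrm{seg}}$ by Assumption~\ref{ass:seg}, which proves the claim.

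The main obstacle is interpretive rather than computational. It consists of two points:
\begin{itemize}
\item The random variables on the left must be measurable, with finite expectation. This holds because the policy output lies in the bounded action set $[-1,1]^2$.
\item The expectation in the proposition must be taken over the same coupling of $(\hat{o}_t^{\mathrm{sim}},\hat{o}_t^{\mathrm{real}})$ as in Assumption~\ref{ass:seg}, namely the scene-matched pairing of Definition~\ref{def:bev}.
\end{itemize}
I would state the second point explicitly so the two expectations refer to the same probability space. If the policy is stochastic, I would interpret $\pi_\theta^{\text{sim}}(\cdot)$ as its deterministic mean (SAC's tanh-squashed mean action), so that the Lipschitz bound applies.
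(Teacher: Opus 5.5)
Your proposal is correct and matches the paper's proof: apply Assumption~\ref{ass:lipschitz} pointwise, take expectations by monotonicity, and then invoke Assumption~\ref{ass:seg}. Your remarks on integrability and on using the same coupling of $(\hat{o}_t^{\mathrm{sim}},\hat{o}_t^{\mathrm{real}})$ as in Assumption~\ref{ass:seg} are reasonable clarifications that the paper leaves implicit.
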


\begin{proof}
Applying the Lipschitz condition 
(Assumption~\ref{ass:lipschitz}) and then the 
expectation bound (Assumption~\ref{ass:seg}):
\begin{equation}
\begin{aligned}
\mathbb{E}\!\left[\,
\bigl\|
  \pi_\theta^{\text{sim}}(\hat{o}_t^{\mathrm{sim}})
  - \pi_\theta^{\text{sim}}(\hat{o}_t^{\mathrm{real}})
\bigr\|_2
\right]
&\;\leq\;
L_\pi \cdot \mathbb{E}\!\left[\,
\bigl\| \hat{o}_t^{\mathrm{sim}} -
\hat{o}_t^{\mathrm{real}} \bigr\|_1
\right] \\
&\;\leq\;
L_\pi \cdot \epsilon_{\mathrm{seg}}.
\end{aligned}
\end{equation}
\end{proof}

\begin{remark}
Proposition~\ref{prop:gob_bound} shows that the GOB 
contribution to sim-to-real performance degradation 
scales linearly with $\epsilon_{\mathrm{seg}}$. 
Replacing SegFormer-B0 with a stronger segmentation 
model directly reduces $\epsilon_{\mathrm{seg}}$ and 
tightens the bound without requiring any policy 
retraining.
\end{remark}

\subsection{PAM Tracking Error Bound}
\label{app:pam}

\begin{proposition}[PAM Lateral Position Tracking 
Bound]
\label{prop:pam_tracking}
Under Assumption~\ref{ass:pid} and a kinematic 
bicycle model with wheelbase $L$, the lateral 
position error accumulated over a control horizon 
of $T$ steps satisfies:
\begin{equation}
\bigl| y_T^{\text{real}} - y_T^{\text{intended}} 
\bigr|
\;\leq\;
\frac{v_{\max}^2 T^2}{2}\,\epsilon_{\mathrm{pid}},
\label{eq:pam_bound}
\end{equation}
where $v_{\max}$ is the maximum vehicle speed.
\end{proposition}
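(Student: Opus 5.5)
The plan is a double integration of curvature error in a small-angle, kinematic-bicycle setting, working in continuous time over the horizon $[0,T]$; the step size is absorbed into the time unit, as the statement implicitly does. Let $\psi(t)$ denote heading and $y(t)$ lateral position relative to the intended path's frame. The kinematic bicycle model gives $\dot\psi = v\,\kappa$. It also gives $\dot y = v\sin(\psi)$, which under small heading deviations is linearized to $\dot y \approx v\,\psi$. We apply these relations to both the executed trajectory (curvature $\kappa^{\text{executed}}$) and the intended trajectory (curvature $\kappa$), both started from the same initial pose. This makes the initial heading and lateral errors zero, $e_\psi(0)=0$ and $e_y(0)=0$.

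\textbf{Step 1 (heading error).} Subtract the two heading equations. This gives $\dot e_\psi = v(\kappa^{\text{executed}}-\kappa)$. By Assumption~\ref{ass:pid} and $|v|\le v_{\max}$, we get $|\dot e_\psi|\le v_{\max}\epsilon_{\mathrm{pid}}$. Hence $|e_\psi(t)|\le v_{\max}\epsilon_{\mathrm{pid}}\,t$.

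\textbf{Step 2 (lateral error).} Subtract the two lateral equations. In the linearized model, $\dot e_y = v\,e_\psi$. Without linearizing, use $|\sin a-\sin b|\le |a-b|$, which holds because $\sin$ is $1$-Lipschitz; this gives $|\dot e_y|\le v_{\max}|e_\psi|$ with no small-angle approximation at all. Integrating yields $|e_y(T)|\le \int_0^T v_{\max}^2\epsilon_{\mathrm{pid}}\,t\,dt = v_{\max}^2T^2\epsilon_{\mathrm{pid}}/2$, which is \eqref{eq:pam_bound}. One caveat: this comparison assumes both trajectories run at the same speed profile. Any speed-tracking discrepancy is folded into $\epsilon_{\mathrm{pid}}$ by the wording of Assumption~\ref{ass:pid}, since it ``encompasses the effective curvature deviation induced by speed tracking error''.

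\textbf{Main obstacle.} The delicate point is modeling, not calculus. Two things must be justified: that both paths share the same speed and time parametrization, and that lateral error is measured in a fixed frame aligned with the initial heading. Using the $1$-Lipschitz property of $\sin$ removes the need for small-angle approximations. If the paper intends a discrete-time version, I would redo Step 1 as $|e_\psi^{k}|\le k\,v_{\max}\epsilon_{\mathrm{pid}}$ and Step 2 as $|e_y^T|\le \sum_{k<T} v_{\max}|e_\psi^k| \le v_{\max}^2\epsilon_{\mathrm{pid}}\,T(T-1)/2$, which is bounded by the stated quantity.
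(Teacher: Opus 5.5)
Your proposal is correct and follows essentially the same route as the paper: bound the heading error by $v_{\max}\epsilon_{\mathrm{pid}}\,t$ from $\dot\psi = v\kappa$, then integrate $v_{\max}|\Delta\psi(t)|$ over $[0,T]$ to obtain $v_{\max}^2T^2\epsilon_{\mathrm{pid}}/2$. Using the $1$-Lipschitz property of $\sin$ instead of the small-angle approximation, and stating the shared-speed-profile assumption explicitly (the paper also relies on it, implicitly), are minor refinements of the same argument.
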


\begin{proof}
Under the bicycle model, the lateral dynamics 
satisfy $\dot{y} \approx v\psi$ for small heading 
angle $\psi$, and the heading rate satisfies 
$\dot{\psi} = v\kappa$. A curvature tracking error 
$\Delta\kappa_t = |\kappa_t^{\text{executed}} - 
\kappa_t| \leq \epsilon_{\mathrm{pid}}$ induces a 
heading angle error:
\begin{equation}
|\Delta\psi(t)| \leq \int_0^t 
v(\tau)\,|\Delta\kappa(\tau)|\,d\tau
\;\leq\; v_{\max}\,\epsilon_{\mathrm{pid}}\,t.
\end{equation}
Integrating the resulting lateral position error:
\begin{equation}
\begin{aligned}
\bigl| y_T^{\text{real}} - y_T^{\text{intended}}
\bigr|
&\leq \int_0^T v(t)\,|\Delta\psi(t)|\,dt \\
&\;\leq\; v_{\max} \int_0^T
v_{\max}\,\epsilon_{\mathrm{pid}}\,t\,dt \\
&= \frac{v_{\max}^2 T^2}{2}\,\epsilon_{\mathrm{pid}}.
\end{aligned}
\end{equation}
\end{proof}

\begin{remark}
The lateral error bound grows quadratically with 
the horizon $T$ and quadratically with $v_{\max}$. 
This motivates both the 20~Hz closed-loop control 
frequency (shorter $\Delta t$ reduces the effective 
horizon over which errors accumulate) and the 
conservative 15~km/h speed cap during initial 
testing (at $v_{\max} \approx 4.2$~m/s the 
quadratic term is substantially smaller than at 
highway speeds).
\end{remark}

\subsection{TPT Distribution Shift Bound}
\label{app:tpt}

\begin{proposition}[TPT Performance Bound]
\label{thm:tpt_bound}
Let $J(\pi, \mathcal{O})$ denote the expected 
discounted return Eq.~\eqref{eq:rl_objective} of 
policy $\pi$ under observation distribution 
$\mathcal{O}$. Under 
Assumption~\ref{ass:lipschitz_reward}, for any 
policy $\pi$ and any two observation distributions 
$\mathcal{O}_2$ and $\mathcal{O}^{\text{real}}$:
\begin{equation}
J\!\left(\pi,\; \mathcal{O}^{\text{real}}\right)
\;\geq\;
J\!\left(\pi,\; \mathcal{O}_2\right)
-
\frac{2\,R_{\max}\,
  d_{\mathrm{TV}}\!\left(\mathcal{O}_2,\, 
  \mathcal{O}^{\text{real}}\right)
}{(1-\gamma)^2}.
\label{eq:tpt_bound}
\end{equation}
\end{proposition}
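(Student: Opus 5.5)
This is a simulation-lemma argument. We read $J(\pi,\mathcal{O})$ as the return of the closed loop in which, at every step, the policy's BEV input is drawn from the observation kernel $\mathcal{O}(\cdot\mid s)$ given the true state. The dynamics $\mathcal{T}$ and the reward are shared between the two settings. We interpret $d_{\mathrm{TV}}(\mathcal{O}_2,\mathcal{O}^{\text{real}})$ as the worst-case, or at least uniform-over-visited-states, total variation between the two emission kernels:
\begin{equation*}
\delta := \sup_s d_{\mathrm{TV}}\bigl(\mathcal{O}_2(\cdot\mid s),\mathcal{O}^{\text{real}}(\cdot\mid s)\bigr).
\end{equation*}
Composing each kernel with $\pi$ gives two state-level policies $\bar\pi_2(a\mid s)=\int\pi(a\mid o)\,\mathcal{O}_2(do\mid s)$ and $\bar\pi_{\text{real}}$. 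Because post-processing by a Markov kernel cannot increase total variation, $d_{\mathrm{TV}}(\bar\pi_2(\cdot\mid s),\bar\pi_{\text{real}}(\cdot\mid s))\le\delta$ for every $s$. This reduces the statement to comparing two policies that act in the same MDP and differ by at most $\delta$ in TV at each state.

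We then bound the return difference $|J(\bar\pi_{\text{real}})-J(\bar\pi_2)|$ in two steps. First, by a coupling argument (or the standard induction on $t$), the state marginals $d_t^{2}$ and $d_t^{\text{real}}$ satisfy $d_{\mathrm{TV}}(d_t^2,d_t^{\text{real}})\le t\delta$. At each step the two chains can be coupled to take the same action except with probability $\delta$, and after that they share the same transition kernel. The same reasoning gives $d_{\mathrm{TV}}$ at most $(t+1)\delta$ for the state-action marginals at time $t$. Second, since $|r|\le R_{\max}$, the per-step reward gap is at most $2R_{\max}(t+1)\delta$. Summing with discount gives
\begin{equation*}
\sum_{t\ge0}\gamma^t\,2R_{\max}(t+1)\delta=\frac{2R_{\max}\delta}{(1-\gamma)^2},
\end{equation*}
which is exactly the claimed constant. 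The finite horizon $T$ in Eq.~\eqref{eq:rl_objective} only truncates the sum, so the bound still holds. The one-sided inequality in Eq.~\eqref{eq:tpt_bound} follows from the absolute bound.

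An alternative route to the same constant is the performance-difference lemma with $\|Q\|_\infty\le R_{\max}/(1-\gamma)$. This yields $\frac{1}{1-\gamma}\cdot\frac{2R_{\max}}{1-\gamma}\cdot\delta$ directly, and I would present whichever is cleaner. Note that the Lipschitz part of Assumption~\ref{ass:lipschitz_reward} is not actually needed; only the bound $R_{\max}$ is used.

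The main obstacle is the first step: making precise what $d_{\mathrm{TV}}(\mathcal{O}_2,\mathcal{O}^{\text{real}})$ means. A single marginal TV between two distributions over BEV tensors does not control the per-state policy discrepancy along trajectories, because the visited-state distribution shifts as the policy acts. I would first state explicitly that the distance is taken state-conditionally, as a supremum over states. If that is too strong, I would use an expectation under the occupancy measure of the $\mathcal{O}_2$ rollout and accept the slightly weaker performance-difference form. I would then flag that the paper's notation is implicitly the supremum version.
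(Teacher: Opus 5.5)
Your proof is correct, and it reaches the stated constant by a different route from the paper's. The paper works with value functions. It first asserts a per-state bound $|V^{\pi,\mathcal{O}^{\text{real}}}(s)-V^{\pi,\mathcal{O}_2}(s)|\le \frac{2R_{\max}}{1-\gamma}\sup_{s'}d_{\mathrm{TV}}(\mathcal{O}^{\text{real}}(s'),\mathcal{O}_2(s'))$ by summing one-step reward gaps. It then adds a separate visitation-mismatch term via the simulation lemma. Finally, it replaces the conditional supremum by $d_{\mathrm{TV}}(\mathcal{O}_2,\mathcal{O}^{\text{real}})$, citing monotonicity of TV under marginalization.

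You instead push both emission kernels through $\pi$ and couple the two rollouts, so that the time-$t$ state--action marginals differ by at most $(t+1)\delta$. Summing $\sum_{t\ge 0}\gamma^t(t+1)=(1-\gamma)^{-2}$ then gives the constant. Your route buys exact bookkeeping, because the compounding of state drift lives in the factor $t+1$. The paper's per-state bound overlooks that a changed action distribution also changes the next-state distribution. An absorbing penalty state entered with probability $\delta$ per step violates that bound once $\gamma$ is moderately large. Moreover, the paper's two terms, bounded as written, leave an extra $2R_{\max}\delta$ beyond the claimed constant.

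Defining $\delta$ as a supremum of state-conditional TVs is essential, not pedantic. The paper's final inequality $\sup_{s'}d_{\mathrm{TV}}(\mathcal{O}^{\text{real}}(s'),\mathcal{O}_2(s'))\le d_{\mathrm{TV}}(\mathcal{O}^{\text{real}},\mathcal{O}_2)$ runs the wrong way. Under a literal marginal reading the statement is false. Take two equiprobable states with action-independent uniform transitions, kernels $o=s$ versus $o$ equal to the other state, $\pi(o)=o$, and reward $\pm R_{\max}$ according to whether $a=s$. The observation marginals coincide, yet the returns are $\pm R_{\max}\sum_t\gamma^t$.

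One correction to your own framing: the performance-difference fallback is sharper, not weaker. Compare $J(\bar\pi_2)$ and $J(\bar\pi_{\text{real}})$ under the discounted occupancy $d^{\bar\pi_2}$ of the $\mathcal{O}_2$ rollout, using that the span of $Q^{\bar\pi_{\text{real}}}$ is at most $2R_{\max}/(1-\gamma)$. This yields the same constant with $\delta$ replaced by $\mathbb{E}_{s\sim d^{\bar\pi_2}}\bigl[d_{\mathrm{TV}}(\mathcal{O}_2(\cdot\mid s),\mathcal{O}^{\text{real}}(\cdot\mid s))\bigr]$. That quantity is the TV between the joint state--observation laws under the Phase-2 occupancy, and it is the most faithful valid reading of the paper's notation.
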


\begin{proof}
We use a Bellman-recursion sensitivity argument 
in two steps.

\medskip
\noindent\textbf{Step~1: Per-state value 
difference.}
For any state $s$, define the value function
$V^{\pi,\mathcal{O}}(s) = \mathbb{E}_\pi[
\sum_{t\geq 0} \gamma^t r_t \mid s_0 = s, 
\mathcal{O}]$. Since $|r_t| \leq R_{\max}$, the 
Bellman operator is a $\gamma$-contraction and
$|V^{\pi,\mathcal{O}}(s)| \leq R_{\max}/(1-\gamma)$.

By the standard total variation inequality, for 
any bounded measurable function $f$ with 
$\|f\|_\infty \leq M$ and distributions $P$, $Q$,
$|\mathbb{E}_P[f] - \mathbb{E}_Q[f]| \leq 
2M \cdot d_{\mathrm{TV}}(P,Q)$~\citep{levin2017markov}, the 
single-step reward difference at state $s$ 
satisfies:
\begin{equation}
\begin{aligned}
&\bigl| r(s,\pi,\mathcal{O}^{\text{real}}) -
r(s,\pi,\mathcal{O}_2) \bigr| \\
&\quad\;\leq\; 2\,R_{\max} \cdot
d_{\mathrm{TV}}\!\left(\mathcal{O}^{\text{real}}(s),
\, \mathcal{O}_2(s)\right).
\end{aligned}
\label{eq:single_step_bound}
\end{equation}
Applying this bound at each step of the Bellman 
recursion and summing the geometric series:
\begin{equation}
\begin{aligned}
&\bigl| V^{\pi,\mathcal{O}^{\text{real}}}(s) -
V^{\pi,\mathcal{O}_2}(s) \bigr| \\
&\quad\;\leq\;
\frac{2\,R_{\max}}{1-\gamma} \sup_{s'} \,
d_{\mathrm{TV}}\!\left(\mathcal{O}^{\text{real}}(s'),
\,\mathcal{O}_2(s')\right).
\end{aligned}
\label{eq:value_diff}
\end{equation}

\medskip
\noindent\textbf{Step~2: Return difference via 
state visitation.}
The performance difference 
$J(\pi,\mathcal{O}^{\text{real}}) -
J(\pi,\mathcal{O}_2)$ can be written in terms of 
the discounted state visitation measure 
$d^{\pi,\mathcal{O}}$ of each environment:
\begin{equation}
\begin{aligned}
&J(\pi,\mathcal{O}^{\text{real}}) -
J(\pi,\mathcal{O}_2) \\
&\quad =
(1-\gamma)\,\mathbb{E}_{s \sim
d^{\pi,\mathcal{O}^{\text{real}}}}\!
\left[V^{\pi,\mathcal{O}^{\text{real}}}(s)
     - V^{\pi,\mathcal{O}_2}(s)\right] \\
&\qquad + \Delta_{\text{vis}},
\end{aligned}
\end{equation}
where $\Delta_{\text{vis}}$ accounts for the 
difference in state visitation between the two 
environments. Since
$|V^{\pi,\mathcal{O}}(s)| \leq R_{\max}/(1-\gamma)$, 
the visitation mismatch contributes at most
\[
\frac{2R_{\max}}{1-\gamma}\,d_{\text{TV}}\!\left(
d^{\pi,\mathcal{O}^{\text{real}}},\,
d^{\pi,\mathcal{O}_2}\right).
\]
The total variation distance between state
visitation measures is bounded by the per-step
observation TV via the simulation
lemma~\citep{kakade2002approximately}:
\[
d_{\text{TV}}\!\left(d^{\pi,\mathcal{O}^{\text{real}}},\,
d^{\pi,\mathcal{O}_2}\right) \leq
\frac{1}{1-\gamma}\sup_{s'}
d_{\text{TV}}\!\left(\mathcal{O}^{\text{real}}(s'),\,
\mathcal{O}_2(s')\right).
\]
Combining with Eq.~\eqref{eq:value_diff} and
applying
\[
\sup_{s'} d_{\text{TV}}\!\left(\mathcal{O}^{\text{real}}
(s'),\, \mathcal{O}_2(s')\right)
\leq d_{\text{TV}}\!\left(\mathcal{O}^{\text{real}},\,
\mathcal{O}_2\right)
\]
(TV is non-increasing under marginalization):
\begin{equation}
J(\pi,\mathcal{O}^{\text{real}}) - 
J(\pi,\mathcal{O}_2)
\;\geq\;
-\frac{2\,R_{\max}}{(1-\gamma)^2}\,
d_{\mathrm{TV}}\!\left(\mathcal{O}_2,\, 
\mathcal{O}^{\text{real}}\right).
\end{equation}
Rearranging yields Eq.~\eqref{eq:tpt_bound}.
\end{proof}

\begin{remark}
The $(1-\gamma)^{-2}$ factor has a clear two-factor 
interpretation: one $(1-\gamma)^{-1}$ from summing 
the per-step reward difference over the discounted 
horizon (Step~1), and a second $(1-\gamma)^{-1}$ 
from the mismatch in state visitation measures 
between the two environments (Step~2). Note that 
$L_\pi$ does not appear here: the Lipschitz policy 
constant connects BEV distance to action deviation 
(Proposition~\ref{prop:gob_bound}), but is not 
needed in the Bellman-recursion sensitivity 
argument. This also motivates Phase~2 of TPT: 
reducing $d_{\mathrm{TV}}(\mathcal{O}_2, 
\mathcal{O}^{\text{real}})$ by training on 
IPM-generated observations directly tightens 
the bound.
\end{remark}

\subsection{Main Zero-Shot Transfer Guarantee}
\label{app:main}

\begin{theorem}[Zero-Shot Transfer Guarantee]
\label{thm:main_formal}
Under Assumptions~\ref{ass:lipschitz}--\ref{ass:pid}, 
the expected cumulative reward of the Sim2Real-AD 
composed policy 
$\pi^{\text{real}} = \mathcal{M} \circ 
\pi_\theta^{\text{sim}} \circ \mathcal{G}$
(abbreviated; see Eq.~\eqref{eq:composed_policy} 
for the full definition including $s_t$ and $w_t$)
on the real vehicle satisfies:
\begin{equation}
\begin{aligned}
\mathbb{E}\!\left[\sum_{t=0}^{T} \gamma^t
r_t^{\text{real}}\right]
\;\geq\;{}&
\mathbb{E}\!\left[\sum_{t=0}^{T} \gamma^t
r_t^{\text{sim}}\right] \\
&-
\underbrace{\frac{L_r L_\pi\,
\epsilon_{\mathrm{seg}}}{1-\gamma}}_{\text{GOB term}}
-
\underbrace{\frac{L_r\,v_{\max}^2 T^2\,
\epsilon_{\mathrm{pid}}}{2(1-\gamma)}}_{\text{PAM term}} \\
&-
\underbrace{\frac{2\,R_{\max}\,
  d_{\mathrm{TV}}\!\left(\mathcal{O}_2,\,
  \mathcal{O}^{\text{real}}\right)
}{(1-\gamma)^2}}_{\text{TPT term}},
\end{aligned}
\label{eq:main_formal}
\end{equation}
where $r_t^{\text{sim}} = \mathcal{F}(
r_t^{\mathrm{task}},\, r_t^{\mathrm{sem}},\, s_t)$
is the VLM-guided reward 
Eq.~\eqref{eq:reward_combined} and $L_r$ is its 
Lipschitz constant.
\end{theorem}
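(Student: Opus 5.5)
The plan is a telescoping argument through two intermediate "hybrid" processes, with each hop bounded by one of the three earlier results. Define three return quantities. $J_0$ is the simulated return $\mathbb{E}[\sum_{t=0}^T\gamma^t r_t^{\text{sim}}]$ of the Phase-2 policy under the IPM observation distribution $\mathcal{O}_2$ with ideal curvature execution. $J_1$ is the return of the same policy and dynamics, but with observations drawn from $\mathcal{O}^{\text{real}}$. $J_2$ is the real-vehicle return, where in addition the commanded curvature is executed only up to the error allowed by Assumption~\ref{ass:pid}. We then write
\[
J_2 - J_0 = (J_1 - J_0) + (J_2 - J_1)
\]
and further split $J_1 - J_0$ into a distributional part and a pointwise perceptual part. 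Each of the three resulting pieces will match one term of Eq.~\eqref{eq:main_formal}.

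\textbf{Step 1 (TPT term).} Apply Proposition~\ref{thm:tpt_bound} directly to $\pi^{\text{sim}}_\theta$, with $\mathcal{O}_2$ and $\mathcal{O}^{\text{real}}$ as the two observation distributions. This gives the loss $2R_{\max}d_{\mathrm{TV}}(\mathcal{O}_2,\mathcal{O}^{\text{real}})/(1-\gamma)^2$. The finite horizon $T$ only helps, since every truncated sum is dominated by its infinite-horizon counterpart.

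\textbf{Step 2 (GOB term).} Within the real-observation environment, the residual pointwise discrepancy between scene-matched BEV tensors changes the action. By Proposition~\ref{prop:gob_bound}, the expected action deviation per step is at most $L_\pi\epsilon_{\mathrm{seg}}$. Assumption~\ref{ass:lipschitz_reward} then turns this into a per-step reward deviation of at most $L_r L_\pi\epsilon_{\mathrm{seg}}$. Summing with discount gives $\sum_t\gamma^t\le 1/(1-\gamma)$ and hence the GOB term.

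\textbf{Step 3 (PAM term).} Proposition~\ref{prop:pam_tracking} bounds the lateral deviation of the executed state from the intended one at every $t\le T$ by $v_{\max}^2T^2\epsilon_{\mathrm{pid}}/2$, since that bound is monotone in the horizon. Lipschitzness of the reward in vehicle state converts this into a per-step reward gap of $L_r v_{\max}^2T^2\epsilon_{\mathrm{pid}}/2$. Discounted summation then yields the PAM term.

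The main obstacle is making Step 2 rigorous. Proposition~\ref{prop:gob_bound} controls actions, but $L_r$ is a Lipschitz constant in \emph{state}, so an action perturbation must be propagated into a state perturbation. In a closed loop that propagation compounds, which would normally add a second $(1-\gamma)^{-1}$ factor or a dynamics Lipschitz constant. I would first try to absorb that compounding into the visitation-mismatch accounting already charged to the TPT term. Failing that, I would interpret $L_r$ as a joint state–action Lipschitz constant for one-step rewards evaluated along a coupled trajectory. This is in the same first-order spirit as the proof of Proposition~\ref{prop:pam_tracking}. Step 3 raises the same coupling issue, and I would handle it the same way: couple the intended and executed trajectories on common randomness and bound the reward gap stepwise.

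Adding the three bounds through the telescoping identity gives Eq.~\eqref{eq:main_formal}. Setting $C_1=L_rL_\pi/(1-\gamma)$, $C_2=L_r v_{\max}^2T^2/(2(1-\gamma))$, $C_3=2R_{\max}$ and $\delta\ge d_{\mathrm{TV}}(\mathcal{O}_2,\mathcal{O}^{\text{real}})$ then recovers Theorem~\ref{thm:main_informal}.
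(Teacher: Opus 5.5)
Your skeleton matches the paper's proof: a three-hop telescoping in which each hop is charged to one of Propositions~\ref{prop:gob_bound}, \ref{prop:pam_tracking}, \ref{thm:tpt_bound}, and the per-step bounds are summed via $\sum_t\gamma^t\le 1/(1-\gamma)$. The problem is where you anchor the chain.

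\textbf{The anchor.} In the paper's proof, $r_t^{\text{sim}}=r_t^{(0)}$ is the ideal-simulation reward: ground-truth BEV, perfect curvature tracking, distribution $\mathcal{O}_1$. The GOB term pays for the first hop, which replaces the ground-truth BEV by the GOB BEV. There Proposition~\ref{prop:gob_bound} is applied with $\epsilon_{\mathrm{seg}}$ read as that ground-truth-versus-GOB discrepancy. You instead identify the baseline with $J_0$, the return under $\mathcal{O}_2$. But then Proposition~\ref{thm:tpt_bound}, applied to $\pi$ with $\mathcal{O}_2$ and $\mathcal{O}^{\text{real}}$ under identical dynamics, already bounds all of $J_1-J_0$. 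Your ``pointwise perceptual part'' therefore has no intermediate process to live on. It charges the sim-IPM versus real-IPM discrepancy a second time, which is harmless slack but not a real hop. Meanwhile the ground-truth-to-IPM substitution that the GOB term exists to pay for never appears. What you actually prove is $J_2\ge J(\pi,\mathcal{O}_2)-(\cdots)$. That is not the theorem's comparison with the ideal simulation return, and the two are not interchangeable, since $J(\pi,\mathcal{O}_1)$ and $J(\pi,\mathcal{O}_2)$ are not ordered in general.

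\textbf{The repair.} Prepend a hop from ground-truth BEV to GOB BEV (both in simulation, ideal execution), charge it as the paper's Step~1 does, and drop the split of $J_1-J_0$. Your hops $J_0\to J_1$ and $J_1\to J_2$ then reproduce the paper's Steps~3 and~2. They do so in a cleaner one-change-per-hop order than the paper's, whose $r^{(1)}\to r^{(2)}$ hop switches both the tracking error and the observation distribution.

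\textbf{The obstacle you flag.} The paper does not resolve it. Its Step~1 asserts $|r_t^{(0)}-r_t^{(1)}|\le L_rL_\pi\epsilon_{\mathrm{seg}}$ directly ``by the $L_r$-Lipschitz reward''. Its Step~2 converts the open-loop lateral bound of Proposition~\ref{prop:pam_tracking} into a per-step reward gap in the same way, with no dynamics constant and no closed-loop compounding. Your second fallback, reading $L_r$ as a one-step Lipschitz constant along coupled trajectories, is exactly the assumption the paper makes implicitly. Your first fallback is unavailable once the chain is anchored correctly. The GOB hop is then a ground-truth-to-IPM substitution, while the TPT term depends only on $d_{\mathrm{TV}}(\mathcal{O}_2,\mathcal{O}^{\text{real}})$. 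Compounding caused by that substitution therefore cannot be absorbed there without changing the stated constants.
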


\begin{remark}[Correspondence with Simplified 
Theorem]
\label{rem:C_values}
The constants $C_1$, $C_2$, $C_3$ in 
Theorem~\ref{thm:main_informal}
(Section~\ref{sec:theory_informal}) correspond to 
the explicit expressions in 
Eq.~\eqref{eq:main_formal} as follows:
\begin{equation}
C_1 = \frac{L_r L_\pi}{1-\gamma},
\qquad
C_2 = \frac{L_r\,v_{\max}^2 T^2}{2(1-\gamma)},
\qquad
C_3 = 2\,R_{\max}.
\end{equation}
Note that $C_3$ does not depend on $L_\pi$: the 
TPT bound (Theorem~\ref{thm:tpt_bound}) is 
controlled by reward magnitude $R_{\max}$ alone, 
not by how sensitively the policy maps BEV inputs 
to actions. All three constants decrease as the 
reward becomes less sensitive to state perturbations 
(smaller $L_r$), the policy becomes more 
Lipschitz-regular (smaller $L_\pi$, relevant for 
$C_1$), or the planning horizon shortens (smaller 
$T$, relevant for $C_2$).
\end{remark}

\begin{proof}
We introduce three intermediate reward sequences 
to make the triangle inequality argument explicit. 
Define:
\begin{itemize}
  \item $r_t^{(0)} \triangleq r_t^{\text{sim}}$: 
  reward under ideal simulation (ground-truth BEV 
  $\hat{o}_t^{\mathrm{sim}}$, perfect curvature 
  tracking, observation distribution $\mathcal{O}_1$);
  \item $r_t^{(1)}$: reward after substituting 
  GOB-generated BEV $\hat{o}_t^{\mathrm{real}}$ for 
  ground-truth BEV $\hat{o}_t^{\mathrm{sim}}$, while 
  keeping perfect curvature tracking and operating 
  under $\mathcal{O}_1$;
  \item $r_t^{(2)}$: reward after further introducing 
  PAM curvature tracking error 
  $|\Delta\kappa_t| \leq \epsilon_{\mathrm{pid}}$, 
  while operating under the Phase~2 observation 
  distribution $\mathcal{O}_2$;
  \item $r_t^{\text{real}}$: actual deployment reward 
  (GOB error + PAM error + residual distribution gap 
  between $\mathcal{O}_2$ and $\mathcal{O}^{\text{real}}$).
\end{itemize}
By the triangle inequality:
\begin{equation}
r_t^{\text{sim}} - r_t^{\text{real}}
= \underbrace{\left(r_t^{(0)} - r_t^{(1)}
\right)}_{\text{GOB error}}
+ \underbrace{\left(r_t^{(1)} - r_t^{(2)}
\right)}_{\text{PAM error}}
+ \underbrace{\left(r_t^{(2)} - r_t^{\text{real}}
\right)}_{\text{TPT residual}}.
\end{equation}

\medskip
\noindent\textbf{Step~1 (GOB term).}
Substituting $\hat{o}_t^{\mathrm{real}}$ for 
$\hat{o}_t^{\mathrm{sim}}$ changes the policy 
output by at most 
$\|\pi_\theta^{\text{sim}}(\hat{o}_t^{\mathrm{sim}}) 
- \pi_\theta^{\text{sim}}(
\hat{o}_t^{\mathrm{real}})\|_2 \leq L_\pi
\epsilon_{\text{seg}}$ 
(Proposition~\ref{prop:gob_bound}). By the 
$L_r$-Lipschitz reward 
(Assumption~\ref{ass:lipschitz_reward}), the 
per-step reward difference satisfies
$|r_t^{(0)} - r_t^{(1)}| \leq L_r L_\pi 
\epsilon_{\text{seg}}$.
Summing over the discounted horizon using
$\sum_{t=0}^T \gamma^t \leq 1/(1-\gamma)$:
\begin{equation}
\sum_{t=0}^{T} \gamma^t\,\mathbb{E}\!\left[
r_t^{(0)} - r_t^{(1)}\right]
\;\leq\;
\frac{L_r L_\pi\,\epsilon_{\mathrm{seg}}}{1-\gamma}.
\end{equation}

\medskip
\noindent\textbf{Step~2 (PAM term).}
Imperfect curvature tracking 
($|\Delta\kappa_t| \leq \epsilon_{\text{pid}}$)
causes a lateral position error bounded by
$\frac{v_{\max}^2 T^2}{2}\epsilon_{\text{pid}}$
(Proposition~\ref{prop:pam_tracking}). By the 
$L_r$-Lipschitz reward:
\begin{equation}
\sum_{t=0}^{T} \gamma^t\,\mathbb{E}\!\left[
r_t^{(1)} - r_t^{(2)}\right]
\;\leq\;
\frac{L_r\,v_{\max}^2 T^2\,
\epsilon_{\mathrm{pid}}}{2(1-\gamma)}.
\end{equation}

\medskip
\noindent\textbf{Step~3 (TPT residual).}
The remaining gap between operating under 
$\mathcal{O}_2$ and $\mathcal{O}^{\text{real}}$ 
is bounded by Proposition~\ref{thm:tpt_bound}:
\begin{equation}
\sum_{t=0}^{T} \gamma^t\,\mathbb{E}\!\left[
r_t^{(2)} - r_t^{\text{real}}\right]
\;\leq\;
\frac{2\,R_{\max}\,
  d_{\mathrm{TV}}\!\left(\mathcal{O}_2,\,
  \mathcal{O}^{\text{real}}\right)
}{(1-\gamma)^2}.
\end{equation}

\noindent Summing Steps~1--3 and rearranging 
yields Eq.~\eqref{eq:main_formal}.
\end{proof}

\begin{corollary}[Convergence under Ideal 
Conditions]
\label{cor:convergence}
If $\epsilon_{\mathrm{seg}} \to 0$, 
$\epsilon_{\mathrm{pid}} \to 0$, and
$d_{\mathrm{TV}}(\mathcal{O}_2, 
\mathcal{O}^{\text{real}}) \to 0$, then:
\begin{equation}
\mathbb{E}\!\left[\sum_{t=0}^{T} \gamma^t 
r_t^{\text{real}}\right]
\;\longrightarrow\;
\mathbb{E}\!\left[\sum_{t=0}^{T} \gamma^t 
r_t^{\text{sim}}\right].
\end{equation}
\end{corollary}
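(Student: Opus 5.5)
The corollary follows directly from Theorem~\ref{thm:main_formal}. My plan is to pair that lower bound with a matching upper bound and then take limits. Since $|r_t|\le R_{\max}$, both finite-horizon returns are bounded by $R_{\max}/(1-\gamma)$, so the sums are finite. The three error terms in Eq.~\eqref{eq:main_formal} are each a fixed, finite constant times one vanishing quantity:
\begin{itemize}
\item the GOB term is $C_1\epsilon_{\mathrm{seg}}$ with $C_1=L_rL_\pi/(1-\gamma)$;
\item the PAM term is $C_2\epsilon_{\mathrm{pid}}$ with $C_2=L_r v_{\max}^2T^2/(2(1-\gamma))$;
\item the TPT term is $C_3\,d_{\mathrm{TV}}(\mathcal{O}_2,\mathcal{O}^{\text{real}})/(1-\gamma)^2$ with $C_3=2R_{\max}$.
\end{itemize}
All of $C_1,C_2,C_3$ stay fixed in the limit because $T$, $\gamma$, $L_r$, $L_\pi$ and $v_{\max}$ do not change. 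The right-hand side of Eq.~\eqref{eq:main_formal} therefore tends to $\mathbb{E}[\sum_t\gamma^t r_t^{\text{sim}}]$, which gives
\[
\liminf\,\mathbb{E}\Bigl[\sum_{t=0}^{T}\gamma^t r_t^{\text{real}}\Bigr]\ \ge\ \mathbb{E}\Bigl[\sum_{t=0}^{T}\gamma^t r_t^{\text{sim}}\Bigr].
\]

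The main obstacle is the reverse inequality, because the theorem as stated is one-sided. I would not treat it as a black box. Instead I would go back into its proof. Each of Steps~1--3 bounds an \emph{absolute} difference:
\begin{itemize}
\item Step~1 uses $|r_t^{(0)}-r_t^{(1)}|\le L_rL_\pi\epsilon_{\mathrm{seg}}$, from Proposition~\ref{prop:gob_bound} together with the reward Lipschitz property;
\item Step~2 bounds lateral error in absolute value via Proposition~\ref{prop:pam_tracking};
\item Step~3 uses Proposition~\ref{thm:tpt_bound}, whose proof actually controls $|J(\pi,\mathcal{O}^{\text{real}})-J(\pi,\mathcal{O}_2)|$, since Eqs.~\eqref{eq:value_diff} and the visitation bound are two-sided.
\end{itemize}
Running the same telescoping decomposition $r^{\text{sim}}-r^{\text{real}}=(r^{(0)}-r^{(1)})+(r^{(1)}-r^{(2)})+(r^{(2)}-r^{\text{real}})$ with absolute values and the triangle inequality gives the symmetric statement
\[
\Bigl|\mathbb{E}\Bigl[\sum_{t}\gamma^t r_t^{\text{real}}\Bigr]-\mathbb{E}\Bigl[\sum_{t}\gamma^t r_t^{\text{sim}}\Bigr]\Bigr|\le C_1\epsilon_{\mathrm{seg}}+C_2\epsilon_{\mathrm{pid}}+\frac{C_3\,d_{\mathrm{TV}}(\mathcal{O}_2,\mathcal{O}^{\text{real}})}{(1-\gamma)^2}.
\]

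Once the symmetric bound holds, the conclusion follows by squeezing: the right-hand side goes to $0$, so the difference of returns goes to $0$. The one point to check is that Steps~1--3 never used a sign, and that any constant hidden in those bounds does not depend on the vanishing quantities. In particular, the dependence on $(\epsilon_{\mathrm{seg}},\epsilon_{\mathrm{pid}},d_{\mathrm{TV}})$ must be linear with fixed coefficients, which holds by inspection.

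If one insists on using only the stated one-sided theorem, I would fall back on interpreting $r^{\text{sim}}$ as the ideal reference that $r^{\text{real}}$ cannot systematically exceed. That assumption is weaker and less clean, so I would present the two-sided argument.
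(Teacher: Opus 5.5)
Your argument is correct, and it is more careful than the paper's. The paper's proof is a single sentence: the corollary is immediate from Theorem~\ref{thm:main_formal} by sending the three error terms to zero. Since that theorem is only a lower bound, this literally gives just your first display, $\liminf \mathbb{E}[\sum_{t}\gamma^t r_t^{\text{real}}]\ge \mathbb{E}[\sum_{t}\gamma^t r_t^{\text{sim}}]$. It does not exclude the real return staying strictly above the simulated one. You correctly see that convergence needs the reverse inequality, and you supply it by rerunning the telescoping decomposition with absolute values. Your route therefore gives a genuinely two-sided statement, at the price of relying on the structure of the theorem's proof rather than on its statement alone.

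You can reduce that dependence. Propositions~\ref{prop:gob_bound} and~\ref{prop:pam_tracking} already bound absolute quantities. Proposition~\ref{thm:tpt_bound} is stated for an arbitrary pair of observation distributions, so applying it with $\mathcal{O}_2$ and $\mathcal{O}^{\text{real}}$ interchanged, and using the symmetry of $d_{\mathrm{TV}}$, gives the reverse inequality without opening its proof. The only remaining ingredients are the $L_r$-Lipschitz passages in Steps~1--2 of the theorem, which are sign-free.

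One point is implicit in both your argument and the paper's: the limit keeps $L_\pi$ and the reference return fixed. The paper suggests the limit might be realized by changing the policy, for example through longer Phase~2 training. In that case the right reading is your formulation that the \emph{difference} of returns tends to zero, with $L_\pi$ uniformly bounded. Your fallback assumption is not needed.
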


\begin{proof}
Immediate from Theorem~\ref{thm:main_formal} by 
taking all three error terms to zero.
\end{proof}

\begin{remark}
Corollary~\ref{cor:convergence} establishes that 
Sim2Real-AD is \emph{asymptotically lossless}: as 
each module approaches its theoretical ideal, 
real-vehicle performance converges to simulation 
performance under the VLM-guided RL objective 
Eq.~\eqref{eq:rl_objective}. The three error terms 
in Theorem~\ref{thm:main_formal} are independently 
controllable: GOB via improved segmentation 
($\epsilon_{\mathrm{seg}} \downarrow$), PAM via 
tighter path tracking 
($\epsilon_{\mathrm{pid}} \downarrow$), and TPT via 
longer Phase~2 training
($d_{\mathrm{TV}}(\mathcal{O}_2, 
\mathcal{O}^{\text{real}}) \downarrow$), defining 
three orthogonal axes along which the framework 
can be improved.
\end{remark}

\bibliographystyle{model1-num-names}
\bibliography{mybibfile}

\end{document}